\documentclass[11pt]{article}
\usepackage{enumerate}
\usepackage{url}
\usepackage{pdflscape} 
\usepackage{rotating} 
\usepackage{booktabs}
\usepackage{threeparttable} 

\usepackage{geometry}
\geometry{
 left=3cm,
 right=3cm,
 top=3cm,
 bottom=3cm,
}

\RequirePackage{amsthm,amsmath,amsfonts,amssymb}
\RequirePackage[authoryear,sort]{natbib} 
\usepackage{xcolor}
\usepackage{xr-hyper}
\usepackage[pdftex,bookmarks,colorlinks,breaklinks]{hyperref} 

\definecolor{dullmagenta}{rgb}{0.4,0,0.4} 
\definecolor{darkblue}{rgb}{0,0,0.4}
\definecolor{coquelicot}{rgb}{0.20, 0.12, 0.72}
\definecolor{navyblue}{rgb}{0,0,0.5}
\hypersetup{linkcolor=blue,citecolor=blue,filecolor=blue,urlcolor=blue}

\newcommand\independent{\protect\mathpalette{\protect\independenT}{\perp}}
\def\independenT#1#2{\mathrel{\rlap{$#1#2$}\mkern2mu{#1#2}}}

\usepackage{epstopdf}
\usepackage{tablefootnote}

\usepackage{multirow}
\usepackage{mathtools}
\usepackage{bbm}
\usepackage{graphicx} 
\usepackage{flafter} 
\usepackage{subcaption}
\usepackage{graphicx}

\usepackage{tikz}
\usetikzlibrary{positioning}
\usetikzlibrary{arrows.meta,arrows}

\usetikzlibrary{arrows, decorations.markings,shapes,arrows,fit}
\tikzset{box/.style={draw, minimum size=2em, text width=4.5em, text centered},
	bigbox/.style={draw, inner sep=20pt,label={[shift={(-3ex,3ex)}]south east:#1}}
}

\usepackage{bm} 

\usepackage{colortbl}
 \usepackage{arydshln}
 \setlength\dashlinedash{0.4pt}
\setlength\dashlinegap{1.0pt}
\setlength\arrayrulewidth{0.5pt}
\usepackage{booktabs}

\usepackage{algorithm}
\usepackage{algpseudocode}

\definecolor{coquelicot}{rgb}{0.90, 0.42, 0.72}
\definecolor{burntorange}{rgb}{0.8, 0.33, 0.0}


\definecolor{burntblue}{RGB}{0, 114, 206}

\def\bx{\mathbf{x}}
\def\bX{\mathbf{X}}

\def\bZ{\mathbf{Z}}

\def\bt{\mathbf{t}}

\def\balpha{\boldsymbol{\alpha}}
\def\bbeta{\boldsymbol{\beta}}

\def\etahat{\widehat{\eta}}

\def\ba{\boldsymbol{a}}

\def\bbetahat{\widehat{\bbeta}}

\def\Yhat{\widehat{Y}}

\def\fhat{\widehat{f}}
\def\hhat{\widehat{h}}

\def\independenT#1#2{\mathrel{\rlap{$#1#2$}\mkern4mu{#1#2}}}

\def\Var{\mbox{Var}}


\def\bT{\mathbf{T}}

\def\bS{\mathbf{S}}
\def\bs{\mathbf{s}}

\def\bSbar{\overline{\bS}}
\def\bsbar{\overline{\bs}}

\def\S{\mathbb{S}}


\def\R{\mathbb{R}}

\def\S{\mathbb{S}}

\def\bt{\mathbf{t}}

\def \hs2{\hspace{2mm}}

\numberwithin{table}{section}
\numberwithin{equation}{section}

\definecolor{jcolor}{RGB}{041,122,000}
\definecolor{darkred}{RGB}{100,000,000}
\definecolor{purple}{RGB}{200,000,200}

\def\boxit#1{\vbox{\hrule\hbox{\vrule\kern6pt  \vbox{\kern6pt#1\kern6pt}\kern6pt\vrule}\hrule}}




\def\muhat{\widehat{\mu}}

\def\muhat{\widehat{\mu}}

\def\muhat{\widehat{\mu}}

\def\bT{\mathbf{T}}
\def\bS{\mathbf{S}}


%

%

%

%

%

%

%

%

%
\def\pihat{\widehat{\pi}}
\def\rhohat{\widehat{\rho}}

\def\nuhat{\widehat{\nu}}

\def\bT{\mathbf{T}}

\def\sigmahat{\widehat{\sigma}}


\def\thetahat{\widehat{\theta}}

\theoremstyle{plain}
\newtheorem{theorem}{Theorem}[section]

\newtheorem{lemma}[theorem]{Lemma}

\theoremstyle{remark}
\newtheorem{assumption}{Assumption}
\newtheorem{remark}{Remark}
\newtheorem{example}{Example}

\usepackage{xr}

\begin{document}

\date{}


\title{\bf Causal inference through multi-stage learning and doubly robust deep neural networks}

\author{Yuqian Zhang\thanks{Institute of Statistics and Big Data, Renmin University of China} \and Jelena Bradic\thanks{Department of Mathematics and Halicioglu Data Science Institute, University of California, San Diego, E-mail: \href{mailto:jbradic@ucsd.edu}{jbradic@ucsd.edu} }}

\maketitle

\begin{abstract}
Deep neural networks (DNNs) have demonstrated remarkable empirical performance in large-scale supervised learning problems, particularly in scenarios where both the sample size $n$ and the dimension of covariates $p$ are large. This study delves into the application of DNNs across a wide spectrum of intricate causal inference tasks, where direct estimation falls short and necessitates multi-stage learning. Examples include estimating the conditional average treatment effect and dynamic treatment effect. In this framework, DNNs are constructed sequentially, with subsequent stages building upon preceding ones. To mitigate the impact of estimation errors from early stages on subsequent ones, we integrate DNNs in a doubly robust manner. In contrast to previous research, our study offers theoretical assurances regarding the effectiveness of DNNs in settings where the dimensionality $p$ expands with the sample size. These findings are significant independently and extend to degenerate single-stage learning problems.
\end{abstract}

\section{Introduction}\label{sec:intro}

In numerous biomedical, economic, and political studies, we grapple with the challenges posed by modern large-scale data. On one hand, the substantial sample size allows us to surpass the limitations of traditional linear or parametric models, facilitating the derivation of more accurate and robust conclusions through non-parametric methods. However, on the other hand, modern big data often harbors copious amounts of redundant information, exacerbating what is commonly referred to as the ``curse of dimensionality.'' This phenomenon frequently renders traditional non-parametric statistical methods impractical. Navigating the task of deriving reasonable causal relationships from this extensive dataset to inform real-life decision-making is a prevalent yet formidable challenge.

Neural networks are arguably the most popular machine learning methods in large-scale industry applications these days. Unlike traditional non-parametric regression methods, such as kernel regression and smoothing splines, deep neural networks (DNNs) paired with rectified linear units (ReLU) activation functions \citep{nair2010rectified} and stochastic optimization techniques \citep{kingma2014adam} deliver outstanding empirical performance in large-scale complex estimation problems where both the sample size and the covariates' dimension are relatively large.

While the use of neural networks in simple regression problems already comes with certain theoretical guarantees, in many scenarios, we are no longer merely satisfied with predicting the future based on existing data. Instead, we aim to understand the causal relationships between variables to make informed decisions. For causal inference problems, one of the main differences from traditional regression and prediction problems lies in the potential outcome framework, where we can only observe the potential outcome corresponding to the treatment that the individual has been assigned to. In other words, we essentially face a missing data problem. As a result, the parameters or functions we are interested in cannot be directly estimated by performing regression methods on observable samples, as some of the related variables are missing.

Double machine learning (DML) \citep{chernozhukov2017double}, also known as doubly robust or augmented inverse probability weighting, provides a potent framework for such problems. When our object of interest is a finite-dimensional parameter $\theta$, we typically seek a representation $\theta=E\{\psi(\bZ;\eta^0)\}$, where $\bZ$ denotes the observable variables, $\eta^0$ represents a set of nuisance functions, and $\psi$ is a (non-centered) doubly robust score function satisfying the Neyman orthogonality condition
\begin{equation}\label{def:NO}
\partial_\eta E\{\psi(\bZ;\eta^0)\}(\eta-\eta^0)=0,
\end{equation}
where $\partial_\eta$ denotes the Gateaux derivative operator with respect to $\eta$. To estimate $\theta$, it suffices to estimate the nuisance functions $\eta^0$ and take the empirical average of the score functions after plugging in the nuisance estimates. In contrast to other methods, including inverse probability weighting (IPW) and G-computation, DML only requires a subset of the nuisance functions to be correctly specified for achieving consistent estimates of the final parameters of interest. Additionally, when all nuisance functions can be consistently estimated, DML typically leads to faster convergence rates. Examples of such finite-dimensional parameters include the regression coefficient in a partially linear regression model, the average treatment effect (ATE), the average treatment effect for the treated (ATTE), and the local average treatment effects (LATE); refer to \cite{chernozhukov2017double} for more details.

In more intricate causal inference scenarios, our target of estimation is a function $\theta(\bx)$ defined as the conditional expectation of unobservable variables, such as the conditional average treatment effect (CATE) in Example \ref{ex:CATE} below. When aiming to estimate such a conditional mean function, we look for representations of the form
\begin{equation}\label{rep:CE}
\theta(\bx)=E\{\psi(\bZ;\eta^0)\mid\bX=\bx\},
\end{equation}
where $\eta^0$ represents nuisance functions and $\psi$ is a score function. Given that the function $\theta(\bx)$ resides in a significantly more complex space, estimating such a function presents greater challenges than estimating a one-dimensional parameter, such as the ATE. Rather than simply taking the empirical average over the estimated score functions $\psi(\bZ_i;\etahat)$, specific regression methods are required to estimate the conditional mean function, utilizing the first-stage estimates $\etahat$. In this study, we focus specifically on the utilization of DNNs. Estimating $\theta(\bx)$ necessitates the construction of two DNNs in a sequential manner: one for related nuisance functions and another for the final target of interest, with the second DNN being constructed based on the first one. We refer to such a problem as a \emph{two-stage learning} problem.

While there are typically multiple choices for the score function $\psi$ that fulfill the representation \eqref{rep:CE}, we seek a score function designed to satisfy the following \emph{generalized} Neyman orthogonality condition: 
\begin{equation}\label{def:CNO}
\partial_\eta E\{\psi(\bZ;\eta^0)\mid\bX=\bx\}(\eta-\eta^0)=0,\quad\forall\bx\in\mathcal X,
\end{equation}
where $\mathcal{X}$ is the support of $\bX$. Note that the usual Neyman orthogonality condition \eqref{def:NO} can be viewed as a special case of the generalized version \eqref{def:CNO} when $\psi(\bZ;\eta)\independent\bX$ and $\theta(\bx)$ degenerates to a constant function. 

Similar to the usual Neyman orthogonality condition \eqref{def:NO}, the generalized version \eqref{def:CNO} is beneficial in that it leads to faster convergence rates and is suitable to use when the target parameter is infinite dimensional. Denote $Y^\#=\psi(\bZ;\eta^0)$ and $\Yhat=\psi(\bZ;\etahat)$ and as the true score and its plug-in estimate. As illustrated in Lemmas \ref{lemma:imput-err-CATE} and \ref{lemma:imput-err} below, the first-stage estimation error impacts the second learning stage through the deviation $\Yhat-Y^\#=\Delta_1+\Delta_2$, where $\Delta_1$ and $\Delta_2$ depend linearly and quadratically on the first-stage estimation error, respectively. When the condition \eqref{def:CNO} is fulfilled, $\Delta_1$ exhibits a conditional mean of zero and can be regarded as additional noise in the second learning stage, exerting no influence on the convergence rate for the second-stage estimation. This observation is underscored in our Theorems \ref{thm:imp-DNN}-\ref{thm:imp-MLP} for DNNs. Consequently, only the smaller second-order term $\Delta_2$ contributes to the second-stage error. It becomes evident that employing score functions that satisfy condition \eqref{def:CNO} is advantageous in mitigating the impact of estimation errors from preceding learning stages on subsequent ones.

In even more complex scenarios, a conditional mean function $\theta(\bx)$ that necessitates a two-stage learning estimation serves merely as a nuisance function in obtaining estimates for final parameters of interest. Examples include the estimation of the dynamic treatment effect (DTE) and the controlled direct effect, as illustrated in Examples \ref{ex:DTE} and \ref{ex:CDE} below. Once an estimate of $\theta(\bx)$ is obtained, an additional doubly robust estimation step is necessary to estimate the final parameter of interest. Under such circumstances, a \emph{three-stage learning} approach is required: first, construct DNN estimates for the nuisance functions that are directly estimable using the observed samples; then, construct DNN estimates for the remaining nuisance functions based on the results from the first stage; and lastly, obtain an estimate for the final parameter of interest utilizing all the nuisance estimates. This framework can be extended to \emph{multi-stage learning}, involving more than three learning stages. For the sake of readability and simplicity, we only present results for two- and three-stage learning.


It is worth noting that the issues of confounding and model misspecification are two main challenges that hinder accurate causal conclusions from observational studies. To ensure the ignorability conditions (e.g., Assumptions \ref{cond:ign} and \ref{cond:seq-ign}), researchers typically need to collect a sufficient amount of covariates to ensure that most of the confounding variables are included, even though many of the collected covariates might be redundant. Consequently, to reduce the impact of unmeasured confounders, it is typically inevitable to face the challenge of high dimensionality. Traditional non-parametric regression methods, such as kernel regression and splines, often struggle with the ``curse of dimensionality,'' as mentioned above. Meanwhile, although regularized parametric regression methods have been developed and serve as natural choices in high dimensions, their usage may lead to significant bias when the parametric models deviate from the truth. As a result, it is crucial to develop methods that are suitable for high-dimensional covariates while still maintaining model robustness.

Neural networks stand out as a non-parametric approach that exhibits exceptional empirical performance when both the sample size $n$ and covariate dimension $p$ are relatively large. Initial research on neural networks focused mainly on \emph{shallow} structures with \emph{smooth} activation functions and dates back to the 1980s; see, e.g., \cite{cybenko1989approximation,hornik1991approximation,hornik1989multilayer}. However, until recently, there have been few advances in the statistical theory. \cite{bauer2019deep,kohler2022estimation} recently explored the non-parametric regression problem for smooth and sparse composite functions. When the dimension $p$ is fixed, their network exhibits an $L_2$ error rate bounded by $n^{-2\beta/(2\beta+q)}$ (accompanied by logarithmic terms), where $\beta$ signifies the smoothness level, and $q$ denotes the sparsity level. However, such theoretical findings did not extend to the high-dimensional case with a growing $p$. Additionally, due to the challenges of numerical optimization and suboptimal empirical performance, shallow networks with smooth activation functions have become marginalized in recent times.

Recent advancements in stochastic gradient descent have significantly advanced networks based on deep structures and non-smooth Rectified Linear Unit (ReLU) activation functions, achieving notable empirical success in large-scale real-world applications. \cite{yarotsky2017error} first established upper bounds for the approximation error of DNNs with ReLU activation functions. Subsequently, \cite{schmidt2020nonparametric} explored sparse ReLU DNNs, while \cite{kohler2021rate} extended this work to fully connected networks with very deep or wide structures. They focused on the composition structure as in \cite{bauer2019deep}, achieving an \(L_2\) error of approximately \(n^{-2\beta/(2\beta+q)}\) for a conditional mean function with sparsity level \(q\) and smoothness level \(\beta\). However, these results hold only if the dimension \(p\) remains fixed. As noted in \cite{schmidt2020rejoinder}, ``since the input dimension \(p\) in deep learning applications is typically extremely large, a possible future direction would be to analyze neural networks with high-dimensional \(p \to \infty\) and compare the rates to other nonparametric procedures.'' In this work, we provide an initial attempt considering a growing dimension with the sample size.

Further research, including \cite{jiao2023deepb, liu2021besov, schmidt2019deep, nakada2020adaptive, chen2022nonparametric}, has investigated another form of sparse structure, assuming the covariates \(\bX\) are supported on a low-dimensional manifold. This assumption is particularly suitable for data types where the information of \(\bX\) can be represented in a low-dimensional space, such as image and voice data. For the causal effect estimation problem addressed in this paper, sparsity conditions on the conditional distributions of the type \(P_{Y\mid\bX}\) seem more appropriate and have been extensively studied in the existing literature under parametric models; see, e.g., \cite{farrell2015robust, tan2020model, avagyan2021high, smucler2019unifying, bradic2024high, zhang2021dynamic, athey2018approximate}. Therefore, we adhere to sparsity structures on \(P_{Y\mid\bX}\) and provide an initial attempt that allows for a growing dimension.



In the realm of causal setups, \cite{farrell2021deep} also considered a fixed dimension $p$ and investigated the DNNs' application to ATE estimation problems. Given that all the nuisance functions involved can be directly identified through observable variables, employing DNNs with the observed data alone is sufficient. In our work, we extend theoretical guarantees for the application of DNNs to address more intricate causal problems, such as the estimation of CATE and DTE. These problems involve constructing nested DNNs in a doubly robust manner -- the DNNs constructed at subsequent stages depends on previous ones, along with the usage of doubly robust score functions satisfying the condition \eqref{def:CNO}. We conduct a comprehensive analysis for the \emph{doubly robust nested DNNs}. Importantly, unlike the existing works, our analysis accommodates a growing number of the covariate dimension $p$.

We commence by presenting general theories for doubly robust nested DNNs; refer to Theorems \ref{thm:imp-DNN}-\ref{thm:imp-MLP}. Even in the degenerate non-nested non-parametric regression problem where single-stage learning is sufficient, to the best of our knowledge, these are the first results for DNNs allowing the covariates' dimension to grow with the sample size while imposing a sparse structure on the conditional distribution $P_{Y\mid\bX}$ (instead of $P_{\bX}$). Additionally, unlike \cite{schmidt2020nonparametric}, we do not require additional constraints to bound the weights of the DNNs, simplifying the optimization procedure. Moreover, Theorem \ref{thm:imp-MLP} considers the user-friendly multilayer perceptrons (i.e., fully connected networks), and our theory does not depend on an optimal structure among a collection of sparse networks. In contrast, finding the optimal structure is an NP-hard problem and is practically unattainable. To sum up, our considered networks are more computationally feasible. While \cite{farrell2021deep} also explored the performance of multilayer perceptrons under the degenerate non-nested scenario, our Theorem \ref{thm:imp-MLP} establishes a faster convergence rate when the true conditional mean function exhibits approximate sparsity and is applicable when the dimension $p$ grows; see Remark \ref{remark:rate-MLP} for further details.

\subsection{Organization}

In Section \ref{sec:msl}, we introduce general multi-stage learning problems and provide specific examples within the realm of causal inference. Section \ref{sec:imp-DNN} lays the foundation with a comprehensive theory for doubly robust nested DNNs, crucial for analyzing DNN-based multi-stage learning. Sections \ref{sec:CATE} and \ref{sec:DTE} explore the estimation of CATE and DTE using DNNs, exemplifying the two-stage and three-stage learning problems introduced in Section \ref{sec:msl}. Further discussion is provided in Section \ref{sec:dis}.


\subsection{Notation}

For any $a,b\in\R$, we denote $\lceil a\rceil$ be the smallest integer that is larger or equal to $a$, $a\land b:=\min(a,b)$ and $a\lor b:=\max(a,b)$. For sequences $a_n,b_n\geq0$, $a_n\ll b_n$ denotes $a_n=O(b_n)$; $a_n\gg b_n$ denotes $b_n=O(a_n)$; $a_n\asymp b_n$ denotes both $a_n=O(b_n)$ and $b_n=O(a_n)$ holds. For any set $A$, denote $|A|$ as the number of elements in $A$. For any sub-sample $\S'\subseteq\S$, define $E_{\S'}(\cdot)$ as the expectation taken over $\S'$ and $\Var_{\S'}(\cdot)$ as the corresponding variance. For any random vector $\bZ$, define $P_\bZ(\cdot)$ and $E_\bZ(\cdot)$ as the probability measure and the expectation taken over the joint distribution of $\bZ$, respectively. For any random variable $X$, define $\|X\|_\infty:=\inf\{\alpha>0:P(|X|>\alpha)=0\}$ as the essential supremum and $\|X\|_{\infty,P_\bZ}:=\inf\{\alpha>0:P_\bZ(|X|>\alpha)=0\}$ as the essential supremum defined through the probability measure $P_\bZ$. 

\section{Multi-stage learning}\label{sec:msl}

In traditional point estimation and supervised learning problems, the parameter of interest can be directly identified as the (conditional) expectation of observable variables. This includes, for instance, the population mean \(\theta = E(Y)\) with observable \(Y\) and the conditional expectation \(\theta(\bx) = E(Y \mid \bX = \bx)\) with observable \((\bX, Y)\). After collecting identical and independent (i.i.d.) samples, it suffices to take the empirical average or perform certain regression methods to estimate the parameter of interest. We refer to such problems as \emph{single-stage learning}. 

In this work, we focus on more complex \emph{multi-stage learning} problems where additional learning stages are required to identify and estimate the parameter of interest.

\subsection{Two-stage learning}

In complex scenarios, such as causal setups, some of the underlying variables suffer from missingness. Consequently, the parameter of interest cannot be directly identified through the observable variables within a single step. We illustrate the average treatment effect (ATE) estimation problem below, which is a degenerate special case of the two-stage learning problem considered in this work.

\begin{example}[Average treatment effect]\label{ex:ATE}
Let $\bZ=(\bS,T,Y)$ represent the observable variables, where $\bS\in\R^p$ denotes the covariates, $T\in\{0,1\}$ is a binary treatment indicator, and $Y\in\R$ is the observed outcome. Consider the potential outcome framework and suppose the existence of potential outcomes $Y(1),Y(0)\in\R$. For each individual, only one of the potential outcomes is observed with $Y=Y(T)$; the counterfactual one is missing. The average treatment effect (ATE) is defined as $\theta_{\mbox{\tiny ATE}}:=E\{Y(1)-Y(0)\}$. Since the difference $Y(1)-Y(0)$ is unobservable, we cannot directly estimate the ATE through the empirical average of the sample potential outcomes' differences. Instead, we need to represent the ATE parameter in a two-stage fashion. Under the ignorability (also known as no unmeasured confounding) condition $\{Y(1),Y(0)\}\independent T\mid\bS$, the ATE can be identified as:
\begin{equation}\label{rep:ATE}
\theta_{\mbox{\tiny ATE}}=E\left\{\mu^0(1,\bS)+\frac{T\{Y-\mu^0(1,\bS)\}}{\pi^0(\bS)}-\mu^0(0,\bS)-\frac{(1-T)\{Y-\mu^0(0,\bS)\}}{1-\pi^0(\bS)}\right\},
\end{equation}
where $\pi^0(\bs):=P(T=1\mid\bS=\bs)$ and $\mu^0(t,\bs)=E(Y\mid\bS=\bs,T=t)$ for each $t\in\{0,1\}$.
\end{example}

Example \ref{ex:ATE} illustrates a scenario where the final parameter of interest $\theta$ can be identified through a representation $\theta=E\{\psi(\bZ;\eta^0)\}$ and all the nuisance functions $\eta^0=(\pi^0,\mu^0)$ can be identified as the conditional expectations through the observable variables. Such problems have been extensively studied in the existing literature; for recent advances in high-dimensional or non-parametric problems, refer to \cite{chernozhukov2017double, farrell2021deep, farrell2015robust, tan2020model, smucler2019unifying, bradic2019sparsity}. The ATE estimation problem in Example \ref{ex:ATE} is relatively simple, as the final parameter of interest is finite-dimensional, and the related nuisance functions are easy to estimate. In this work, we further consider more complex causal inference problems and study the applications of neural networks under more complicated situations. 

\begin{example}[Conditional average treatment effect]\label{ex:CATE}
Consider the same setup as in Example \ref{ex:ATE}. The conditional average treatment effect (CATE) is defined as $\theta_{\mbox{\tiny CATE}}(\bs):=E\{Y(1)-Y(0)\mid\bS=\bs\}$. Since $Y(1)-Y(0)$ is unobservable, we cannot directly employ regression methods based on the observable variables. Under the same conditions as in Example \ref{ex:ATE}, one way to identify the CATE is through the following doubly robust representation:
\begin{align}
\theta_{\mbox{\tiny CATE}}(\bs)&=E(Y^\#\mid\bS=\bs),\;\;\mbox{where}\nonumber\\
Y^\#&=\mu^0(1,\bS)+\frac{T\{Y-\mu^0(1,\bS)\}}{\pi^0(\bS)}-\mu^0(0,\bS)-\frac{(1-T)\{Y-\mu^0(0,\bS)\}}{1-\pi^0(\bS)}.\label{rep:CATE}
\end{align}
\end{example}

In Example \ref{ex:CATE}, our ultimate object of interest is a function that can be represented as $\theta(\bx) = E\{\psi_2(\bZ; \eta^0) \mid \bX = \bx\}$, where $\bX$ is a sub-vector of $\bZ$, $\psi_2$ is a given score function, and $\eta^0 = (\eta_1^0, \dots, \eta_w^0)$ denote the nuisance functions, with $w$ being a positive integer. Furthermore, the nuisance functions can be identified through the form $\eta_j(\bx^j) = E\{\psi_{1,j}(\bZ) \mid \bX^j = \bx^j\}$ for each $j \leq w$, with $\bX^j$ being a sub-vector of $\bZ$ and $\psi_{1,j}$ being a given score function. In general, to estimate any function $\theta(\bx)$ that can be identified through the above two-stage representations, it suffices to perform the following two-stage learning based on i.i.d. samples $(\bZ_i)_{i=1}^N \sim P_\bZ$: 
\begin{itemize}
\item[1.] Regressing $\psi_{1,j}(\bZ_i)$ on $\bX_i^j$ for each $j \leq w$ and obtain $\etahat = (\etahat_1, \dots, \etahat_w)$.
\item[2.] Regressing $\psi_{2}(\bZ_i, \etahat)$ on $\bX_i$ and obtain $\thetahat(\bx)$. 
\end{itemize}
One can also perform sample splitting (or cross-fitting) techniques to reduce the bias resulting from the nuisance estimation, as seen in \cite{chernozhukov2017double}.

Although both the ATE and CATE estimation problems introduced in Examples \ref{ex:ATE} and \ref{ex:CATE} above lie within the scenario of two-stage learning, estimating the CATE is harder than the ATE. This is because the CATE is an infinite-dimensional parameter (unless assuming certain parametric models) and hence lies within a much more complex space than the one-dimensional ATE. On the other hand, the ATE estimation problem is a degenerate situation of the two-stage learning, as the second learning stage degenerates to a constant function estimation problem and can be easily estimated through taking the empirical average.

While designing the first-stage score functions $\psi_{1,j}$ is natural, as the nuisance functions can be directly identified through the observable variables, the choice of the second-stage score function $\psi_2$ is usually non-unique. In this work, we specifically focus on the doubly robust score functions that satisfy the generalized Neyman orthogonality condition \eqref{def:CNO}. Such doubly robust score functions are helpful to obtain more accurate estimates for the final parameter of interest $\theta(\bx)$. For instance, although the CATE can be readily identified through the difference of conditional means, $\theta_{\mbox{\tiny CATE}}(\bs) = \mu^0(1,\bs) - \mu^0(0,\bs)$, relying on this straightforward identification may yield suboptimal results when the conditional mean functions $\mu^0(1,\bs)$ and $\mu^0(0,\bs)$ are significantly more complex than the CATE itself. The estimation of CATE based on the doubly robust representation \eqref{rep:CATE}, known as the DR-learner and initially proposed by \cite{van2006statistical}, yields superior convergence results, especially when the propensity score function $\pi^0(\bs)$ can be accurately estimated. 

Recently, \cite{kennedy2023towards} explored the use of linear smoothers for the CATE estimation problem, which is suitable only for classical low-dimensional settings. \cite{foster2023orthogonal} proposed a general framework allowing for flexible regression methods, including a brief introduction to results based on DNNs. However, they simply assumed that the CATE lies within the class of DNNs and provided a slower convergence rate along with less robust results; see detailed comparisons in Remark \ref{remark:CATE}. Notably, these existing works only addressed two-stage learning problems. In this work, we aim to offer a comprehensive analysis of employing DNNs in multi-stage learning problems. Section \ref{sec:CATE} will focus on DNN-based CATE estimation, serving as an illustration for two-stage learning problems, though the method's applicability extends to broader scenarios, including other two-stage learning problems and problems requiring three or more stages.


\subsection{Three-stage learning}

In the following, we further consider situations where estimating the ultimate objects of interest requires more than two stages of learning.

\begin{example}[Dynamic treatment effect with two exposures]\label{ex:DTE}
Consider a dynamic setup with two exposures of treatment assignments. Let $\bZ=(\bS_1,T_1,\bS_2,T_2,Y)$ be the observable variables, where $\bS_1\in\R^{d_1}$ and $\bS_2\in\R^{d_2}$ denote the covariates evaluated before the first and second exposures, $T_1,T_2\in\{0,1\}$ are binary treatment indicators assigned at the first and second exposures, and $Y\in\R$ is the observed outcome at the last stage. Let $\bT:=(T_1,T_2)$ and $Y(\bt)\in\R$ be the (unobservable) potential outcome for any $\bt=(t_1,t_2)\in\{0,1\}^2$. Assume the consistency $Y=Y(T_1,T_2)$ and the sequential ignorability $Y(\bt)\independent T_1\mid\bS_1$ and $Y(c)\independent T_2\mid(\bSbar_2,T_1=t_1)$, where $\bSbar_2:=(\bS_1^\top,\bS_2^\top)^\top$. For any $\ba,\ba'\in\{0,1\}^2$, the dynamic treatment effect (DTE) between paths $\ba$ and $\ba'$, is defined as $\theta_{\mbox{\tiny DTE}}:=E\{Y(\ba)-Y(\ba')\}$. In the following, we introduce the identification of $E\{Y(1,1)\}$, which can be generalized to the identification of $E\{Y(\bt)\}$ for any $\bt\in\{0,1\}^2$:
\begin{equation}\label{rep:DTE}
E\{Y(1,1)\}=E\left[\mu^0(\bS_1)+\frac{T_1\{\nu^0(\bSbar_2)-\mu^0(\bS_1)\}}{\pi^0(\bS_1)}+\frac{T_1T_2\{Y-\nu^0(\bSbar_2)\}}{\pi^0(\bS_1)\rho^0(\bSbar_2)}\right],
\end{equation}
where $\pi^0(\bs_1):=P(T_1=1\mid\bS_1=\bs_1)$, $\rho^0(\bsbar_2):=P(T_2=1\mid\bSbar_2=\bsbar_2,T_1=1)$, $\mu^0(\bs_1):=E\{Y(1,1)\mid\bS_1=\bs_1,T_1=1\}$, and $\nu^0(\bsbar_2):=E\{Y(1,1)\mid\bSbar_2=\bsbar_2,T_1=T_2=1\}$ are the nuisance functions. Note that three of the nuisance functions, $\pi^0$, $\rho^0$, and $\nu^0$ can be directly identified through observable variables. However, the remaining nuisance function $\mu^0$, representing the conditional potential outcome given all the information at and before the first treatment exposure, cannot be directly identified as $Y(1,1)$ is not observable within the group of $T_1=1$. To identify the function $\mu^0$, we can further consider a doubly robust representation:
\begin{equation}\label{rep:mu}
\mu^0(\bs_1)=E\left(Y^\#\mid\bS_1=\bs_1, T_1=1\right),\;\;\mbox{where}\;\;Y^\#=\nu^0(\bSbar_2)+\frac{T_2\{Y-\nu^0(\bSbar_2)\}}{\rho^0(\bSbar_2)}.
\end{equation}
\end{example}

Generally speaking, Example \ref{ex:DTE} illustrates a situation where the parameter of interest can be represented as $\theta = E\{\psi_3(\bZ,\eta_1^0,\eta_2^0)\}$, where $\psi_3$ is a score function, $\eta_1^0=(\eta_{1,1}^0,\dots,\eta_{1,w_1}^0)$ are the directly-identifiable nuisance functions, and $\eta_2^0=(\eta_{2,1}^0,\dots,\eta_{2,w_2}^0)$ are the remaining nuisance functions, with $w_1,w_2$ being positive integers. The nuisance functions $\eta_2^0$ can be further identified as $\eta_{2,j}^0(\bx^{2,j})=E\{\psi_{2,j}(\bZ,\eta_1^0)\mid\bX^{2,j}=\bx^{2,j}\}$ for each $j\leq w_2$, where $\psi_{2,j}$ is the score function for $\eta_{2,j}^0$ and $\bX^{2,j}$ is a sub-vector of $\bZ$. Lastly, $\eta_1^0$ can be directly identified as $\eta_{1,j}^0(\bx^{1,j})=E\{\psi_{1,j}(\bZ)\mid\bX^{1,j}=\bx^{1,j}\}$ for each $j\leq w_1$, where $\psi_{1,j}$ is the score function for $\eta_{1,j}^0$ and $\bX^{1,j}$ is a sub-vector of $\bZ$. Based on the above three-stage representations, we can estimate $\theta$ through a three-stage learning process:
\begin{itemize}
\item[1.] Regressing $\psi_{1,j}(\bZ_i)$ on $\bX_i^{1,j}$ for each $j \leq w_1$ and obtain $\etahat_1 = (\etahat_{1,1}, \dots, \etahat_{1,w_1})$.
\item[2.] Regressing $\psi_{2,j}(\bZ_i, \etahat_1)$ on $\bX_i^{2,j}$ for each $j\leq w_2$ and obtain $\etahat_2 = (\etahat_{2,1}, \dots, \etahat_{2,w_2})$. 
\item[3.] Taking the empirical average over $\psi_3(\bZ_i,\etahat_1^0,\etahat_2^0)$ and obtain $\thetahat$.
\end{itemize}

Although the ultimate object of interest considered in Example \ref{ex:DTE} is also a one-dimensional parameter, its identification is more complicated than the ATE in Example \ref{ex:ATE} -- as discussed above, the identification of related nuisance functions is no longer a trivial problem. As a result, an additional learning stage is required to estimate the DTE parameter. 

In Example \ref{ex:DTE}, we consider doubly robust score functions $\psi_{2,j}$ and $\psi_3$ satisfying the generalized Neyman orthogonality condition \eqref{def:CNO}. Although both $E\{Y(\bt)\}$ and $\mu^0$ can be identified through other representations, e.g., the G-formulas $E\{Y(\bt)\}=E\{\mu_c^0(\bS_1)\}$ and $\mu_c^0(\bs_1) = E\{\nu_c^0(\bSbar_2)\mid\bS_1=\bs_1,T_1=c_1\}$, the doubly robust representations \eqref{rep:DTE} and \eqref{rep:mu} enhance robustness. This strategy has been introduced by \cite{bradic2024high, diaz2023nonparametric, luedtke2017sequential, rotnitzky2017multiply}. Among these works, \cite{diaz2023nonparametric} provided results for the final DTE estimate, assuming certain conditions on the nuisance estimates without analyzing the nuisance estimation. However, it is worth noting that the nuisance estimation problem is arguably one of the most challenging aspects of the problem -- the nuisance functions are estimated in a sequential manner, and understanding how the estimation errors at previous stages affect the subsequent ones and the final DTE needs thorough study. \cite{luedtke2017sequential} considered Donsker class of nuisance estimates, primarily suitable for low-dimensional parametric models. \cite{rotnitzky2017multiply} explored linear operators, effective only in low dimensions with data-independent representations. \cite{bradic2024high} extended the study to include Lasso-type nuisance estimates in high dimensions but focused exclusively on parametric working models. Additionally, \cite{zhang2021dynamic} proposed a sequential model doubly robust approach that provides valid inference allowing for certain model misspecification in high dimension, but still requiring some of the parametric models to be correctly specified. Notably, all existing works either addressed low-dimensional situations or relied on parametric models, with none accommodating both high-dimensional and non-parametric models. In Section \ref{sec:DTE}, we will introduce a DNN-based DTE estimator and establish theoretical guarantees, only necessitating smoothness conditions while accommodating diverging dimensions.


Below, we further illustrate a causal mediation analysis problem, which is analogous to Example \ref{ex:DTE}, considering the mediator as the second-stage treatment variable.

\begin{example}[Controlled direct effect]\label{ex:CDE}
Consider the causal mediation analysis. Let $\bZ=(\bS_1,T,\bS_2,M,Y)$, where $\bS_1\in\R^{d_1}$ denotes the baseline covariates and $\bS_2\in\R^{d_2}$ represents the covariates evaluated subsequent to the treatment assignment but before the mediation. Denote $\bSbar_2:=(\bS_1^\top,\bS_2^\top)^\top$. Let $M\in\mathcal M$ be a discrete mediator with support $\mathcal M\subseteq\R$ and $Y\in\R$ be the observed outcome variable. Denote $M(t)$ and $Y(t,m)$ be the potential mediator and outcome, respectively, for any $t\in\{0,1\}$ and $m\in\mathcal M$. The observed mediator and outcomes are $M=M(T)$ and $Y=Y(T,M(T))$. Assume the ignorability conditions $Y(t,m)\independent T\mid\bS_1$ and $Y(t,m)\independent M\mid(\bSbar_2,T=t)$. Then, for any $m\in\mathcal M$, the controlled direct effect, $\theta_{\mbox{\tiny CDE}}(m):=E\{Y(1,m)-Y(0,m)\}:=\theta_{1,m}-\theta_{0,m}$, can be identified through the following representation: for each $t\in\{0,1\}$,
\begin{equation*}
\theta_{1,m}=E\left[\mu_{t,m}^0(\bS_1)+\frac{\mathbbm1_{T=t}\{\nu_{t,m}^0(\bSbar_2)-\mu_{t,m}^0(\bS_1)\}}{\pi_t^0(\bS_1)}+\frac{\mathbbm1_{T=t,M=m}\{Y-\nu_{t,m}^0(\bSbar_2)\}}{\pi_t^0(\bS_1)\rho_{t,m}^0(\bSbar_2)}\right],
\end{equation*}
where $\pi_t^0(\bs_1):=P(T=t\mid\bS_1=\bs_1)$, $\rho_{t,m}^0(\bsbar_2):=P(M=m\mid\bSbar_2=\bsbar_2,T=t)$, $\mu_{t,m}^0(\bs_1):=E\{Y(t,m)\mid\bS_1=\bs_1,T=t\}$, and $\nu_{t,m}^0(\bsbar_2):=E(Y\mid\bSbar_2=\bsbar_2,T=t,M=m)$. Similarly as in Example \ref{ex:DTE}, the nuisance function $\mu_{t,m}^0$ can be identified as
\begin{equation*}
\mu_{t,m}^0(\bs_1)=E\left[\nu_{t,m}^0(\bSbar_2)+\frac{\mathbbm1_{M=m}\{Y-\nu_{t,m}^0(\bSbar_2)\}}{\rho_{t,m}^0(\bSbar_2)}\mid\bS_1=\bs_1, T=t\right].
\end{equation*}
\end{example}

Examples \ref{ex:CATE} and \ref{ex:CDE} illustrate situations where three learning stages are required. In fact, the framework can be extended to more general situations with more than three learning stages. This includes, for instance, the estimation of dynamic treatment effects (DTE) with more than two time exposures; see, for example, Section 5 of \cite{bradic2024high}. Similar results can be extended under such general cases, though additional proofs, similar but cumbersome, would be required.

The multi-stage learning problems described above necessitate the sequential application of regression methods. In this paper, our particular focus is on employing state-of-the-art DNNs, known for their exceptional empirical performance. To rigorously examine the multi-stage learning problem, we begin by presenting results for the doubly robust nested DNNs, recognizing that DNNs constructed at subsequent learning stages depend on those constructed at preceding stages.

\section{Doubly robust nested DNNs}\label{sec:imp-DNN}

In this section, we introduce the construction of DNNs and present general theoretical results for DNNs based on nested structure. 

\subsection{Construction of DNNs}\label{sec:DNN}

We begin by introducing the construction of DNNs based on rectified linear unit (ReLU) activation functions. A neural network is a function $f(\bx)$ that maps $\R^p\mapsto\R$. The network is constructed with an input layer comprising $p$ input units, $L$ hidden layers, and an output unit. In each hidden layer $l\in\{1,\dots,L\}$, there are $H^{(l)}$ hidden units denoted as $z_1^{(l)},\dots,z_{H^{(l)}}^{(l)}$. The output unit is represented as $f(\bx)=z_1^{(L+1)}$ with $H^{(L+1)}=1$. Initially, we consider a general class of feedforward networks.

For any $1\leq l\leq L+1$ and $1\leq k\leq H^{(l)}$, a hidden or output unit $z_k^{(l)}$ is connected with certain units, possibly belonging to any of the preceding layers. We denote $\mathcal E_k^{(l)}\subseteq\{(l',k'):0\leq l'\leq l-1,1\leq k'\leq H^{(l')}\}$ as the collection of pairs $(l',k')$ such that $z_{k'}^{(l')}$ is connected with $z_k^{(l)}$. For simplicity, we denote the input units as $z_1^{(0)}=\bx_1,\dots,z_p^{(0)}=\bx_p$ with $H^{(0)}=p$. The hidden and output units are then constructed as follows:
\begin{align}\label{def:zkl}
z_k^{(l)}=\sigma^{(l)}\left(\sum_{(l',k')\in\mathcal E_k^{(l)}}w_{k,k'}^{(l,l')}z_{k'}^{(l')}+b_k^{(l)}\right),\quad\forall l\in\{1,\dots,L+1\},\;\;k\in\{1,\dots,H^{(l)}\},
\end{align}
where $w_{k,k'}^{(l,l')}$ and $b_k^{(l)}$ are the weight and intercept (bias) parameters. For each $l\leq L$, we utilize the ReLU activation function $\sigma^{(l)}(x)=x\lor0$; whereas, the output layer employs the identity activation function $\sigma^{(L+1)}(x)=x$. Note that we allow connected edges between units from non-neighboring layers.

\begin{figure}[h]
 \centering
 \begin{subfigure}[b]{0.45\textwidth}
 \includegraphics[width=\textwidth]{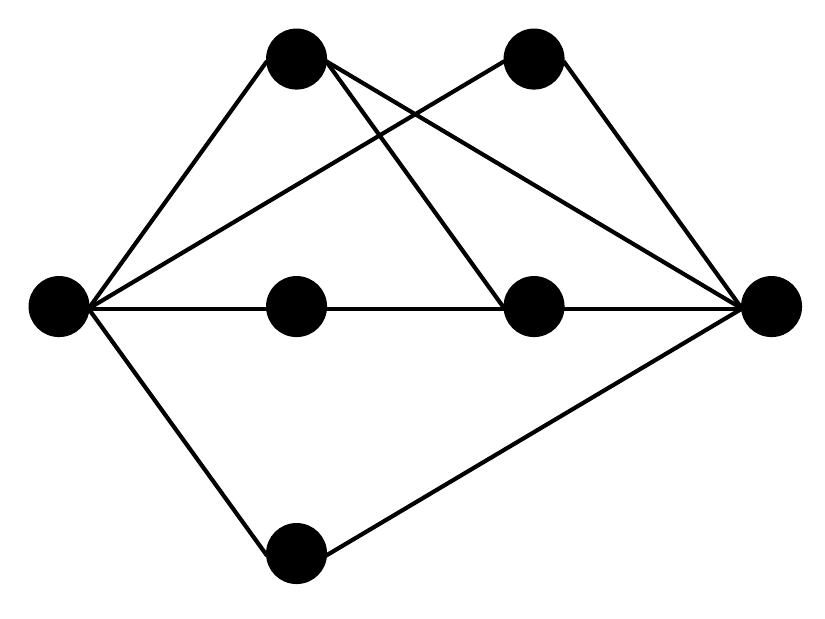}
 \caption{A network belonging to $\mathcal F_\mathrm{DNN}(2,6,10)$}
 \label{fig:network-DNN}
 \end{subfigure}
 ~
 \begin{subfigure}[b]{0.45\textwidth}
 \includegraphics[width=\textwidth]{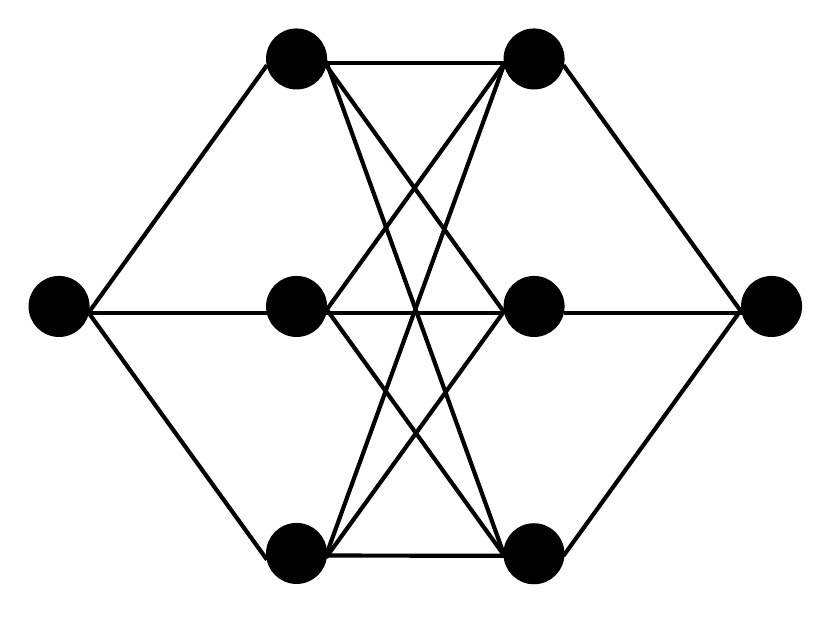}
 \caption{The multilayer perceptron $\mathcal F_{\mbox{\tiny MLP}}(2,3)$}
 \label{fig:network-MLP}
 \end{subfigure}
 \caption{Illustrations of general feedforward networks (left) and multilayer perceptrons (right)}
 \label{fig:network}
\end{figure}

Let $\mathcal F_\mathrm{DNN}=\mathcal F_\mathrm{DNN}(L,(H^{(l)})_{l\leq L+1},(\mathcal E_k^{(l)})_{l\leq L+1,k\leq H^{(l)}})$ be the collection of neural networks defined through \eqref{def:zkl} with a user-chosen network architecture. Additionally, we focus on two specific classes of network architectures. Denote $\mathcal F_\mathrm{DNN}(L,U,W)$ as the collection of feedforward neural networks with $L$ hidden layers, $U$ computation units, and $W$ weights. In other words, it is the union of $\mathcal F_\mathrm{DNN}(L,(H^{(l)})_{l\leq L+1},(\mathcal E_k^{(l)})_{l\leq L+1,k\leq H^{(l)}})$ satisfying $\sum_{l=1}^{L+1}H^{(l)}=U$ and $\sum_{l\leq L+1,k\leq H^{(l)}}|\mathcal E_k^{(l)}|=W$; see an example in Figure \ref{fig:network-DNN}. Additionally, we denote $\mathcal F_{\mbox{\tiny MLP}}(L,H)$ as multilayer perceptrons consisting of $L$ hidden layers with $H$ units in each hidden layer. In this architecture, the units are connected to all units from neighboring layers and remain unconnected with units from non-neighboring layers; see an example in Figure \ref{fig:network-MLP}.

\subsection{DNNs through nested doubly robust regression}\label{sec:imp-DNN-theory}

Let $\S^1:=(\bX_i,R_i,Y_i^\#)_{i=1}^n$ be independent and identically distributed (i.i.d.) samples, where $\bX_i\in\R^p$ is the covariate vector, $R_i\in\{0,1\}$ is a sub-group indicator, and $Y_i^\#\in\R$ denotes the outcome variable of interest. The indicator $R_i$ is introduced to allow the established results to be easily applied to problems involving regression within a specific subgroup. If the target population is the entire population, one can simply set $R_i \equiv 1$. 
Consider situations where $Y_i^\#$ is possibly non-observable and let $\Yhat_i=\hhat(\bX_i;\S^2)$ be an estimate of $Y_i^\#$, where $\S^2\independent\S^1$ is another set of random samples. Let $(\bX,R,Y^\#)$ be an independent copy of $\S^1$ and $\Yhat=\hhat(\bX;\S^2)$. Define the nested DNN estimate $\fhat$ as:
\begin{align}\label{def:fhat}
\fhat\in\arg\min_{f\in\mathcal F_\mathrm{DNN}:\|f\|_\infty\leq2M}n^{-1}\sum_{i=1}^nR_i\left\{\Yhat_i-f(\bX_i)\right\}^2.
\end{align}
This estimate is for $f^0(\bx):=E(Y^\#\mid\bX=\bx,R=1)$ based on the user-chosen architecture $\mathcal F_\mathrm{DNN}$. Here, $M$ is an arbitrarily large constant independent of the sample size $n$. The bound $\|f\|_\infty\leq2M$ enforces uniform boundedness, constituting the weakest form of constraint; see, for example, \cite{farrell2021deep, schmidt2020nonparametric, fan2020theoretical}. Define the approximation error $\epsilon_n$ as:
\begin{align}\label{def:eps}
\epsilon_n:=\inf_{f\in\mathcal F_\mathrm{DNN}:\|f\|_\infty\leq2M}\|f-f^0\|_\infty.
\end{align}
Now, we present the following theorem characterizing the convergence results of the nested DNN $\fhat$.

\begin{theorem}\label{thm:imp-DNN}
Let $\Yhat-Y^\#=\Delta_1+\Delta_2$ with some $\Delta_1=\Delta_1(\bZ;\S^2)$ and $\Delta_2=\Delta_2(\bZ;\S^2)$ satisfying
\begin{align}\label{cond:Deltas}
E_\bZ(\Delta_1\mid\bX,R=1)=0\;\;\text{almost surely}\;\;\text{and}\;\;E_\bZ(\Delta_2^2\mid R=1)=O_p(e_n^2).
\end{align}
Assume $\bX\subseteq[-1,1]^p$, $\|RY^\#\|_\infty\leq M$, $\|R\Delta_1\|_{\infty,P_\bZ}=O_p(1)$. Let $n\gg WL\log W$, where $L$ and $W$ denote the numbers of hidden layers and weights, respectively. Then, as $n\to\infty$,
\begin{align}\label{rate:f-gen}
E_\bX\left[R\{\fhat(\bX)-f^0(\bX)\}^2\right]=O_p\left(\frac{WL\log W\log n}{n}+e_n^2+\epsilon_n^2\right).
\end{align}
\end{theorem}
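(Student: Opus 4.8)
The plan is to run a localized oracle-inequality argument, conditioning throughout on the independent sample $\S^2$ so that $\Yhat_i=\hhat(\bX_i;\S^2)$ becomes a fixed measurable function of $\bX_i$ and, by \eqref{cond:Deltas}, $\Delta_{1,i}$ has conditional mean zero given $(\bX_i,R_i=1)$. Write $\varepsilon_i:=Y_i^\#-f^0(\bX_i)$, so that $E(\varepsilon_i\mid\bX_i,R_i=1)=0$ by the definition of $f^0$, and let $f^*$ be a near-minimizer of $\|f-f^0\|_\infty$ over the feasible class, so $\|f^*-f^0\|_\infty\le\epsilon_n$ up to negligible slack; note $|f^0|\le M$ on $\{R=1\}$ follows from $\|RY^\#\|_\infty\le M$, so $f^0$ is effectively feasible. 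Since $\fhat$ minimizes the empirical risk over $\{f:\|f\|_\infty\le 2M\}$ and $f^*$ is feasible, subtracting the common term $n^{-1}\sum_i R_i\{\Yhat_i-f^0(\bX_i)\}^2$ from the basic inequality gives, with $\ghat:=\fhat-f^0$ and $g^*:=f^*-f^0$,
\[
n^{-1}\sum_i R_i\,\ghat(\bX_i)^2 \le 2\,n^{-1}\sum_i R_i\{\Yhat_i-f^0(\bX_i)\}\{\ghat(\bX_i)-g^*(\bX_i)\} + n^{-1}\sum_i R_i\,g^*(\bX_i)^2,
\]
and the last term is at most $\epsilon_n^2$.

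Next I would decompose the cross term using $\Yhat_i-f^0(\bX_i)=\varepsilon_i+\Delta_{1,i}+\Delta_{2,i}$ and set $\delta:=\ghat-g^*=\fhat-f^*$, a difference of two feasible networks (hence $\|\delta\|_\infty\le 4M$). This produces a genuine-noise piece $n^{-1}\sum_i R_i\varepsilon_i\delta(\bX_i)$, a first-order piece $n^{-1}\sum_i R_i\Delta_{1,i}\delta(\bX_i)$, and a second-order piece $n^{-1}\sum_i R_i\Delta_{2,i}\delta(\bX_i)$. The first two carry conditional-mean-zero multipliers and are controlled by empirical-process arguments; the third is a bias term. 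For it, Cauchy--Schwarz gives $|n^{-1}\sum_i R_i\Delta_{2,i}\delta(\bX_i)|\le (n^{-1}\sum_i R_i\Delta_{2,i}^2)^{1/2}\|\delta\|_n$, where $\|\delta\|_n^2:=n^{-1}\sum_i R_i\delta(\bX_i)^2$; by Markov's inequality and $E_\bZ(\Delta_2^2\mid R=1)=O_p(e_n^2)$ the first factor is $O_p(e_n)$, and since $\|\delta\|_n\le\|\ghat\|_n+\epsilon_n$ a later Young's-inequality step will absorb the $\|\ghat\|_n$ part into the left-hand side, leaving an additive $O_p(e_n^2)$.

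The crux, and the main obstacle, is a uniform bound on the two multiplier empirical processes over the random difference class. The key leverage is that $\delta$ lies in the class of differences of networks in $\mathcal F_\mathrm{DNN}(L,U,W)$, whose pseudo-dimension is $O(WL\log W)$ by known bounds for piecewise-linear ReLU networks (e.g., Bartlett et al.) --- a bound \emph{independent of the input dimension $p$}, which is exactly what permits $p$ to grow with $n$. Using uniform boundedness ($|R\varepsilon|\le 2M$ from $\|RY^\#\|_\infty\le M$, and $\|R\Delta_1\|_{\infty,P_\bZ}=O_p(1)$), I would invoke a localized Bernstein/Talagrand maximal inequality together with the peeling device to obtain, with high probability and for every $\delta$ in the class,
\[
\Bigl| n^{-1}\sum_i R_i(\varepsilon_i+\Delta_{1,i})\,\delta(\bX_i)\Bigr| \lesssim \sqrt{\frac{WL\log W\,\log n}{n}}\;\|\delta\|_n + \frac{WL\log W\,\log n}{n},
\]
the $\log n$ arising from the sup-norm covering number of the class at resolution $1/n$. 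Substituting the empirical-process and $\Delta_2$ bounds into the basic inequality and applying Young's inequality to absorb every term linear in $\|\delta\|_n$ (using $\|\delta\|_n\le\|\ghat\|_n+\epsilon_n$) into $\tfrac12\|\ghat\|_n^2$ yields $\|\ghat\|_n^2=O_p(\tfrac{WL\log W\log n}{n}+e_n^2+\epsilon_n^2)$.

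Finally I would pass from the empirical norm $\|\ghat\|_n$ to the target $E_\bX[R\{\fhat(\bX)-f^0(\bX)\}^2]$ via a one-sided relative-deviation (uniform ratio) inequality over the same bounded class, again controlled by the $O(WL\log W)$ pseudo-dimension and valid under $n\gg WL\log W$; this shows the population and empirical squared norms are comparable up to constants and a lower-order additive term of the same order, delivering \eqref{rate:f-gen}. The genuinely delicate step is the localization in the third paragraph: one must choose the critical radius so that the complexity and noise contributions balance at the stated rate while keeping all constants free of $p$, and must verify that conditioning on $\S^2$ legitimately turns $\Delta_1$ into conditional-mean-zero, uniformly bounded multiplier noise so that the multiplier maximal inequality applies.
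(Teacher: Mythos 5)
Your proposal is correct and follows essentially the same route as the paper's proof: the ERM basic inequality against the sup-norm-best feasible network, the orthogonality-driven split of $\Yhat_i-f^0(\bX_i)$ into mean-zero multiplier noise ($\varepsilon_i+\Delta_{1,i}$) plus a $\Delta_2$ term handled by Cauchy--Schwarz and absorption, localized maximal inequalities over the network difference class with the $p$-free pseudo-dimension bound $O(WL\log W)$, peeling to the critical radius, and a final empirical-to-population norm comparison. The only structural difference is cosmetic: you localize in the empirical norm first and convert to the population norm at the end, whereas the paper (Lemmas S.1--S.5 and the iterative halving argument) runs the localization directly on the population norm and handles the $\S^2$-randomness via high-probability events rather than conditioning.
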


\begin{remark}[Nested doubly robust regression and generalized Neyman orthogonality]\label{remark:DR-DNN}
In the following, we explain how Theorem \ref{thm:imp-DNN} applies to doubly robust regression problems based on representations of the forms \eqref{rep:CE}-\eqref{def:CNO}. Let $R\equiv1$, $Y^\#=\psi(\bZ;\eta^0)$, and $\Yhat=\psi(\bZ;\etahat)$, where the nuisance estimates $\etahat=\etahat(\S^2)$ are obtained from the training samples $\S^2$. Using Taylor's theorem, we can decompose the first-stage error as $\Yhat-Y^\#=\psi(\bZ;\etahat)-\psi(\bZ;\eta^0)=\Delta_1+\Delta_2$. Here, $\Delta_1=\partial_\eta\psi(\bZ;\eta^0)(\etahat-\eta^0)$ represents the linear approximation of the first-stage error, and $\Delta_2$ is a remainder term. When the score function $\psi$ satisfies generalized Neyman orthogonality \eqref{def:CNO}, the linear approximation has a zero conditional mean, i.e., $E_\bZ(\Delta_1\mid\bX)=0$, where the expectation is taken only over a new observation $\bZ$ independent of $\S^2$.

The nested DNN \eqref{def:fhat} yields a convergence rate \eqref{rate:f-gen} while estimating the conditional mean function $f^0(\bx)=\theta(\bx)=E\{\psi(\bZ;\eta^0)\mid\bX=\bx\}$. It is essential to note that the convergence rate \eqref{rate:f-gen} does not depend on the linear approximation part of the first-stage error, $\Delta_1$; it only involves the remainder term $\Delta_2$ (characterized by $e_n^2$), which is potentially small.
\end{remark}


Theorem \ref{thm:imp-DNN} provides general estimation results for nested DNNs, where the network is based on an arbitrary user-chosen architecture $\mathcal F_\mathrm{DNN}$. The final estimation rate \eqref{rate:f-gen} involves the approximation error $\epsilon_n$ based on the chosen architecture $\mathcal F_\mathrm{DNN}$, which necessitates further study and control. For smooth functions, Theorem 1 of \cite{yarotsky2017error} offers an upper bound for the approximation error over the class $\mathcal F_\mathrm{DNN}(L,U,W)$ when $L, U, W$ are sufficiently large. For any $\beta, q \geq 0$, we define the $q$-dimensional H\"{o}lder ball with smoothness $\beta$ as:
\begin{align}\label{def:Holder}
\mathcal W^{\beta,\infty}([-1,1]^q):=\left\{f:\max_{|\balpha|\leq\beta}|D^{\balpha}f(\bx)|\leq1,\forall\bx\in[-1,1]^q\right\},
\end{align}
where $\balpha=(\alpha_1,\dots,\alpha_q)$, $|\balpha|=\sum_{j=1}^q\alpha_j$, and $D^{\balpha}f$ is the weak derivative.

Consider the following ``optimal'' network:
\begin{align}\label{def:fhat-opt}
\fhat_\mathrm{opt}\in\arg\min_{f\in\mathcal F_\mathrm{DNN}(L,U,W):\|f\|_\infty\leq2M}n^{-1}\sum_{i=1}^n\left\{\Yhat_i-f(\bX_i)\right\}^2,
\end{align}
which is the nested DNN estimate based on the optimal architecture over a collection of architectures with a given triple $(L, U, W)$. The following theorem characterizes the convergence rate of $\fhat_\mathrm{opt}$ with optimal choices of parameters $(L, U, W)$.

\begin{theorem}\label{thm:imp-opt}
Let the assumptions in Theorem \ref{thm:imp-DNN} hold. Assume that $f^0(\bx)$ can be approximated by a sparse smooth function $f_Q^0(\bx_Q):\R^q\mapsto\R$ that
\begin{align}\label{bound:r_n}
\|f^0(\bX)-f_Q^0(\bX_Q)\|_\infty\leq r_n,
\end{align}
where $\bx_Q:=(\bx_j)_{j\in Q}$, $\bX_Q:=(\bX_j)_{j\in Q}$, $|Q|=q\geq1$ is a constant independent of $n$, $r_N=o(1)$ is a non-negative sequence and $f_Q^0\in\mathcal W^{\beta,\infty}([-1,1]^{q})$ with some constant smoothness parameter $\beta\geq0$. Set $\bar\epsilon_n:=n^{-\beta/(2\beta+q)}\log^{4\beta/(2\beta+q)}n$, let $L$, $U$, and $W$ satisfy $L\geq c\{\log(1/\bar\epsilon_n)+1\}$, $U,W\geq c\bar\epsilon_n^{-q/\beta}\{\log(1/\bar\epsilon_n)+1\}$ with $L\asymp\log n$ and $U\asymp W\asymp \bar\epsilon_n^{-q/\beta}\log(1/\bar\epsilon_n)\asymp n^{q/(2\beta+q)}\log^{1-4q/(2\beta+q)}n$, where $c>0$ is a constant. Then, as $n\to\infty$,
\begin{align}\label{rate:f-opt}
E\left[R\{\fhat_\mathrm{opt}(\bX)-f^0(\bX)\}^2\right]=O_p\left(n^{-\frac{2\beta}{2\beta+q}}\log^\frac{8\beta}{2\beta+q}n+r_n^2+e_n^2\right).
\end{align}
\end{theorem}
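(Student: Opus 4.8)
The plan is to obtain the rate by specializing the general bound \eqref{rate:f-gen} of Theorem \ref{thm:imp-DNN} to the architecture family $\mathcal F_\mathrm{DNN}(L,U,W)$ and then choosing $(L,U,W)$ to balance the stochastic term against the approximation error $\epsilon_n$. Since the hypotheses of Theorem \ref{thm:imp-DNN} are assumed to hold, applying it to $\fhat_\mathrm{opt}$ immediately yields
\[
E[R\{\fhat_\mathrm{opt}(\bX)-f^0(\bX)\}^2]=O_p\left(\frac{WL\log W\log n}{n}+e_n^2+\epsilon_n^2\right),
\]
so the entire argument reduces to (i) producing a sharp upper bound on $\epsilon_n=\inf_{f\in\mathcal F_\mathrm{DNN}(L,U,W),\,\|f\|_\infty\leq 2M}\|f-f^0\|_\infty$ for the prescribed budget, and (ii) verifying that the stochastic term collapses to $\bar\epsilon_n^2$ under those choices. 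A point to settle first is that Theorem \ref{thm:imp-DNN} is genuinely applicable with $\mathcal F_\mathrm{DNN}(L,U,W)$ — the union of all sparse architectures with the given budget — in the role of the user-chosen class; this is legitimate because the complexity governing the stochastic term enters only through the total weight count $W$ and depth $L$, not through the particular connectivity pattern, so the growing dimension $p$ affects the bound only through $W$.

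For step (i) I would split the approximation by the triangle inequality, $\|f-f^0\|_\infty\leq\|f-f_Q^0\|_\infty+\|f_Q^0-f^0\|_\infty$, and invoke the hypothesis \eqref{bound:r_n} to control the second piece by $r_n$. The first piece reduces the problem to approximating the $q$-dimensional H\"older function $f_Q^0\in\mathcal W^{\beta,\infty}([-1,1]^q)$; a ReLU network over $\R^p$ can realize such an approximant by activating only the $q$ coordinates in $Q$ and zeroing out the remaining connections, so the \emph{effective} input dimension is $q$ rather than $p$ — this is precisely where the sparse structure removes the dependence on $p$ and produces a $q$-driven rate. I would then invoke Theorem 1 of \cite{yarotsky2017error}: for target accuracy $\bar\epsilon_n$ there is a network of depth $L\asymp\log(1/\bar\epsilon_n)$ and weight count $\asymp\bar\epsilon_n^{-q/\beta}\log(1/\bar\epsilon_n)$ attaining $\|f-f_Q^0\|_\infty\leq\bar\epsilon_n$, which is exactly the budget $(L,U,W)$ stated in the theorem. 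Because $\|f^0\|_\infty\leq M$ (from $\|RY^\#\|_\infty\leq M$), the approximant stays within $\|f\|_\infty\leq 2M$ for large $n$, after clipping at $\pm 2M$ if needed, so the constrained and unconstrained approximation errors coincide. This gives $\epsilon_n\lesssim\bar\epsilon_n+r_n$, hence $\epsilon_n^2\lesssim\bar\epsilon_n^2+r_n^2$.

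For step (ii) I would substitute $L\asymp\log n$ and $W\asymp U\asymp n^{q/(2\beta+q)}\log^{1-4q/(2\beta+q)}n$ into the stochastic term. Since $W$ is polynomial in $n$ up to logarithmic factors, $\log W\asymp\log n$, so a direct count gives $W/n\asymp n^{-2\beta/(2\beta+q)}$, while the logarithmic factors contributed by $W$, $L$, $\log W$, and the explicit $\log n$ combine to $\log^{8\beta/(2\beta+q)}n$; the exponent bookkeeping is the arithmetic identity $(1-4q/(2\beta+q))+3=8\beta/(2\beta+q)$. Thus $\frac{WL\log W\log n}{n}\asymp\bar\epsilon_n^2=n^{-2\beta/(2\beta+q)}\log^{8\beta/(2\beta+q)}n$, i.e.\ the stochastic and approximation terms are balanced by design, and combining with step (i) collapses \eqref{rate:f-gen} into the claimed rate \eqref{rate:f-opt}.

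The main obstacle I anticipate is not the algebra but making the reduction in step (i) fully rigorous: one must confirm that Yarotsky's approximation, stated for the unit cube in fixed dimension, transfers to a sub-network of $\mathcal F_\mathrm{DNN}(L,U,W)$ that selects the coordinates in $Q$ without inflating the weight budget, and that imposing $\|f\|_\infty\leq 2M$ does not degrade the rate. A secondary subtlety is ensuring that optimizing over the \emph{union} of architectures, rather than a single fixed connectivity graph, does not introduce an extra combinatorial $\log p$ factor beyond what is already absorbed into $\log n$ under a polynomially growing $p$; this is exactly where the claim that the stochastic bound depends on the architecture only through $(L,W)$ must be applied with care.
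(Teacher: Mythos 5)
Your proposal follows essentially the same route as the paper's proof: the paper likewise re-runs the argument of Theorem \ref{thm:imp-DNN} for the class $\mathcal F_\mathrm{DNN}(L,U,W)$ to get the bound $O_p\bigl(WL\log W\log n/n+e_n^2+\epsilon_{n,\mathrm{opt}}^2\bigr)$, bounds $\epsilon_{n,\mathrm{opt}}$ by combining Theorem 1 of \cite{yarotsky2017error} on the $q$-input class $\mathcal F_\mathrm{DNN}^q(L,U,W)$ with exactly your coordinate-selection embedding (adding $p-q$ unconnected input units), uses $\|f^0\|_\infty\leq M$ to show the constraint $\|f\|_\infty\leq 2M$ is not binding, and finishes with the same exponent arithmetic. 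The only cosmetic difference is in the constraint step: rather than clipping, the paper notes that any $f$ with $\|f\|_\infty>2M$ satisfies $\|f-f^0\|_\infty\geq M>\bar\epsilon_n+r_n$, so the constrained and unconstrained infima coincide.
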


Consider a degenerate single-stage learning where $\Yhat_i=Y_i^\#$ and $e_n=0$. Moreover, assume that the function $f^0$ is precisely sparse and smooth with $r_0=0$. In this case, the convergence rate \eqref{rate:f-opt} solely involves $n^{-2\beta/(2\beta+q)}\log^{8\beta/(2\beta+q)}n$. This rate is nearly optimal for non-parametric estimation when the ``active'' covariate set $Q$ is known. Notably, the conditions and outcomes in Theorem \ref{thm:imp-opt} depend solely on the sparsity level $q$ and are unrelated to the original dimension $p$. This is because the class of DNNs $\mathcal F_\mathrm{DNN}(L,U,W)$ optimized in \eqref{def:fhat-opt} encompasses ``oracle'' structures with redundant input units unconnected to any subsequent layers.

Although the statistical error of \(\fhat_\mathrm{opt}\) is nearly minimax optimal, determining the optimal sparse architecture from the collection \(\mathcal F_\mathrm{DNN}(L,U,W)\) is computationally intractable and requires extensive tuning of architectures, a problem that is NP-hard. Pruning methods are sometimes used to ensure network sparsity in practice, but they lack guarantees for reaching the optimal architecture within an acceptable timeframe and are criticized for high computational costs; see, e.g., \cite{liu2018rethinking,evci2019difficulty}. Therefore, we consider a simpler yet widely used class of functions, the multilayer perceptrons \(\mathcal F_{\mbox{\tiny MLP}}(L,H)\), which consists of only one architecture for a given pair \((L,H)\); see an illustration in Figure \ref{fig:network-MLP}. Let \(\fhat_{\mbox{\tiny MLP}}\) be the nested DNN estimate based on multilayer perceptrons:

\begin{align}\label{def:fhat-MLP}
\fhat_{\mbox{\tiny MLP}}\in\arg\min_{f\in\mathcal F_{\mbox{\tiny MLP}}(L,H):\|f\|_\infty\leq2M}n^{-1}\sum_{i=1}^nR_i\left\{\Yhat_i-f(\bX_i)\right\}^2.
\end{align}

The following theorem provides the convergence rate for \(\fhat_{\mbox{\tiny MLP}}\).

\begin{theorem}\label{thm:imp-MLP}
Let the assumptions in Theorem \ref{thm:imp-DNN} hold. Assume that $f^0(\bx)$ can be approximated by a smooth function $f_Q^0(\bx_Q):\R^q\mapsto\R$ with \eqref{bound:r_n} holds and $f_Q^0\in\mathcal W^{\beta,\infty}([-1,1]^{q})$ with some constants $q\geq1$ and $\beta\geq0$. 

Case (a): $p=O(n^{q/(2\beta+2q)}\log^{(3\beta-q)/(\beta+q)}n)$. Set $\tilde\epsilon_n:=n^{-\beta/(2\beta+2q)}\log^{4\beta/(\beta+q)}n$. Let $L$ and $H$ satisfy $L\geq c\{\log(1/\tilde\epsilon_n)+1\}$, $H\geq \lceil c\tilde\epsilon_n^{-q/\beta}\{\log(1/\tilde\epsilon_n)+1\}\rceil(L+1)$ with $L\asymp\log n$ and $H\asymp n^{q/(2\beta+2q)}\log^{(2\beta-2q)/(\beta+q)}n$, where $c>0$ is a constant. Then, as $n\to\infty$,
\begin{align}\label{rate:f-MLP-a}
E\left[R\{\fhat_{\mbox{\tiny MLP}}(\bX)-f^0(\bX)\}^2\right]=O_p\left(n^{-\frac{\beta}{\beta+q}}\log^\frac{8\beta}{\beta+q}n+r_n^2+e_n^2\right).
\end{align}

Case (b): $p\gg n^{q/(2\beta+2q)}\log^{(3\beta-q)/(\beta+q)}n$ and $p=o(n/\log^5n)$. Set $\tilde\epsilon_n:=(p/n)^{\beta/(2\beta+q)}\cdot\log^{5\beta/(2\beta+q)}n$. Let $L$ and $H$ satisfy $L\geq c\{\log(1/\tilde\epsilon_n)+1\}$, $H\geq \lceil c\tilde\epsilon_n^{-q/\beta}\{\log(1/\tilde\epsilon_n)+1\}\rceil(L+1)$ with $L\asymp\log n$ and $H\asymp(n/p)^{q/(2\beta+q)}\log^{(4\beta-3q)/(2\beta+q)}n$, where $c>0$ is a constant. Then, as $n\to\infty$,
\begin{align}\label{rate:f-MLP-b}
E\left[R\{\fhat_{\mbox{\tiny MLP}}(\bX)-f^0(\bX)\}^2\right]=O_p\left((p/n)^{\frac{2\beta}{2\beta+q}}\log^\frac{10\beta}{2\beta+q}n+r_n^2+e_n^2\right).
\end{align}
\end{theorem}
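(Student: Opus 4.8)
\textbf{Proof proposal for Theorem~\ref{thm:imp-MLP}.}

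The plan is to instantiate the general bound of Theorem~\ref{thm:imp-DNN} with the user-chosen architecture taken to be the multilayer perceptron $\mathcal F_{\mbox{\tiny MLP}}(L,H)$. The estimator $\fhat_{\mbox{\tiny MLP}}$ of \eqref{def:fhat-MLP} is precisely the estimator $\fhat$ of \eqref{def:fhat} under this choice, and $\mathcal F_{\mbox{\tiny MLP}}(L,H)$ is a special case of the feedforward class (with neighbor-only connections), so the rate \eqref{rate:f-gen} applies directly:
\[
E_\bX[R\{\fhat_{\mbox{\tiny MLP}}(\bX)-f^0(\bX)\}^2]=O_p\Big(\tfrac{WL\log W\log n}{n}+e_n^2+\epsilon_n^2\Big),
\]
where $W$ and $L$ are the weight count and depth of $\mathcal F_{\mbox{\tiny MLP}}(L,H)$ and $\epsilon_n$ is the approximation error \eqref{def:eps}. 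The second-stage contribution $e_n^2$ is inherited unchanged, so the entire task reduces to controlling the capacity term and $\epsilon_n$, then optimizing over $(L,H)$.

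For a fully connected network with $p$ inputs, $L$ hidden layers of width $H$, and a single output, the weight count is $W\asymp pH+LH^2$, the first layer contributing $pH$ and the internal layers $LH^2$, with $\log W\asymp\log n$ for all admissible choices. The crucial structural fact is that $p$ enters only through the first-layer term $pH$ and never enters $\epsilon_n$. Indeed, by \eqref{bound:r_n} it suffices to approximate the $q$-variate $\beta$-smooth function $f_Q^0$, and the fully connected first layer can isolate the $q$ active coordinates by zeroing the remaining $p-q$ weights; thus the approximation problem is genuinely $q$-dimensional. I would construct the approximant by taking the efficient sparse ReLU network of \cite{yarotsky2017error}, of depth $\asymp\log(1/\tilde\epsilon_n)$ with $\asymp\tilde\epsilon_n^{-q/\beta}\log(1/\tilde\epsilon_n)$ units and error $\tilde\epsilon_n$, and embedding it into an MLP. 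Since that construction uses connections between non-neighboring layers, realizing it in a neighbor-only architecture requires carrying each intermediate value forward through subsequent layers via ReLU identity gadgets ($x=\sigma(x)-\sigma(-x)$), which inflates the needed width by a factor of order $(L+1)$ -- exactly the source of the prescription $H\geq\lceil c\tilde\epsilon_n^{-q/\beta}\{\log(1/\tilde\epsilon_n)+1\}\rceil(L+1)$. Using $\|f^0\|_\infty\le M$ (so the approximant meets $\|f\|_\infty\le2M$ for large $n$) and the triangle inequality gives $\epsilon_n\lesssim\tilde\epsilon_n+r_n$, hence $\epsilon_n^2\lesssim\tilde\epsilon_n^2+r_n^2$, which explains the $r_n^2$ term.

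It remains to balance the capacity term $WL\log W\log n/n$ against $\tilde\epsilon_n^2$. Writing $WL\asymp pHL+L^2H^2$, the competition between the two summands is governed by the sign of $p-LH$. When $p\ll LH$ the internal term $L^2H^2$ dominates; substituting $H\asymp\tilde\epsilon_n^{-q/\beta}$ (up to logs) and equating $H^2/n$ with $\tilde\epsilon_n^2$ forces $\tilde\epsilon_n\asymp n^{-\beta/(2\beta+2q)}$, giving the Case~(a) rate $n^{-\beta/(\beta+q)}$. When $p\gg LH$ the first-layer term $pHL$ dominates; equating $pH/n$ with $\tilde\epsilon_n^2$ forces $\tilde\epsilon_n\asymp(p/n)^{\beta/(2\beta+q)}$, giving the Case~(b) rate $(p/n)^{2\beta/(2\beta+q)}$. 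The crossover $p\asymp LH\asymp n^{q/(2\beta+2q)}\log^{(3\beta-q)/(\beta+q)}n$ is exactly the threshold in the statement; in each regime the capacity term and $\tilde\epsilon_n^2$ turn out to be matched not only in their polynomial order but also in their logarithmic power, reproducing $\log^{8\beta/(\beta+q)}n$ and $\log^{10\beta/(2\beta+q)}n$ once the $\log$ factors from $H$, $L\asymp\log n$, $\log W$, and the explicit $\log n$ are tracked. Finally I would check the hypothesis $n\gg WL\log W$ of Theorem~\ref{thm:imp-DNN}: in Case~(a) it holds because $q/(\beta+q)<1$, and in Case~(b) it is exactly what $p=o(n/\log^5 n)$ guarantees.

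The main obstacle is the MLP approximation bound of the second paragraph. In Theorem~\ref{thm:imp-opt} the architecture is optimized, so Yarotsky's sparse construction can be used verbatim; here the fully connected architecture is fixed, so the sparse, skip-connected approximant must be simulated by a neighbor-connected one. Quantifying the width blow-up of the identity-carrying units sharply enough to retain the $(L+1)$ factor -- while ensuring that $p$ contaminates only the weight count $W$ and never the approximation error -- is the delicate step, and it is precisely this asymmetric role of $p$ that produces the phase transition between Cases~(a) and~(b).
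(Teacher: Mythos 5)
Your proposal is correct and follows essentially the same route as the paper's proof: instantiate the general bound of Theorem \ref{thm:imp-DNN} for the MLP class, bound the approximation error by combining Yarotsky's construction for the $q$-variate function $f_Q^0$ (with the inactive coordinates absorbed into zeroed first-layer weights and the constraint $\|f\|_\infty\leq2M$ checked via $\|f^0\|_\infty\leq M$) with an embedding of sparse feedforward networks into multilayer perceptrons, and then balance the capacity term $\bar WL\log\bar W\log n/n$, $\bar W\asymp pH+LH^2$, against $\tilde\epsilon_n^2$ in the two regimes. The only cosmetic difference is that where you sketch the identity-gadget construction with its $(L+1)$ width inflation, the paper simply invokes Lemma 1 of \cite{farrell2021deep}, which gives exactly the inclusion $\mathcal F_\mathrm{DNN}(L,U,W)\subseteq\mathcal F_{\mbox{\tiny MLP}}(L,WL+U)$.
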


\begin{remark}[Convergence rate]\label{remark:rate-MLP}
The convergence rates of multilayer perceptrons, \eqref{rate:f-MLP-a} or \eqref{rate:f-MLP-b}, are slower than DNNs based on optimal structures, \eqref{rate:f-opt}. This discrepancy arises because the architectures of multilayer perceptrons include unnecessary edges that do not contribute to approximation power but increase estimation error due to added structural complexity. However, the multilayer perceptrons are much more computationally attractive, as the class $\mathcal{F}_{\mbox{\tiny MLP}}(L,H)$ contains only one network structure for any fixed pair of hyperparameters $(L,H)$. Conversely, finding the optimal structure over a class $\mathcal{F}_\mathrm{DNN}(L,U,W)$ with a given triple $(L,U,W)$, or over a collection of sparse networks \citep{fan2020theoretical,schmidt2020nonparametric}, is an NP-hard problem and computationally infeasible, especially for large-scale problems.

As demonstrated in Theorem \ref{thm:imp-MLP}, the convergence rate of nested multilayer perceptrons consists of three components: (1) the statistical error, which takes different analytical forms in \eqref{rate:f-MLP-a} and \eqref{rate:f-MLP-b} depending on the covariates' dimension, (2) the sparse-approximation error $r_n^2$, and (3) the first-stage error $e_n^2$. Under Case (a) above, the statistical error is $n^{-\beta/(\beta+q)}\log^{8\beta/(\beta+q)}n$, which is faster than the usual minimax non-parametric estimation rate $n^{-2\beta/(2\beta+p)}$ as long as $q<p/2$. In existing work, \cite{farrell2021deep} considered the degenerate single-stage learning problem, and their Theorem 1 provided a convergence rate $O_p(n^{-\beta/(\beta+p)}\log^8n)$ for multilayer perceptrons when $p$ is fixed. Comparing with their work, apart from the nested structure we have addressed and a slight improvement on the logarithmic terms, we provide a faster convergence rate as long as $q<p$. 

It is essential to highlight that conducting non-parametric regressions in high dimensions poses a significant challenge due to the ``curse of dimensionality.'' Nonetheless, we demonstrate the consistency of multilayer perceptrons under growing dimension $p\to\infty$ as $n\to\infty$, provided that $p=o(n/\log^5n)$. Our findings offer theoretical insights into the practical success of DNNs in high dimensions. To the best of our knowledge, this is the first result providing theoretical guarantees for multilayer perceptrons allowing for a diverging dimension without assuming any sparse structure on the marginal distribution $P_{\bX}$, even in single-stage learning problems. Instead, we consider sparse structures on the conditional distribution $P_{Y\mid\bX}$, which are more suitable for the causal inference problems addressed in this work. Moreover, our results do not rely on any type of beta-min conditions (i.e., the minimum non-zero signal is large enough), which are essential for achieving variable selection consistency and are often violated in practice.
\end{remark}

We would like to emphasize that the result in Theorem \ref{thm:imp-MLP} is also of independent interest in that it provides a user-friendly general theory towards the convergence rate of multilayer perceptrons based on nested structures. This result is also useful while establishing theoretical guarantees for the usage of DNNs in other contexts where sequential regression are needed, such as reinforcement learning and the estimation of optimal dynamic treatment regimes. Additionally, although we only considered the square loss above, the established results can also be extended to other loss functions, such as the logistic loss.


\section{Heterogenous treatment effect estimation using DNNs}\label{sec:CATE}

In this section, we employ DNNs to estimate the conditional average treatment effect (CATE) introduced in Example \ref{ex:CATE}. 

\subsection{The doubly robust DNN estimate of CATE}
Let $\S=(\bZ_i)_{i=1}^N=(\bS_i,T_i,Y_i)_{i=1}^N$ be i.i.d. samples, and let $\bZ$ be an independent copy of $\S$ with $\bS\in\R^d$, $T\in\{0,1\}$, and $Y\in\R$ representing the covariate vector, treatment indicator, and the observed outcome, respectively. Using the potential outcome framework, let $Y(t)$ be the potential outcome an individual would have received when exposed to treatment $t\in\{0,1\}$. Our goal is to estimate the CATE function $\theta_{\mbox{\tiny CATE}}(\bs):=E\{Y(1)-Y(0)\mid\bS=\bs\}$ for any given $\bs\in\R^d$. Estimating the CATE is valuable in the presence of heterogeneity and is crucial for the development of precision medicine.

Denote $\pi^0(\bs):=P(T=1\mid\bS=\bs)$ as the propensity score function and $\mu^0(t,\bs):=E(Y\mid\bS=\bs,T=j)$ as the outcome regression function for each treatment group $t\in\{0,1\}$. Assume the standard identification conditions \citep{imbens2015causal,tsiatis2006semiparametric}. 

\begin{assumption}[Identification conditions]\label{cond:ign} 
Let the following conditions hold: (a) Ignorability: $\{Y(1),Y(0)\}\independent T\mid\bS$; (b) Consistency: $Y=Y(T)$; (c) Positivity/Overlap: $\|1/\pi^0\|_\infty,\|1/(1-\pi^0)\|_\infty\leq M$ with some constant $M>0$.
\end{assumption}

\begin{algorithm} [h!] \caption{The doubly robust DNN estimator for the CATE}\label{alg:CATE}
\begin{algorithmic}[1]
\Require Observations $\S=(\bZ_i)_{i=1}^N=(\bS_i,T_i,Y_i)_{i=1}^N$ and a pair $(L,H)$.
\State Split $\S$ into two equal-sized parts $(\S_1,\S_2)$, indexed by $\mathcal I_1$ and $\mathcal I_2$, respectively. Let $n=N/2$ be an integer for the sake of simplicity.
\State Construct nuisance estimates $\pihat^{(2)}(\bs)=\pihat(\bs,\S_2)$ and $\muhat^{(2)}(t,\bs)=\muhat(t,\bs,\S_2)$ for each $t\in\{0,1\}$ based on training samples $\S_2$ using arbitrary machine learning methods.
\State For any $i\in\mathcal I_1$, define the doubly robust outcomes
\begin{equation}\label{def:DR-imp-CATE}
\Yhat_i=\muhat^{(2)}(1,\bS_i)+\frac{T_i\{Y_i-\muhat^{(2)}(1,\bS_i)\}}{\pihat^{(2)}(\bS_i)}-\muhat^{(2)}(0,\bS_i)-\frac{(1-T_i)\{Y_i-\muhat^{(2)}(0,\bS_i)\}}{1-\pihat^{(2)}(\bS_i)}.
\end{equation}
\State Construct a DNN estimate of the CATE based on training samples $(\bX_i,\Yhat_i)_{i\in\mathcal I_1}$:
$$\thetahat_{\mbox{\tiny CATE}}^1\in\arg\min_{f\in\mathcal F_{\mbox{\tiny MLP}}(L,H):\|f\|_\infty\leq2M}n^{-1}\sum_{i\in\mathcal I_1}\left\{\Yhat_i-f(\bX_i)\right\}^2.$$
\State Exchange subsamples $\S_1$ and $\S_2$, repeat Steps 2-4 above and obtain another estimate $\thetahat_{\mbox{\tiny CATE}}^2$.\\
\Return The doubly robust DNN estimator for the CATE, $\thetahat_{\mbox{\tiny CATE}}=(\thetahat_{\mbox{\tiny CATE}}^1+\thetahat_{\mbox{\tiny CATE}}^2)/2$.
\end{algorithmic}
\end{algorithm}

Under Assumption \ref{cond:ign}, the CATE can be identified through the doubly robust representation \eqref{rep:CATE}. Let $\pihat$ and $\muhat$ be estimates of the nuisance functions $\pi^0$ and $\mu$, respectively. Here, we allow flexible nuisance estimation methods in the first learning stage, including, for example, DNNs and Lasso estimators. Then, we employ the DNN in the second learning stage to estimate the CATE, using the outcome variables \eqref{def:DR-imp-CATE} constructed based on the first-stage estimates and a doubly robust score function. The detailed construction based on a cross-fitting technique is introduced in Algorithm \ref{alg:CATE}.

\subsection{Theoretical properties}\label{sec:theory-CATE}

We assume the following conditions.

\begin{assumption}[Estimation errors]\label{cond:converge-rate-CATE}
Let $\delta_N\geq0$ be a sequence satisfying
\begin{align}
\max_{t\in\{0,1\}}E_\bZ\left[\mathbbm1_{T=t}\left\{\muhat(t,\bS)-\mu^0(t,\bS)\right\}^2\left\{\pihat(\bS)-\pi^0(\bS)\right\}^2\right]&=O_p(\delta_N^4).\label{def:deltaN-CATE}
\end{align}
\end{assumption}

\begin{assumption}[Boundedness]\label{cond:bound-CATE}
Let $\bS\subseteq[-1,1]^{d_1}$ and $\|Y\|_\infty\leq M$. Additionally, let $\|\muhat\|_{\infty,P_\bZ}$, $\|1/\pihat\|_{\infty,P_\bZ}$, $\|1/(1-\pihat)\|_{\infty,P_\bZ}=O_p(1)$.
\end{assumption}

\begin{assumption}[Smoothness]\label{cond:smooth-CATE}
Assume that $\theta_{\mbox{\tiny CATE}}(\bs)$ can be approximated by a smooth function $\theta_Q(\bs_{Q}):\R^{q_\theta}\mapsto\R$ that $\|\theta_{\mbox{\tiny CATE}}(\bS)-\theta_Q(\bS_{Q})\|_\infty\leq r_N$, where $\bs_{Q}:=(\bs_{j})_{j\in Q}$, $\bS_{Q}:=(\bS_{j})_{j\in Q}$, $|Q|=q_\theta\geq1$ is a constant, $r_N=o(1)$, and $\mu_Q$ lies in the H\"{o}lder ball with some constant smoothness parameter $\beta_\theta\geq0$ that $\mu_Q\in\mathcal W^{\beta_\theta,\infty}([-1,1]^{q_\theta})$, \eqref{def:Holder}.
\end{assumption}

Assumption \ref{cond:converge-rate-CATE} assumes a product rate condition on the estimation errors of nuisance functions $\mu^0$ and $\pi^0$, with the product rate $\delta_N$ appearing in the estimation error of the CATE in Theorem \ref{thm:theta-reg}. The boundedness conditions for $\bS$ and $Y$ in Assumption \ref{cond:bound-CATE} are relatively standard in the non-parametric regression literature; see, for instance, \cite{farrell2021deep}. The uniform lower bounds on the propensity score estimates $\pihat$ and $1-\pihat$ are also commonly seen in the double machine learning literature, such as \cite{chernozhukov2017double}. These conditions, together with the uniform upper bound for $\muhat$, are easily satisfied, for example, by the DNNs as in Example \ref{ex:DNN-CATE} below. Assumption \ref{cond:smooth-CATE} is a generalization of the usual H\"{o}lder smoothness condition; we only require $\mu_Q$, a sparse approximation of the CATE, to lie in the H\"{o}lder ball. A related but different condition has been considered by \cite{fan2020theoretical,schmidt2020nonparametric}, where the true conditional mean function is assumed to be a composition of exactly sparse smooth functions. While we require the sparsity level $q > 0$ to be a constant independent of $N$ for non-parametric regression, the total dimension $d$ is allowed to grow with the sample size.

In the following lemma, we first characterize the deviation between $\Yhat$ and $Y^\#$, which is the error originated from the first learning stage.

\begin{lemma}\label{lemma:imput-err-CATE}
Let Assumptions \ref{cond:ign}, \ref{cond:converge-rate-CATE}, and \ref{cond:bound-CATE} hold. Let $\Yhat$ be an independent copy of \eqref{def:DR-imp-CATE} and $Y^\#$ is defined as in \eqref{rep:CATE}. Then, 
\begin{align*}
\Yhat-Y^\#=\Delta_1+\Delta_2,
\end{align*} 
with some $\Delta_1,\Delta_2\in\R$ satisfying
\begin{align}
E_\bZ(\Delta_1\mid\bS)=0\;\;\text{almost surely}\;\;\text{and}\;\;E_\bZ(\Delta_2^2)=O_p(\delta_N^4).\label{result:lemma-CATE}
\end{align}
\end{lemma}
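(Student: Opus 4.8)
The plan is to write $\Yhat-Y^\#$ as a difference of two arm-specific doubly robust score differences and to perform the standard ``add-and-subtract'' expansion on each arm, arranging the terms so that every piece that is linear in the nuisance errors collapses into a single quantity with conditional mean zero given $\bS$, while the genuinely bilinear (product) piece is collected into the remainder $\Delta_2$. Concretely, I would set $\Delta\mu_t:=\muhat(t,\bS)-\mu^0(t,\bS)$, $\Delta\pi:=\pihat(\bS)-\pi^0(\bS)$, and residuals $\varepsilon_t:=Y-\mu^0(t,\bS)$, and use the elementary identity $1/\pihat-1/\pi^0=-\Delta\pi/(\pihat\,\pi^0)$ together with its analogue for $1-\pi$. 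Because $\pihat,\muhat$ are computed from $\S_2$, which is independent of the new copy $\bZ$, these estimates act as fixed functions of $\bS$ under $E_\bZ(\cdot\mid\bS)$.

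For the treated arm, substituting $Y-\muhat(1,\bS)=\varepsilon_1-\Delta\mu_1$ and applying the identity yields
\begin{align*}
\muhat(1,\bS)+\frac{T\{Y-\muhat(1,\bS)\}}{\pihat(\bS)}-\mu^0(1,\bS)-\frac{T\varepsilon_1}{\pi^0(\bS)}
=\Delta\mu_1\Big(1-\frac{T}{\pi^0}\Big)-\frac{T\varepsilon_1\Delta\pi}{\pihat\,\pi^0}+\frac{T\Delta\mu_1\Delta\pi}{\pihat\,\pi^0}.
\end{align*}
I would then define $\Delta_1$ as the difference of the first two terms across the two arms and $\Delta_2$ as the difference of the final (product) terms. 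The conditional-mean-zero claim reduces to two facts: $E_\bZ(1-T/\pi^0\mid\bS)=0$, since $\pi^0(\bS)=E_\bZ(T\mid\bS)$; and $E_\bZ(T\varepsilon_1\mid\bS)=P(T=1\mid\bS)\,E_\bZ\{Y-\mu^0(1,\bS)\mid\bS,T=1\}=0$ by the definition $\mu^0(1,\bS)=E(Y\mid\bS,T=1)$. The analogous identities hold for the control arm with $T$, $\pi^0$, $\pihat$ replaced by $1-T$, $1-\pi^0$, $1-\pihat$, so that $E_\bZ(\Delta_1\mid\bS)=0$ almost surely.

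Finally, for the remainder, using $T^2=T$ the treated contribution satisfies $E_\bZ[(T\Delta\mu_1\Delta\pi/(\pihat\,\pi^0))^2]=E_\bZ[\mathbbm{1}_{T=1}\Delta\mu_1^2(\Delta\pi)^2/\{\pihat^2(\pi^0)^2\}]$; the weight $1/\{\pihat^2(\pi^0)^2\}$ is $O_p(1)$ by the overlap bound in Assumption \ref{cond:ign}(c) and the boundedness of $\|1/\pihat\|_{\infty,P_\bZ}$ in Assumption \ref{cond:bound-CATE}, so this term is $O_p(\delta_N^4)$ by the product-rate condition \eqref{def:deltaN-CATE}. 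The control contribution is handled identically, and $(a-b)^2\le 2a^2+2b^2$ gives $E_\bZ(\Delta_2^2)=O_p(\delta_N^4)$. The main obstacle is the bookkeeping in the second paragraph: one must route \emph{both} the outcome-error term $\Delta\mu_1(1-T/\pi^0)$ and the cross term $T\varepsilon_1\Delta\pi/(\pihat\,\pi^0)$ into $\Delta_1$ so that only the pure product $\Delta\mu_1\Delta\pi$ survives in $\Delta_2$—this is precisely where the generalized Neyman orthogonality of the doubly robust score is exploited—and the verification $E_\bZ(T\varepsilon_1\mid\bS)=0$ must carefully condition on $T$ before invoking the definition of $\mu^0$.
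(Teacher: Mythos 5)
Your proposal is correct and takes essentially the same route as the paper's proof: the identical add-and-subtract expansion producing linear terms with conditional mean zero (your $\Delta\mu_t(1-T/\pi^0)$ and cross terms, matching the paper's $\Delta_{1,1}$ through $\Delta_{1,4}$) and bilinear product terms (matching $\Delta_{2,1}$, $\Delta_{2,2}$), bounded via overlap, the condition $\|1/\pihat\|_{\infty,P_\bZ}=O_p(1)$, and the product-rate assumption \eqref{def:deltaN-CATE}. The only cosmetic deviations are that you verify the cross term's conditional mean zero by conditioning on $T$ first and invoking the definition of $\mu^0(1,\cdot)$ (the paper instead factors the conditional expectation using ignorability), and that you combine the two arms via $(a-b)^2\le 2a^2+2b^2$ where the paper uses the exact identity $\Delta_{2,1}\Delta_{2,2}=0$ following from $T(1-T)=0$; neither difference is substantive.
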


Lemma \ref{lemma:imput-err-CATE} establishes a connection between the CATE estimation problem and the nested DNN results in Section \ref{sec:imp-DNN-theory}, enabling the application of Theorem \ref{thm:imp-MLP}. According to Lemma \ref{lemma:imput-err-CATE}, the first-stage error $\Yhat-Y^\#$ comprises two components: (a) a first-order estimation error term $\Delta_1$, defined as the summation of \eqref{def:Delta11}-\eqref{def:Delta14} in the Supplementary Material, depending \emph{linearly} on the first-stage estimation errors; (b) a second-order term $\Delta_2$, defined as the summation of \eqref{def:Delta21}-\eqref{def:Delta22}, depending \emph{quadratically} on the first-stage estimation errors. Benefiting from the doubly robust score \eqref{rep:CATE} that satisfies the generalized Neyman orthogonality condition \eqref{def:CNO}, the first-order error term has a conditional mean zero, as shown in \eqref{result:lemma-CATE}. Hence, as demonstrated in Theorem \ref{thm:imp-MLP}, the first-order error does not affect the convergence rate of the doubly robust nested DNN estimation, and only the smaller second-order error contributes to the final convergence rate. These results are summarized in the theorem below.

\begin{theorem}\label{thm:theta-reg}
Let Assumptions \ref{cond:ign}, \ref{cond:converge-rate-CATE}, \ref{cond:bound-CATE} and \ref{cond:smooth-CATE} hold, $d=o(N/\log^5N)$. Choose $L$ and $H$ as in Theorem \ref{thm:imp-MLP} with $p=d$, $q=q_\theta$, and $\beta=\beta_\theta$. Then, as $N\to\infty$,
\begin{align}
&E_\bZ\left[\left\{\thetahat_{\mbox{\tiny CATE}}(\bS)-\theta_{\mbox{\tiny CATE}}(\bS)\right\}^2\right]=O_p\left(w_{N,d}(q_\theta,\beta_\theta)+r_N^2+\delta_N^4\right),\;\;\mbox{where}\label{rate:thetahat}\\
&w_{N,p}(q,\beta):=N^{-\frac{2\beta}{2\beta+2q}}\log^\frac{8\beta}{\beta+q}N+(p/N)^{\frac{2\beta}{2\beta+q}}\log^\frac{10\beta}{2\beta+q}N\;\;\mbox{for any}\;\;p,q,\beta\geq0.\label{def:w_Nd}
\end{align}
\end{theorem}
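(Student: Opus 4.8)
The plan is to derive Theorem \ref{thm:theta-reg} by invoking the general nested-DNN result, Theorem \ref{thm:imp-MLP}, separately for each of the two cross-fitted estimates $\thetahat_{\mbox{\tiny CATE}}^1$ and $\thetahat_{\mbox{\tiny CATE}}^2$ produced by Algorithm \ref{alg:CATE}, and then averaging. Focusing on $\thetahat_{\mbox{\tiny CATE}}^1$, I would identify the ingredients of the framework of Section \ref{sec:imp-DNN-theory}: set $R\equiv1$ (the target is the whole population) and $\bX=\bS$, take $\S^2=\S_2$ as the nuisance-training sample and $\S^1=(\bS_i,\Yhat_i)_{i\in\mathcal I_1}$ as the second-stage regression sample, each of size $n=N/2\asymp N$. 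Because the split gives $\S_1\independent\S_2$, the required independence $\S^2\independent\S^1$ holds, and the doubly robust representation \eqref{rep:CATE} identifies the regression target as $f^0(\bs)=E(Y^\#\mid\bS=\bs)=\theta_{\mbox{\tiny CATE}}(\bs)$.

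First I would verify the hypotheses of Theorem \ref{thm:imp-DNN}. The condition $\bS\subseteq[-1,1]^d$ is Assumption \ref{cond:bound-CATE}. Boundedness $\|Y^\#\|_\infty\leq M$ (after enlarging $M$ if needed) follows from $\|Y\|_\infty\leq M$, the induced bound $|\mu^0(t,\cdot)|\leq M$, and the positivity bounds $\|1/\pi^0\|_\infty,\|1/(1-\pi^0)\|_\infty\leq M$ in Assumption \ref{cond:ign}; the analogous bound $\|\Delta_1\|_{\infty,P_\bZ}=O_p(1)$ follows from the corresponding bounds on $\muhat,1/\pihat,1/(1-\pihat)$ in Assumption \ref{cond:bound-CATE}. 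Lemma \ref{lemma:imput-err-CATE} then supplies exactly the decomposition $\Yhat-Y^\#=\Delta_1+\Delta_2$ with $E_\bZ(\Delta_1\mid\bS)=0$ and $E_\bZ(\Delta_2^2)=O_p(\delta_N^4)$, so the second-order quantity plays the role of $e_n$ with $e_n^2=\delta_N^4$. Finally, Assumption \ref{cond:smooth-CATE} is precisely the sparse-smooth approximation hypothesis \eqref{bound:r_n} with $(q,\beta,r_n)=(q_\theta,\beta_\theta,r_N)$.

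With these identifications, Theorem \ref{thm:imp-MLP} applies with $p=d$. When $d$ lies in the low-dimensional regime of Case (a), the statistical term is $N^{-\beta_\theta/(\beta_\theta+q_\theta)}\log^{8\beta_\theta/(\beta_\theta+q_\theta)}N$, which equals the first summand of $w_{N,d}(q_\theta,\beta_\theta)$; when $d$ lies in the regime of Case (b)---permitted since $d=o(N/\log^5N)$ matches the condition $p=o(n/\log^5n)$ with $n\asymp N$---the statistical term is $(d/N)^{2\beta_\theta/(2\beta_\theta+q_\theta)}\log^{10\beta_\theta/(2\beta_\theta+q_\theta)}N$, the second summand. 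Since $w_{N,d}$ is the sum of these two terms, in either regime
\begin{align*}
E_\bZ\left[\{\thetahat_{\mbox{\tiny CATE}}^k(\bS)-\theta_{\mbox{\tiny CATE}}(\bS)\}^2\right]=O_p\left(w_{N,d}(q_\theta,\beta_\theta)+r_N^2+\delta_N^4\right),\quad k\in\{1,2\},
\end{align*}
where the two folds obey the same bound by the symmetry of Step 5 of Algorithm \ref{alg:CATE}.

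To conclude, I would average the two estimates. Using convexity of $x\mapsto x^2$,
\begin{align*}
\{\thetahat_{\mbox{\tiny CATE}}(\bS)-\theta_{\mbox{\tiny CATE}}(\bS)\}^2\leq\tfrac12\{\thetahat_{\mbox{\tiny CATE}}^1(\bS)-\theta_{\mbox{\tiny CATE}}(\bS)\}^2+\tfrac12\{\thetahat_{\mbox{\tiny CATE}}^2(\bS)-\theta_{\mbox{\tiny CATE}}(\bS)\}^2,
\end{align*}
so taking $E_\bZ$ and adding the two identical $O_p$ bounds yields \eqref{rate:thetahat}. I expect the only genuine obstacle to be the careful verification that the cross-fitting structure of Algorithm \ref{alg:CATE} matches the nested-DNN sampling scheme---in particular the independence $\S^2\independent\S^1$ that underlies the conditional-mean-zero property of $\Delta_1$---together with checking that the dimension restriction of Case (b) is compatible with $d=o(N/\log^5N)$; the remaining work is bookkeeping, as Lemma \ref{lemma:imput-err-CATE} and Theorem \ref{thm:imp-MLP} already carry the analytic burden.
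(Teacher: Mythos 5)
Your proposal is correct and follows essentially the same route as the paper's proof: verify the hypotheses of Theorem \ref{thm:imp-DNN}/\ref{thm:imp-MLP} (boundedness of $Y^\#$ and $\Delta_1$ via Assumptions \ref{cond:ign} and \ref{cond:bound-CATE}, the decomposition from Lemma \ref{lemma:imput-err-CATE} giving $e_n^2=\delta_N^4$, and Assumption \ref{cond:smooth-CATE} as the sparse-smooth hypothesis), apply Theorem \ref{thm:imp-MLP} with $R\equiv1$, $p=d$, $n=N/2\asymp N$ to each cross-fitted fold, and average. The only cosmetic difference is that the paper re-derives the identification $\theta_{\mbox{\tiny CATE}}(\bs)=E(Y^\#\mid\bS=\bs)$ and spells out the $\|\Delta_{1,j}\|_{\infty,P_\bZ}=O_p(1)$ bounds term by term, whereas you cite \eqref{rep:CATE} and summarize the boundedness check; your handling of the Case (a)/(b) dichotomy and the convexity step for the average is sound and matches the paper's conclusion.
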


\begin{remark}[Consistency and robustness]\label{remark:CATE}
Recently, \cite{foster2023orthogonal} proposed a general framework for CATE estimation and briefly explored the utilization of DNNs as an illustrative example. However, their approach necessitates the target function $\theta(\bx)$ to precisely align with the class of DNNs, therefore reducing the problem to a parametric one, as the approximation error (or misspecification error) is assumed to be exactly zero. Moreover, they presented a slower convergence rate involving an additional term $\max_{t\in\{0,1\}}E_\bZ[\mathbbm1_{T=t}\{\muhat(t,\bS)-\mu^0(t,\bS)\}^4]+E_\bZ[\{\pihat(\bS)-\pi^0(\bS)\}^4]$, compared to our established rate \eqref{rate:thetahat}. Consequently, besides the unrealistic assumption of zero approximation error, their method requires consistent estimation of both nuisance models $\mu^0$ and $\pi^0$ to attain a consistent estimate for the CATE. In contrast, we only necessitate that the true CATE function can be approximated by a sparse smooth function with $r_N=o(1)$, and the product of the first-stage error satisfies $\delta_N=o(1)$. The latter condition holds as long as either $\mu^0$ or $\pi^0$ is consistently estimated, not necessarily both -- our results provide enhanced model robustness. To our knowledge, Theorem \ref{thm:theta-reg} marks the first consistent result for CATE that accommodates both a fully non-parametric model and a growing number of dimension, without relying on any variable selection techniques.
\end{remark}

\subsection{Examples}\label{sec:examples-CATE}

In the following, we provide specific examples for the nuisance estimates $(\muhat,\pihat)$ and discuss the resulting convergence rates for CATE estimation. For the sake of simplicity, let $\theta_{\mbox{\tiny CATE}}$ be exactly sparse, i.e., $r_N$ defined in Assumption \ref{cond:smooth-CATE} is exactly zero.

\begin{example}[First-stage learning through DNNs]\label{ex:DNN-CATE}
Let both $\mu^0(0,\cdot)$ and $\mu^0(1,\cdot)$ be exactly sparse functions with a sparsity level of $q_\mu$ and H\"older smoothness level of $\beta_\mu$. Additionally, let $\pi^0$ be an exactly sparse function with a sparsity level of $q_\pi$ and H\"older smoothness level of $\beta_\pi$. Consider DNN estimates: for each $k\in\{1,2\}$,
\begin{align*}
\muhat^{(k)}(t,\cdot)&\in\arg\min_{f\in\mathcal F_{\mbox{\tiny MLP}}(L_{\mu,t},H_{\mu,t}):\|f\|_\infty\leq2M}n^{-1}\sum_{i\in\mathcal I^k}\mathbbm1_{T_i=t}\left\{Y_i-f(\bS_i)\right\}^2\;\;\forall t\in\{0,1\},\\
\fhat_\pi^{(k)}(\cdot)&\in\arg\min_{f\in\mathcal F_{\mbox{\tiny MLP}}(L_{\pi},H_{\pi}):\|f\|_\infty\leq2M}n^{-1}\sum_{i\in\mathcal I^k}\left(-T_if(\bS_i)+\log[1+\exp\{f(\bS_i)\}]\right),
\end{align*}
and $\pihat^{(k)}=\phi(\fhat_\pi^{(k)})$, with $\phi(u)=\exp(u)/\{1+\exp(u)\}$ for any $u\in\R$ denoting the logistic function. By Theorem \ref{thm:imp-MLP}, when the tuning parameters are chosen appropriately,
\begin{align*}
\max_{t\in\{0,1\}}E\left[\mathbbm1_{T=t}\{\muhat^{(k)}(t,\bS)-\mu^0(t,\bS)\}^2\right]=O_p(w_{N,d}(q_\mu,\beta_\mu)).
\end{align*}
Additionally, repeating the proof of Theorem \ref{thm:imp-MLP} using the logistic loss, we also have
\begin{align*}
E\left[\{\pihat^{(k)}(\bS)-\pi^0(\bS)\}^2\right]=O_p(w_{N,d}(q_\pi,\beta_\pi)).
\end{align*}
By construction, $\|\muhat^{(k)}-\mu^0\|_\infty,\|\pihat^{(k)}-\pi^0\|_\infty=O(1)$. Therefore, the product term $\delta_N$, \eqref{def:deltaN-CATE}, can be chosen as $\delta_N^4=O(w_{N,d}(q_\mu,\beta_\mu)\land w_{N,d}(q_\pi,\beta_\pi))$, leading to the following result for the CATE estimation:
\begin{equation}\label{rate:CATE-DNN}
E_\bZ\left[\left\{\thetahat_{\mbox{\tiny CATE}}(\bS)-\theta_{\mbox{\tiny CATE}}(\bS)\right\}^2\right]=O_p\left(w_{N,d}(q_\theta,\beta_\theta)+w_{N,d}(q_\mu,\beta_\mu)\land w_{N,d}(q_\pi,\beta_\pi)\right).
\end{equation}
Therefore, even if both outcome regression models $\mu^0(0,\cdot)$ and $\mu^0(1,\cdot)$ are relatively complex (e.g., dense and non-smooth), we can still obtain a consistent estimate for the CATE as long as the CATE itself and the propensity score are sparse and smooth. On the other hand, directly estimating the CATE as the difference between $\muhat(1,\cdot)$ and $\muhat(0,\cdot)$ is generally inconsistent. This observation underscores the enhanced robustness and accuracy achieved through the use of doubly robust regression in the second learning stage.
\end{example}

\begin{example}[First-stage learning through Lasso]
Consider Lasso and logistic Lasso estimates. For any $\bs\in\R^d$, $k\in\{1,2\}$, and $t\in\{0,1\}$, let $\muhat^{(k)}(t,\bs)=\bs^\top\bbetahat_t^{(k)}$ and $\pihat^{(k)}(\bs)=\phi(\bs^\top\bbetahat_\pi^{(k)})$, where
\begin{align*}
\bbetahat_t^{(k)}&\in\arg\min_{\bbeta\in\R^d}n^{-1}\sum_{i\in\mathcal I^k}\mathbbm1_{T_i=t}\left(Y_i-\bS_i^\top\bbeta\right)^2+\lambda_t\|\bbeta\|_1\;\;\forall t\in\{0,1\},\\
\bbetahat_\pi^{(k)}&\in\arg\min_{\bbeta\in\R^d}n^{-1}\sum_{i\in\mathcal I^k}\left[-T_i\bS_i^\top\bbeta+\log\{1+\exp(\bS_i^\top\bbeta)\}\right]+\lambda_\pi\|\bbeta\|_1,
\end{align*}
with tuning parameters $\lambda_t,\lambda_\pi\geq0$. Let $\bbeta_0^*$, $\bbeta_1^*$, and $\bbeta_\pi^*$ be the best population linear/logistic slopes, and suppose that $\|\bbeta_t^*\|_0\leq s_\mu$ for each $t\in\{0,1\}$ and $\|\bbeta_\pi^*\|_0\leq s_\pi$, where $\|\bbeta\|_0=|\{j\leq d:\bbeta_j\neq0\}|$ for any $\bbeta\in\R^d$. Additionally, assume that the linear and logistic approximation errors satisfy $E[\mathbbm1_{T=t}\{\mu^0(t,\bS)-\bS^\top\bbeta_t^*\}^2]\leq e_\mu^2$ and $E[\{\pi^0(\bS)-\phi(\bS^\top\bbeta_\pi^*)\}^2]\leq e_\pi^2$. Then, by standard high-dimensional statistics theory \citep{wainwright2019high}, we have
\begin{align*}
\max_{t\in\{0,1\}}E\left[\mathbbm1_{T=t}\{\muhat^{(k)}(t,\bS)-\mu^0(t,\bS)\}^2\right]&=O_p\left(\frac{s_\mu\log d}{N}+e_\mu^2\right),\\
E\left[\{\pihat^{(k)}(\bS)-\pi^0(\bS)\}^2\right]&=O_p\left(\frac{s_\pi\log d}{N}+e_\pi^2\right).
\end{align*}
For sub-Gaussian covariates, we can choose the product term $\delta_N$, \eqref{def:deltaN-CATE}, as $\delta_N^4=(s_\mu\log d/N+e_\mu^2)(s_\pi\log d/N+e_\pi^2)$. Therefore,
$$E_\bZ\left[\left\{\thetahat_{\mbox{\tiny CATE}}(\bS)-\theta_{\mbox{\tiny CATE}}(\bS)\right\}^2\right]=O_p\left(w_{N,d}(q_\theta,\beta_\theta)+\left(\frac{s_\mu\log d}{N}+e_\mu^2\right)\left(\frac{s_\pi\log d}{N}+e_\pi^2\right)\right).$$
Consistency of the CATE estimate requires either $e_\mu=o(1)$ or $e_\pi=o(1),$ not necessarily both. When one of the nuisance models is correctly parametrized, i.e., $e_\mu=0$ or $e_\pi=0$, and the other one is dense and non-smooth, the convergence rate above is faster than (or at least the same as) the CATE estimator using DNN nuisance estimates, which has a convergence rate \eqref{rate:CATE-DNN}. However, the approach in Example \ref{ex:DNN-CATE} is more robust as it does not require any of the parametric working models to well approximate the true models.
\end{example}

\section{Dynamic treatment effect estimation using DNNs}\label{sec:DTE}

In this section, we consider the estimation of the dynamic treatment effect (DTE) introduced in Example \ref{ex:DTE}.

\subsection{The sequential doubly robust estimate of DTE using DNNs}

\begin{algorithm} [h!] \caption{The doubly robust DNN estimator for $\mu^0$}\label{alg:DR-mu}
\begin{algorithmic}[1]
\Require Observations $\S':=(\bZ_i)_{i\in \mathcal I'}=(\bS_{1i}, T_{1i}, \bS_{2i}, T_{2i}, Y_i)_{i\in \mathcal I'}$, the treatment path of interest $\bt=(t_1,t_2)=(1,1)$, the training indices $\mathcal I'\subseteq\{1,\dots,N\}$, and a pair $(L,H)$.
\State Split $\S'$ into 2 equal-sized parts $(\S_1',\S_2')$, indexed by $(\mathcal I_1',\mathcal I_2')$, with $n:=|\mathcal I'|/2$ assuming to be an integer.
\State Construct $\rhohat^{(2)}$ using the subset of $\S_2'$ with the treatment path $t_1=1$.
\Comment {{\it Propensity for time two}}
\State Construct $\nuhat^{(2)}$ using the subset of $\S_2'$ with the treatment path $\bt=(1,1)$.
\Comment{{\it Outcome for time two}}
\State Construct the doubly robust outcomes for the samples $\S_1'$:
\begin{align}\label{def:Yhat}
\Yhat_i:=\nuhat^{(2)}(\bSbar_{2i})+\frac{T_{2i}\{Y_i(1,1)-\nuhat^{(2)}(\bSbar_{2i})\}}{\rhohat^{(2)}(\bSbar_{2i})}.
\end{align}
\State Construct a DNN estimate of the CATE based on training samples $(\bX_i,\Yhat_i)_{i\in\mathcal I_1'}$:
$$\muhat^1\in\arg\min_{f\in\mathcal F_{\mbox{\tiny MLP}}(L,H):\|f\|_\infty\leq2M}n^{-1}\sum_{i\in\mathcal I_1'}T_{1i}\left\{\Yhat_i-f(\bS_{1i})\right\}^2.$$
\State Exchange subsamples $\S_1'$ and $\S_2'$, repeat Steps 2-4 above and obtain another estimate $\muhat^2$.\\
\Return The doubly robust DNN estimator for $\mu^0$, $\muhat=(\muhat^1+\muhat^2)/2$.
\end{algorithmic}
\end{algorithm}

Let $\S=(\bZ_i)_{i=1}^N=(\bS_{1i},T_{1i},\bS_{2i},T_{2i},Y_i)_{i=1}^N$ be i.i.d. samples, and let $\bZ=(\bS_1,T_1,\bS_2,T_2,Y)$ be an independent copy of $\S$ with $\bS_t\in\R^{d_t}$ and $T_t\in\{0,1\}$ representing the covariate vector and treatment indicator at the $t$-th exposure, respectively, for each $t\in\{1,2\}$. Define $\bSbar_2:=(\bS_1^\top,\bS_2^\top)^\top$ and $\bar d:=d_1+d_2$. Consider the potential outcome framework, and let $Y(\bt)$ denote the potential outcome an individual would have received when exposed to treatment path $\bt=(t_1,t_2)\in\{0,1\}^2$, while the observed outcome is $Y\in\R$. The dynamic treatment effect (DTE) between two treatment paths $\ba,\ba'\in\{0,1\}^2$ is defined as $\theta_{\mbox{\tiny DTE}}:=E\{Y(\ba)-Y(\ba')\}$, which evaluates the average difference in the treatment effect between two paths over the entire population. In the following, we focus on the estimation of $\theta:=E\{Y(1,1)\}$ since the same procedure can be extended to the estimation of the counterfactual mean $E\{Y(\bt)\}$ for any $\bt\in\{0,1\}^2$, and the DTE can be estimated as the difference between two counterfactual means.

We first define the relevant nuisance functions necessary to identify the parameter of interest. At the first exposure, denote $\mu^0(\bs_1):=E\{Y(1,1)\mid\bS_1=\bs_1,T_1=1\}$ and $\pi^0(\bs_1):=P(T_1=1\mid\bS_1=\bs_1)$ for any $\bs\in\R^{d_1}$ as the outcome regression and propensity score functions, respectively. Similarly, at the second exposure, define $\nu^0(\bsbar_2):=E\{Y(1,1)\mid\bSbar_2=\bsbar_2,T_1=T_2=1\}$ and $\rho^0(\bsbar_2):=P(T_2=1\mid\bSbar_2=\bsbar_2,T_1=1)$ for any $\bsbar_2\in\R^{\bar d}$. Assume the following identification conditions, which are standard in the dynamic causal inference literature; see, e.g., \cite{murphy2003optimal,robins2000marginal}.

\begin{assumption}[Identification conditions]\label{cond:seq-ign} 
Let the following conditions hold: (a) Sequential ignorability: $Y(1,1)\independent T_1\mid\bS_1$ and $Y(1,1)\independent T_2\mid(\bSbar_2,T_1=1)$; (b) Consistency: $Y=Y(T_1,T_2)$; (c) Positivity/Overlap: $\|1/\pi^0\|_\infty,\|1/\rho^0\|_\infty\leq M$ with some constant $M>0$.
\end{assumption}

Under Assumption \ref{cond:seq-ign}, $\theta=E\{Y(1,1)\}$ can be identified through the doubly robust representation \eqref{rep:DTE}. Additionally, note that the propensity score functions $\pi^0$ and $\rho^0$ are directly identifiable by observable variables based on their definitions. Furthermore, we can identify the outcome regression function at the second exposure as $\nu^0(\bsbar_2)=E(Y\mid\bSbar_2=\bsbar_2,T_1=T_2=1)$ under Assumption \ref{cond:seq-ign}. Therefore, the nuisance functions $\pi$, $\rho$, and $\nu$ can be directly estimated using the observed samples. Let $\pihat$, $\rhohat$, and $\nuhat$ be the estimates at the first learning stage. Similar to Section \ref{sec:CATE}, we also allow for flexible choices for such nuisance estimates. 

\begin{algorithm} [h!] \caption{The sequential double machine learning estimator for $\theta$}\label{alg:DR-theta}
\begin{algorithmic}[1]
\Require Observations $\S=(\bZ_i)_{i=1}^N=(\bS_{1i}, T_{1i}, \bS_{2i}, T_{2i}, Y_i)_{i=1}^N$ and the treatment path of interest $\bt=(t_1,t_2)=(1,1)$.
\State For any fixed integer $K\geq2$, split the sample $\S$ into $K$ equal-sized parts $(\S_k)_{k=1}^{K}$ randomly, indexed by $I_k$. Define $\S_{-k}:=(Z_i)_{i\in I\setminus I_k}$.
\For {$k \in \{1,\cdots, K\}$}
\State Let $\mathcal{I}$ be a subset of indices of $\mathcal I_{-k}$ with the treatment path $\bt=(1,1)$.
\State Let $\mathcal{I}_1$ be a subset of indices of $\mathcal I_{-k}$ with the treatment path $t_1=1$.
\State Construct $\pihat_{-k}$ using $\mathcal I_{-k}$ samples.
\Comment{{\it Propensity for time one}}
\State Construct $\rhohat_{-k}$ using $ \mathcal{I}_1$ samples. 
\Comment {{\it Propensity for time two}}
\State Construct $\nuhat_{-k}$ using $ \mathcal{I}$ samples.
\Comment{{\it Outcome for time two}}
\State Construct $\muhat_{-k}$ as in Algorithm \ref{alg:DR-mu} using $ \mathcal{I}_1$ samples.
\Comment{{\it Outcome for time one}}
\State Construct $\thetahat^{(k)}=|I_k|^{-1}\sum_{i\in I_k}\psi(\bZ_i;\etahat_{-k})$, where $\etahat_{-k}=(\rhohat_{-k},\nuhat_{-k},\pihat_{-k},\muhat_{-k})$ and
$$\psi(\bZ;\eta):=\mu(\bS_1)+\frac{T_1\{\nu(\bSbar_2)-\mu(\bS_1)\}}{\pi(\bS_1)}+\frac{T_1T_2\{Y-\nu(\bSbar_2)\}}{\pi(\bS_1)\rho(\bSbar_2)}\;\;\mbox{for any}\;\;\eta=(\rho,\nu,\pi,\mu).$$
\EndFor\\
\Return The sequential double machine learning estimator for $\theta$, $\thetahat=K^{-1}\sum_{k=1}^{K}\thetahat^{(k)}$.
\end{algorithmic}
\end{algorithm}

The remaining nuisance function $\mu$ is not directly identifiable by observable variables, as the potential outcome $Y_i(1,1)$ is unobservable among the group $T_{1i}=1$. We consider an additional doubly robust representation for $\mu^0$, \eqref{rep:mu}, and employ the DNN using doubly robust outcomes \eqref{def:Yhat} at the second learning stage. Lastly, we perform another doubly robust estimation based on the representation \eqref{rep:DTE} at the third learning stage. The procedure is implemented using a cross-fitting technique; see details in Algorithms \ref{alg:DR-mu} and \ref{alg:DR-theta}.

\subsection{Theoretical properties}

In this section, we initially examine the theoretical properties of the proposed DNN estimate for $\mu^0$ using the results from Section \ref{sec:imp-DNN}. Subsequently, we present general theories and characterize the asymptotic behavior of the estimator for $\theta$. To establish the desired results, we first assume the following conditions.

\begin{assumption}[Estimation errors]\label{cond:converge-rate}
Let $a_N,c_N,d_N,\delta_N\geq0$ be sequences satisfying
\begin{align}
E_\bZ\left[T_1T_2\left\{\nuhat(\bSbar_2)-\nu^0(\bSbar_2)\right\}^{2}\right]&=O_p(a_N^2),\label{def:aN}\\
E_\bZ\left[\{\pihat(\bS_1)-\pi^0(\bS_1)\}^2\right]&=O_p(c_N^2),\label{def:cN}\\
E_\bZ\left[T_1\left\{\rhohat(\bSbar_2)-\rho^0(\bSbar_2)\right\}^{2}\right]&=O_p(d_N^2),\label{def:dN}\\
E_\bZ\left[T_1T_2\left\{\nuhat(\bSbar_2)-\nu^0(\bSbar_2)\right\}^2\left\{\rhohat(\bSbar_2)-\rho^0(\bSbar_2)\right\}^2\right]&=O_p(\delta_N^4).\label{def:deltaN}
\end{align}
\end{assumption}

\begin{assumption}[Boundedness]\label{cond:bound}
Let $\bS_1\subseteq[-1,1]^{d_1}$ and $\|Y\|_\infty\leq M$. Additionally, let $\|\nuhat\|_{\infty,P_\bZ}=O_p(1)$, $\|1/\rhohat\|_{\infty,P_\bZ}=O_p(1)$ and $\|1/\pihat\|_{\infty,P_\bZ}=O_p(1)$.
\end{assumption}

\begin{assumption}[Smoothness]\label{cond:smooth}
Assume that $\mu^0(\bs_1)$ can be approximated by a smooth function $\mu_Q(\bs_{1,Q}):\R^{q_\mu}\mapsto\R$ that $\|\mu^0(\bS_1)-\mu_Q(\bS_{1,Q})\|_\infty\leq r_N$, where $\bs_{1,Q}:=(\bs_{1j})_{j\in Q}$, $\bS_{1,Q}:=(\bS_{1j})_{j\in Q}$, $|Q|=q_\mu\geq1$ is a constant, $r_N=o(1)$, and $\mu_Q$ lies in the H\"{o}lder ball with some constant smoothness parameter $\beta_\mu\geq0$ that $\mu_Q\in\mathcal W^{\beta_\mu,\infty}([-1,1]^{q_\mu})$, \eqref{def:Holder}.
\end{assumption}

Assumptions \ref{cond:converge-rate}-\ref{cond:smooth} are analogous to Assumptions \ref{cond:converge-rate-CATE}-\ref{cond:smooth-CATE} under the dynamic setup. In Assumption \ref{cond:converge-rate}, in addition to the product rate condition \eqref{def:deltaN}, which has an analogous version in Assumption \ref{cond:converge-rate-CATE}, we also denote the convergence rates of the nuisance estimates $\nuhat$, $\pihat$, and $\rhohat$ as $a_N$, $c_N$, and $d_N$, respectively. Note that we do not require $a_N, c_N, d_N = o(1)$; that is, the nuisance models are possibly misspecified.

The following lemma characterizes the error from the doubly robust outcomes when estimating $\mu^0$ in the second learning stage.
\begin{lemma}\label{lemma:imput-err}
Let Assumptions \ref{cond:seq-ign}, \ref{cond:converge-rate}, and \ref{cond:bound} hold. Let $\Yhat$ be an independent copy of \eqref{def:Yhat} and $Y^\#$ is defined as in \eqref{rep:mu}. Then,
\begin{align*}
\Yhat-Y^\#=\Delta_1+\Delta_2,
\end{align*} 
with some $\Delta_1,\Delta_2\in\R$ satisfying
\begin{align*}
E_\bZ(\Delta_1\mid\bS_1,T_1=1)=0\;\;\text{almost surely}\;\;\text{and}\;\;E_\bZ(\Delta_2^2\mid T_1=1)=O_p(\delta_N^4).
\end{align*}
\end{lemma}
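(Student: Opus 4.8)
The plan is to write both $\Yhat$ and $Y^\#$ as values of the single doubly robust score map $\psi(\bZ;\nu,\rho)=\nu(\bSbar_2)+T_2\{Y-\nu(\bSbar_2)\}/\rho(\bSbar_2)$, evaluated at the estimated pair $(\nuhat,\rhohat)$ and at the truth $(\nu^0,\rho^0)$, and then to expand their difference algebraically. First I would invoke consistency: the $T_2$-weighted term contributes only on $\{T_2=1\}$, and since the lemma conditions on $T_1=1$, consistency gives $Y=Y(1,1)$ there, so \eqref{def:Yhat} equals $\psi(\bZ;\nuhat,\rhohat)$ and \eqref{rep:mu} equals $\psi(\bZ;\nu^0,\rho^0)$. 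Because $\Yhat$ is an independent copy, $\nuhat$ and $\rhohat$ are built from a training sample independent of the fresh draw $\bZ$; under $E_\bZ$ they act as fixed deterministic functions, which is what will let me pull them out of conditional expectations.

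Writing $\Delta\nu:=\nuhat(\bSbar_2)-\nu^0(\bSbar_2)$ and $\Delta\rho:=\rhohat(\bSbar_2)-\rho^0(\bSbar_2)$, a direct computation over a common denominator (no remainder estimate is needed — the identity is exact) yields
\[
\Yhat-Y^\#=\Big(1-\tfrac{T_2}{\rho^0}\Big)\Delta\nu-\tfrac{T_2(Y-\nu^0)}{(\rho^0)^2}\Delta\rho+T_2\,\frac{\rho^0\,\Delta\nu\,\Delta\rho+(Y-\nu^0)(\Delta\rho)^2}{(\rho^0)^2\,\rhohat},
\]
where $\nu^0,\rho^0$ abbreviate $\nu^0(\bSbar_2),\rho^0(\bSbar_2)$. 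The first two terms form the linear (Gateaux) part and the last fraction is the exact second-order remainder.

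The main obstacle, and the crux of the argument, is that the naive split ``linear part $=\Delta_1$, remainder $=\Delta_2$'' fails: the remainder contains a pure $(\Delta\rho)^2$ piece whose conditional second moment is of order $E_\bZ[T_1T_2(\Delta\rho)^4]$, which is \emph{not} controlled by $\delta_N^4$ — the bound \eqref{def:deltaN} governs only the cross moment $E_\bZ[T_1T_2(\Delta\nu)^2(\Delta\rho)^2]$, and indeed if $\nuhat=\nu^0$ then $\delta_N=0$ while the $(\Delta\rho)^4$ term need not vanish. The key observation is that this $(\Delta\rho)^2$ piece still has conditional mean zero, since it carries the factor $T_2(Y-\nu^0)$ and $E_\bZ\{Y-\nu^0(\bSbar_2)\mid\bSbar_2,T_1=T_2=1\}=0$ by the definition of $\nu^0$. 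I would therefore set $\Delta_2:=T_2\,\Delta\nu\,\Delta\rho/(\rho^0\rhohat)$ and $\Delta_1:=(\Yhat-Y^\#)-\Delta_2$, i.e.\ absorb the offending quadratic piece into $\Delta_1$ rather than $\Delta_2$.

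To verify $E_\bZ(\Delta_1\mid\bS_1,T_1=1)=0$ I would condition first on $(\bSbar_2,T_1=1)$ and use the tower property. There the factors $\Delta\nu,\Delta\rho,1/\rhohat$ are fixed functions of $\bSbar_2$; the $\Delta\nu$-term vanishes because $E_\bZ(T_2\mid\bSbar_2,T_1=1)=\rho^0(\bSbar_2)$ forces $E_\bZ\{1-T_2/\rho^0\mid\bSbar_2,T_1=1\}=0$, and the two remaining terms each carry $T_2(Y-\nu^0)$, whose conditional mean equals $\rho^0\cdot E_\bZ\{Y-\nu^0\mid\bSbar_2,T_1=T_2=1\}=0$. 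Averaging over $\bSbar_2$ given $(\bS_1,T_1=1)$ then gives the claim. Finally, for the remainder bound I would use positivity (Assumption~\ref{cond:seq-ign}, $\|1/\rho^0\|_\infty\le M$) together with Assumption~\ref{cond:bound} ($\|1/\rhohat\|_{\infty,P_\bZ}=O_p(1)$) to bound $1/(\rho^0\rhohat)$ uniformly by $O_p(1)$, so that $\Delta_2^2\,\mathbbm1_{T_1=1}=O_p(1)\cdot T_1T_2(\Delta\nu)^2(\Delta\rho)^2$; dividing by $P(T_1=1)=E_\bZ\{\pi^0(\bS_1)\}\ge 1/M$ and applying \eqref{def:deltaN} delivers $E_\bZ(\Delta_2^2\mid T_1=1)=O_p(\delta_N^4)$, as required. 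Note that the clean split leaves no $(Y-\nu^0)$ factor in $\Delta_2$, so this last step needs only the bounds on $1/\rho^0$ and $1/\rhohat$.
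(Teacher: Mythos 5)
Your proof is correct and, after simplification, your $\Delta_1$ and $\Delta_2$ coincide exactly with the paper's own decomposition — the paper directly defines $\Delta_{1,1}=\{1-T_2/\rho^0\}\{\nuhat^{(2)}-\nu^0\}$, $\Delta_{1,2}=\{T_2/\rhohat^{(2)}-T_2/\rho^0\}\{Y(1,1)-\nu^0\}$, and the cross term $\Delta_2$ (your version even carries the correct sign, whereas the paper's displayed $\Delta_2$ has a sign typo) — and your verification steps (tower-property conditioning on $(\bSbar_2,T_1=1)$ for the mean-zero claim; the bounds $\|1/\rho^0\|_\infty\le M$, $\|1/\rhohat^{(2)}\|_{\infty,P_\bZ}=O_p(1)$, and $P(T_1=1)\ge 1/M$ for the second-moment claim) match the paper's. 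The only difference is presentational: you reach the decomposition via a second-order expansion in $\Delta\rho$ and then reabsorb the mean-zero $(Y-\nu^0)(\Delta\rho)^2$ piece into $\Delta_1$, whereas the paper writes the cross-term split down directly.
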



Similar to Lemma \ref{lemma:imput-err-CATE}, the first-order estimation error $\Delta_1$ has a conditional mean of zero and therefore does not contribute to the convergence rate of $\muhat$; only the smaller second-order error $\Delta_2$ contributes. These results are characterized in the theorem below, leveraging the general nested DNN results in Theorem \ref{thm:imp-MLP}.

\begin{theorem}\label{cor:mu-DR}
Let Assumptions \ref{cond:seq-ign}, \ref{cond:converge-rate}, \ref{cond:bound}, and \ref{cond:smooth} hold, $d_1=o(n/\log^5n)$. Choose $L$ and $H$ as in Theorem \ref{thm:imp-MLP} with $p=d_1$, $q=q_\mu$, and $\beta=\beta_\mu$. Then, as $N\to\infty$,
\begin{align}
E_\bZ\left[T_1\{\muhat(\bS_1)-\mu^0(\bS_1)\}^2\right]=O_p\left(\bar b_N^2+\delta_N^4\right)\;\;\mbox{and}\;\;\bar b_N^2:=w_{N,d_1}(q_\mu,\beta_\mu)+r_N^2,\label{rate:muhat}
\end{align}
where the function $w_{N,p}$ is defined in \eqref{def:w_Nd}.
\end{theorem}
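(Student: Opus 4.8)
The plan is to recognize Theorem \ref{cor:mu-DR} as a direct instantiation of the general nested-DNN theory of Theorem \ref{thm:imp-MLP}, specialized to the second learning stage of the DTE problem, and to supply the verification of that theorem's hypotheses using Lemma \ref{lemma:imput-err} together with Assumptions \ref{cond:seq-ign}, \ref{cond:bound}, and \ref{cond:smooth}. First I would set up the correspondence with the framework of Section \ref{sec:imp-DNN-theory}: take the sub-group indicator to be $R=T_1$, the covariate to be $\bX=\bS_1$ (so $p=d_1$), the target outcome $Y^\#$ as in the representation \eqref{rep:mu}, and the imputed outcome $\Yhat$ as in \eqref{def:Yhat}. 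Under this identification the target function is exactly $f^0(\bs_1)=E(Y^\#\mid\bS_1=\bs_1,T_1=1)=\mu^0(\bs_1)$, and each fold-estimate $\muhat^1,\muhat^2$ produced by Algorithm \ref{alg:DR-mu} is a nested MLP estimate of the form \eqref{def:fhat-MLP}. The sample-splitting in Algorithm \ref{alg:DR-mu}, in which $(\rhohat,\nuhat)$ are fitted on one half-sample and the DNN regression is run on the other, guarantees the independence $\S^2\independent\S^1$ required by the theorem.

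Next I would verify the three boundedness requirements inherited from Theorem \ref{thm:imp-DNN}. The covariate bound $\bS_1\subseteq[-1,1]^{d_1}$ is Assumption \ref{cond:bound}. The requirement $\|T_1Y^\#\|_\infty\leq M$ follows by combining $\|Y\|_\infty\leq M$, the positivity $\|1/\rho^0\|_\infty\leq M$ from Assumption \ref{cond:seq-ign}, and the boundedness of $\nu^0$ as a conditional expectation of a bounded outcome: since $Y^\#=\nu^0(\bSbar_2)+T_2\{Y-\nu^0(\bSbar_2)\}/\rho^0(\bSbar_2)$ is then bounded by a constant, the bound can be absorbed into a relabeled $M$. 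The remaining requirement $\|T_1\Delta_1\|_{\infty,P_\bZ}=O_p(1)$ follows because the $\Delta_1$ built in Lemma \ref{lemma:imput-err} is a finite combination of the bounded quantities $Y,\nu^0,1/\rho^0$ with the first-stage estimates, whose sup-norms are $O_p(1)$ by the uniform-boundedness part of Assumption \ref{cond:bound}.

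The decomposition hypothesis is supplied verbatim by Lemma \ref{lemma:imput-err}, which gives $\Yhat-Y^\#=\Delta_1+\Delta_2$ with $E_\bZ(\Delta_1\mid\bS_1,T_1=1)=0$ almost surely and $E_\bZ(\Delta_2^2\mid T_1=1)=O_p(\delta_N^4)$; I therefore set $e_n^2=\delta_N^4$. The sparse-approximation hypothesis \eqref{bound:r_n} of Theorem \ref{thm:imp-MLP} is exactly Assumption \ref{cond:smooth}, with $q=q_\mu$ and $\beta=\beta_\mu$. Choosing $L$ and $H$ as prescribed and noting that $d_1=o(n/\log^5n)$ ensures $d_1$ falls in one of the two admissible regimes of Theorem \ref{thm:imp-MLP}, applying that theorem to each fold yields, for $k\in\{1,2\}$, $E_\bZ[T_1\{\muhat^k(\bS_1)-\mu^0(\bS_1)\}^2]=O_p(w_{N,d_1}(q_\mu,\beta_\mu)+r_N^2+\delta_N^4)$, since $w_{N,d_1}$ is by definition \eqref{def:w_Nd} the sum of the two case-specific statistical errors and hence dominates whichever case applies. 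Passing to $\muhat=(\muhat^1+\muhat^2)/2$ is immediate from the convexity bound $\{\muhat-\mu^0\}^2\leq\tfrac12\{\muhat^1-\mu^0\}^2+\tfrac12\{\muhat^2-\mu^0\}^2$, which preserves the rate after multiplying by $T_1$ and taking expectations, giving $O_p(\bar b_N^2+\delta_N^4)$ as claimed.

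The main obstacle, and essentially the only genuine content beyond bookkeeping, is confirming that the regression restricted to the subgroup $\{T_1=1\}$ is faithfully captured by the $R$-indicator device of the general theorem, and verifying the two sup-norm boundedness claims $\|T_1Y^\#\|_\infty$ and $\|T_1\Delta_1\|_{\infty,P_\bZ}$ under possibly misspecified first-stage models (recall Assumption \ref{cond:converge-rate} does not force $a_N,c_N,d_N=o(1)$). Once these are in place, the remainder is a mechanical translation of Lemma \ref{lemma:imput-err} and Assumption \ref{cond:smooth} into the hypotheses of Theorem \ref{thm:imp-MLP}, so the theorem is best read as a corollary of the general nested-DNN theory rather than an independent argument.
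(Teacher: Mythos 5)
Your proposal is correct and takes essentially the same route as the paper's own proof: identify $R=T_1$, $\bX=\bS_1$, $e_n^2=\delta_N^4$, verify the boundedness hypotheses $\|T_1Y^\#\|_\infty=O(1)$ and $\|T_1\Delta_1\|_{\infty,P_\bZ}=O_p(1)$ from Assumptions \ref{cond:seq-ign} and \ref{cond:bound}, invoke Lemma \ref{lemma:imput-err} for the decomposition, apply Theorem \ref{thm:imp-MLP} to each fold, and average. The only difference is that the paper explicitly re-derives the identity $\mu^0(\bs_1)=E(Y^\#\mid\bS_1=\bs_1,T_1=1)$ (so that $f^0=\mu^0$) via the tower rule and sequential ignorability, whereas you cite it from the representation \eqref{rep:mu} stated in Example \ref{ex:DTE}; since that display is asserted rather than proved in the main text, a fully self-contained argument would include this short verification, but this is a minor point.
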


According to Theorem \ref{cor:mu-DR}, $\muhat$ is consistent as long as $\bar b_N=o(1)$ and $\delta_N=o(1)$. The former holds when $\mu^0$ can be well approximated by a sparse and smooth function, and the latter occurs as long as either $\nu^0$ or $\rho^0$ can be consistently estimated.

The following theorem establishes the consistency and asymptotic normality of the final sequential double machine learning estimator $\thetahat$.

\begin{theorem}\label{thm:normal-DR}
Let Assumptions \ref{cond:seq-ign}, \ref{cond:converge-rate}, \ref{cond:bound}, and \ref{cond:smooth} hold, $d_1=o(N/\log^5N)$. Choose $L$ and $H$ as in Theorem \ref{thm:imp-MLP} with $p=d_1$, $q=q_\mu$, and $\beta=\beta_\mu$. Define $\bar b_N$ as in \eqref{rate:muhat}. Then, as $N\to\infty$,
\begin{align}\label{rate:consistency}
\thetahat-\theta=O_p\left(a_Nd_N+\bar b_Nc_N+\delta_N^2c_N+N^{-1/2}\right).
\end{align}
Additionally, assume $a_N,c_N,d_N,r_N,\delta_N=o(1)$,
\begin{align}\label{rate:prod}
a_Nd_N=o\left(N^{-1/2}\right),\;\;\bar b_Nc_N=o\left(N^{-1/2}\right),\;\;\text{and}\;\;\delta_N^2c_N=o\left(N^{-1/2}\right).
\end{align}
Define $\sigmahat^2:=N^{-1}\sum_{k=1}^K\sum_{i\in\mathcal I_k}\{\psi(\bZ_i;\etahat_{-k})-\thetahat\}^2$ and let $\sigma^2:=\Var\{\psi(\bZ;\eta^0)\}>c$ with some constant $c>0$. Then, as $N\to\infty$, in distribution,
\begin{align}\label{normal:MLP}
\sqrt N \sigma^{-1}(\thetahat-\theta)\to N(0,1)\;\;\text{and}\;\;\sigmahat^2=\sigma^2\{1+o_p(1)\}.
\end{align}
\end{theorem}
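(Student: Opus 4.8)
The plan is to deploy the standard cross-fitting argument for double machine learning: isolate an oracle i.i.d.\ average obeying a central limit theorem, and show that all nuisance-induced remainders are second order thanks to the generalized Neyman orthogonality \eqref{def:CNO} of the score $\psi$. First I would write $\thetahat-\theta=K^{-1}\sum_{k=1}^K(\thetahat^{(k)}-\theta)$ and, on each fold, decompose $\thetahat^{(k)}-\theta=A_k+B_k$ with
$$A_k:=|\mathcal I_k|^{-1}\sum_{i\in\mathcal I_k}\{\psi(\bZ_i;\eta^0)-\theta\},\qquad B_k:=|\mathcal I_k|^{-1}\sum_{i\in\mathcal I_k}\{\psi(\bZ_i;\etahat_{-k})-\psi(\bZ_i;\eta^0)\}.$$
Since the folds partition $\S$ into equal blocks, $K^{-1}\sum_k A_k=N^{-1}\sum_{i=1}^N\{\psi(\bZ_i;\eta^0)-\theta\}$ is an i.i.d.\ average of mean-zero summands: I would first check $E\{\psi(\bZ;\eta^0)\}=\theta$ directly by iterated expectations, using $E(T_2\mid\bSbar_2,T_1=1)=\rho^0$, $E(Y\mid\bSbar_2,T_1=T_2=1)=\nu^0$ and the identity $\mu^0(\bS_1)=E\{\nu^0(\bSbar_2)\mid\bS_1,T_1=1\}$ implied by \eqref{rep:mu}, followed by ignorability. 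With $\sigma^2\in(c,\infty)$ this yields $\sqrt N\,K^{-1}\sum_k A_k\convd N(0,\sigma^2)$.

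The heart of the cross-fitting exploit is that $\etahat_{-k}$ is a function of $\S_{-k}$ only, hence independent of $\{\bZ_i\}_{i\in\mathcal I_k}$. Conditioning on $\S_{-k}$, I would split $B_k=B_k^{(1)}+B_k^{(2)}$, where $B_k^{(2)}:=E_\bZ\{\psi(\bZ;\etahat_{-k})-\psi(\bZ;\eta^0)\mid\S_{-k}\}$ is the conditional bias and $B_k^{(1)}$ is its centered counterpart. Given $\S_{-k}$, $B_k^{(1)}$ averages i.i.d.\ mean-zero terms with conditional variance at most $|\mathcal I_k|^{-1}E_\bZ[\{\psi(\bZ;\etahat_{-k})-\psi(\bZ;\eta^0)\}^2\mid\S_{-k}]$; bounding the inverse propensities and $\nuhat$ via Assumption \ref{cond:bound} and using $a_N,c_N,d_N,\delta_N,\bar b_N=o(1)$ shows this conditional second moment is $o_p(1)$, so $B_k^{(1)}=o_p(N^{-1/2})$ by a conditional Chebyshev inequality.

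Computing $B_k^{(2)}$ is the main obstacle. I would evaluate it by iterated expectations, substituting the true conditional means of $T_1,T_2,Y$ layer by layer. Because $\psi$ satisfies \eqref{def:CNO}, every first-order contribution of $\etahat_{-k}-\eta^0$ cancels, leaving a sum of second-order cross-products: an exposure-two piece governed by $(\nuhat-\nu^0)(\rhohat-\rho^0)$ and an exposure-one piece governed by $(\muhat-\mu^0)(\pihat-\pi^0)$. Each factor carries the indicator weights $T_1$ or $T_1T_2$ inherited from $\psi$ together with bounded multipliers from Assumption \ref{cond:bound}, so Cauchy--Schwarz aligns the products with the rate quantities of Assumption \ref{cond:converge-rate}: the exposure-two piece is $O_p(a_Nd_N)$, while the exposure-one piece is $O_p(\{E_\bZ[T_1(\muhat-\mu^0)^2]\}^{1/2}c_N)$. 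The decisive step is to feed in the $\muhat$-rate $\{E_\bZ[T_1(\muhat-\mu^0)^2]\}^{1/2}\lesssim\bar b_N+\delta_N^2$ from Theorem \ref{cor:mu-DR}, which splits the exposure-one piece into $\bar b_Nc_N+\delta_N^2c_N$; the triple product $\delta_N^2c_N$ is exactly the carryover of the second-stage product error into the nested estimate $\muhat$. Hence $|B_k^{(2)}|=O_p(a_Nd_N+\bar b_Nc_N+\delta_N^2c_N)$.

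Summing the three contributions over $k$ gives \eqref{rate:consistency}. Under \eqref{rate:prod} the bias satisfies $K^{-1}\sum_k B_k^{(2)}=o_p(N^{-1/2})$, so $\sqrt N(\thetahat-\theta)=\sqrt N\,K^{-1}\sum_k A_k+o_p(1)\convd N(0,\sigma^2)$, which is \eqref{normal:MLP} after dividing by $\sigma$ and applying Slutsky. For $\sigmahat^2$ I would expand $\psi(\bZ_i;\etahat_{-k})-\thetahat=\{\psi(\bZ_i;\eta^0)-\theta\}+\{\psi(\bZ_i;\etahat_{-k})-\psi(\bZ_i;\eta^0)\}-(\thetahat-\theta)$, square, and average over all folds: the leading square converges to $\sigma^2$ by the law of large numbers, the averaged squared nuisance deviation is $o_p(1)$ by the conditional second-moment bound used for $B_k^{(1)}$, $(\thetahat-\theta)^2=o_p(1)$, and the cross terms are controlled by Cauchy--Schwarz, yielding $\sigmahat^2=\sigma^2\{1+o_p(1)\}$.
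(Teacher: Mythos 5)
Your proposal is correct and follows essentially the same route as the paper's proof: the same fold-wise decomposition into an oracle average plus a nuisance deviation, the same conditional bias/variance split exploiting cross-fitting independence, the same iterated-expectation computation of the conditional bias in which the first-order term vanishes (via the identity $E\{\nu^0(\bSbar_2)\mid\bS_1,T_1=1\}=\mu^0(\bS_1)$) leaving only the $(\nuhat-\nu^0)(\rhohat-\rho^0)$ and $(\muhat-\mu^0)(\pihat-\pi^0)$ cross-products bounded by Cauchy--Schwarz, the same substitution of the nested rate $\{E_\bZ[T_1(\muhat-\mu^0)^2]\}^{1/2}\lesssim\bar b_N+\delta_N^2$ from Theorem \ref{cor:mu-DR}, and the same CLT and variance-consistency conclusions. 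One minor bookkeeping point: the consistency claim \eqref{rate:consistency} is stated without assuming the nuisance rates are $o(1)$, so for that part you should bound your centered term $B_k^{(1)}$ by $O_p(N^{-1/2})$ using boundedness alone (as the paper does in its Lemma on consistency), rather than invoking the $o(1)$ rates to get $o_p(N^{-1/2})$; this weaker bound is absorbed by the $N^{-1/2}$ term already present in \eqref{rate:consistency}.
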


\begin{remark}[Consistency and asymptotic normality]
As far as we are aware, Theorem \ref{thm:normal-DR} also represents the first consistent and asymptotically normal result for DTE estimation that allows for non-parametric models with a growing number of dimension.

We first discuss when the DTE estimator is consistent. Under Assumption \ref{cond:bound}, we can choose $\delta_N^2=O(a_N\land d_N)$. Therefore, together with \eqref{rate:consistency}, $\thetahat$ is consistent when all the following conditions hold: (a) $a_N=o(1)$ or $d_N=o(1)$, i.e., either $\nu^0$ or $\rho^0$ can be consistently estimated, (b) $\bar b_N=o(1)$ or $c_N=o(1)$, i.e., either $\mu^0$ can be consistently estimated under an oracle case with observable $Y^\#$ or $\pi^0$ can be consistently estimated.

As for asymptotic normality, we require the product rate conditions $a_Nd_N+\bar b_Nc_N=o(N^{-1/2})$. Since the estimation of $\mu^0$ is based on doubly robust outcomes \eqref{def:Yhat}, which involve nuisance estimates $\nuhat$ and $\rhohat$, its estimation error also depends on the estimation error of such nuisance functions, as demonstrated in Theorem \ref{cor:mu-DR}. As a result, we also require an additional product rate condition $\delta_N^2c_N=o(N^{-1/2})$, where $\delta_N^2$ itself, defined in \eqref{def:deltaN}, is also a product rate capturing the estimation error at the first learning stage. As $\delta_N^2=O(a_N\land d_N)$, a sufficient condition for the last product rate condition to hold is either $a_Nc_N=o(N^{-1/2})$ or $c_Nd_N=o(N^{-1/2})$. When a tighter upper bound is obtained for $\delta_N$, such a requirement can be further relaxed, as seen in Example \ref{ex:Lasso} below.
\end{remark}

\subsection{Examples}\label{sec:examples}

In the following, we provide specific examples for the nuisance estimates $(\pihat, \rhohat, \nuhat)$ and discuss the estimation and inference results for the DTE. Similar to Section \ref{sec:examples-CATE}, assume that $\mu^0$ is exactly sparse for the sake of simplicity, i.e., $r_N$ defined in Assumption \ref{cond:smooth} is exactly zero.

\begin{example}[First-stage learning through DNNs]\label{ex:DNN}
Consider DNN estimates at the first learning stage. Let $\pi^0$, $\rho^0$, and $\nu^0$ be exactly sparse functions with sparsity levels of $q_\pi$, $q_\rho$, and $q_\nu$ and H\"older smoothness levels of $\beta_\pi$, $\beta_\rho$, and $\beta_\nu$, respectively. Construct the nuisance estimates $\pihat_{-k}$, $\rhohat_{-k}$, and $\nuhat_{-k}$ in Steps 5-7 of Algorithm \ref{alg:DR-theta} as follows:
\begin{align*}
\fhat_{-k}^\pi&\in\arg\min_{f\in\mathcal F_{\mbox{\tiny MLP}}(L_{\pi},H_{\pi}):\|f\|_\infty\leq2M}|\mathcal I_{-k}|^{-1}\sum_{i\in\mathcal I_{-k}}\left(-T_{1i}f(\bS_{1i})+\log[1+\exp\{f(\bS_{1i})\}]\right),\\
\fhat_{-k}^\rho&\in\arg\min_{f\in\mathcal F_{\mbox{\tiny MLP}}(L_{\rho},H_{\rho}):\|f\|_\infty\leq2M}|\mathcal I_{-k}|^{-1}\sum_{i\in\mathcal I_{-k}}T_{1i}\left(-T_{2i}f(\bSbar_{2i})+\log[1+\exp\{f(\bSbar_{2i})\}]\right),\\
\nuhat_{-k}&\in\arg\min_{f\in\mathcal F_{\mbox{\tiny MLP}}(L_{\nu},H_{\nu}):\|f\|_\infty\leq2M}n^{-1}\sum_{i\in\mathcal I^k}T_{1i}T_{2i}\left\{Y_i-f(\bSbar_{2i})\right\}^2,
\end{align*}
with $\pihat_{-k}=\phi(\fhat_{-k}^\pi)$, $\rhohat_{-k}=\phi(\fhat_{-k}^\rho)$, and $\phi(u)=\exp(u)/\{1+\exp(u)\}$ for any $u\in\R$. Construct nuisance estimates $\rhohat^{(2)}$ and $\nuhat^{(2)}$ of Algorithm \ref{alg:DR-mu} analogously using different training samples. Similar to Example \ref{ex:DNN-CATE}, the convergence rates \eqref{def:aN}-\eqref{def:dN} can be chosen as $a_N^2=w_{N,\bar d}(q_\nu,\beta_\nu)$, $c_N^2=w_{N,d_1}(q_\pi,\beta_\pi)$, and $d_N^2=w_{N,\bar d}(q_\rho,\beta_\rho)$, where the function $w_{N,p}$ is defined in \eqref{def:w_Nd}. Since $\delta_N^2=O(a_N\land d_N)$, \eqref{rate:consistency} leads to a consistency rate
\begin{align*}
\thetahat-\theta&=O_p\left(N^{-1/2}+w_{N,\bar d}^{1/2}(q_\nu,\beta_\nu)w_{N,\bar d}^{1/2}(q_\rho,\beta_\rho)\right)\\
&\qquad+O_p\left(\left\{w_{N,d_1}^{1/2}(q_\mu,\beta_\mu)+w_{N,\bar d}^{1/2}(q_\nu,\beta_\nu)\land w_{N,\bar d}^{1/2}(q_\rho,\beta_\rho)\right\}w_{N,d_1}^{1/2}(q_\pi,\beta_\pi)\right).
\end{align*}
The asymptotic normality \eqref{normal:MLP} requires $a_Nd_N+w_{N,d_1}^{1/2}(q_\mu,\beta_\mu)c_N+\delta_N^2c_N=o(N^{-1/2})$. For the sake of simplicity, consider the case with $d_1\ll N^\frac{q_\mu}{2\beta_\mu+2q_\mu}\land N^\frac{q_\pi}{2\beta_\pi+2q_\pi}$ and $\bar d\ll N^\frac{q_\nu}{2\beta_\nu+2q_\nu}\land N^\frac{q_\rho}{2\beta_\rho+2q_\rho}$. Then, \eqref{normal:MLP} holds when (a) $\beta_\mu\beta_\pi>q_\mu q_\pi$, (b) $\beta_\nu\beta_\rho>q_\nu q_\rho$, and (c) $\beta_\nu\beta_\pi>q_\nu q_\pi$.
\end{example}

\begin{example}[First-stage learning through Lasso and DNNs]\label{ex:Lasso}
In the following, we explore scenarios where a significant number of covariates are collected prior to the second exposure, specifically focusing on cases where $\bar d=d_1+d_2\gg N\gg d_1$. In such instances, where the total dimension substantially exceeds the sample size, DNNs become less suitable for estimating the nuisance functions associated with the second exposure, namely $\nu^0$ and $\rho^0$. Instead, we consider linear and logistic regression with $\ell_1$-regularization. For any $\bsbar_2\in\R^{\bar d}$ and $k\leq K$, let $\nuhat_{-k}(\bsbar_2)=\bsbar_2^\top\bbetahat_\nu^{-k}$ and $\rhohat_{-k}(\bsbar_2)=\phi(\bsbar_2^\top\bbetahat_\rho^{-k})$, where
\begin{align*}
\bbetahat_\nu^{-k}&\in\arg\min_{\bbeta\in\R^d}n^{-1}\sum_{i\in\mathcal I_{-k}}T_{1i}T_{2i}\left(Y_i-\bSbar_{2i}^\top\bbeta\right)^2+\lambda_\nu\|\bbeta\|_1,\\
\bbetahat_\rho^{-k}&\in\arg\min_{\bbeta\in\R^d}n^{-1}\sum_{i\in\mathcal I_{-k}}T_{1i}\left[-T_{2i}\bSbar_{2i}^\top\bbeta+\log\{1+\exp(\bSbar_{2i}^\top\bbeta)\}\right]+\lambda_\rho\|\bbeta\|_1,
\end{align*}
with tuning parameters $\lambda_\nu,\lambda_\rho\geq0$. Let $\bbeta_\nu^*$ and $\bbeta_\rho^*$ represent the best population linear/logistic slopes, with sparsity levels $\|\bbeta_\nu^*\|_0\leq s_\nu$ and $\|\bbeta_\rho^*\|_0\leq s_\rho$, respectively. Additionally, let the linear and logistic approximation errors satisfy $E[T_1T_2\{\nu^0(\bSbar_2)-\bSbar_2^\top\bbeta_\nu^*\}^2]\leq e_\nu^2$ and $E[T_1\{\rho^0(\bSbar_2)-\phi(\bSbar_2^\top\bbeta_\rho^*)\}^2]\leq e_\rho^2$. Then, we can set $a_N$, $d_N$, and $\delta_N$, defined in \eqref{def:aN}, \eqref{def:dN}, and \eqref{def:deltaN}, as follows: $a_N^2=s_\nu\log\bar d/N+e_\nu^2$, $d_N^2=s_\rho\log\bar d/N+e_\nu^2$, and $\delta_N^2=a_Nd_N$. 

Considering that the dimension of covariates at the first exposure $d_1$ is relatively small compared to $N$, we employ DNNs to estimate $\pi^0$ (as well as $\mu^0$ in the second learning stage). As discussed in Example \ref{ex:DNN}, if $\pi^0$ is sparse and a smooth function with sparsity level $q_\pi$ and H\"older smoothness level $\beta_\pi$, we can set $c_N$, defined in \eqref{def:cN}, as $c_N^2=w_{N,d_1}(q_\pi,\beta_\pi)$. According to \eqref{rate:consistency}, we have the following rate of estimation for the final parameter of interest:
\begin{align*}
\thetahat-\theta&=O_p\left(\left(\sqrt\frac{s_\nu\log\bar d}{N}+e_\nu\right)\left(\sqrt\frac{s_\rho\log\bar d}{N}+e_\rho\right)\right)\\
&\qquad+O_p\left(w_{N,d_1}^{1/2}(q_\mu,\beta_\mu)w_{N,d_1}^{1/2}(q_\pi,\beta_\pi)+N^{-1/2}\right).
\end{align*}
Additionally, let $e_\nu=e_\rho=0$, then the asymptotic normality \eqref{normal:MLP} is ensured when $s_\nu s_\rho=o(N/\log^2\bar d)$ and $w_{N,d_1}(q_\mu,\beta_\mu)w_{N,d_1}(q_\pi,\beta_\pi)=o(N^{-1})$. The latter condition holds, for instance, when (a) $d_1\ll N^\frac{q_\mu}{2\beta_\mu+2q_\mu}\land N^\frac{q_\pi}{2\beta_\pi+2q_\pi}$ and $\beta_\mu\beta_\pi>q_\mu q_\pi$, or (b) $N^\frac{q_\mu}{2\beta_\mu+2q_\mu}\lor N^\frac{q_\pi}{2\beta_\pi+2q_\pi}\ll d_1\ll N^{\frac{4\beta_\mu\beta_\pi-q_\mu q_\pi}{8\beta_\mu\beta_\pi+2\beta_\mu q_\pi+2\beta_\pi q_\mu}-c}$ with any constant $c>0$.
\end{example}


\section{Discusssion}\label{sec:dis}

Traditional non-parametric regression methods have demonstrated effectiveness in addressing low-dimensional problems, offering minimax optimal convergence rates under smooth functional classes. However, they encounter significant challenges when confronted with large covariate dimensions $p$, known as the ``curse of dimensionality.'' While deep neural networks (DNNs) are frequently employed in such scenarios, their empirical success often lacks theoretical assurance. This research aims to bridge this gap by providing convergence rates for DNNs as $p$ grows with the sample size, assuming a sparse structure on the conditional mean function. Unlike traditional non-parametric methods, DNNs have the capability to adaptively learn the sparse structure of the regression function by learning representations from the data. This approach shows promise for addressing the challenges posed by high-dimensional data, particularly for causal inference problems where a large number of covariates are typically collected to reduce confounding bias in observational studies.

We explore the application of DNNs in intricate causal inference problems necessitating multi-stage learning. While our theoretical framework is broad, we specifically demonstrate its effectiveness in estimating the conditional average treatment effect (CATE) and dynamic treatment effect (DTE), representing two- and three-stage learning problems, respectively. These results extend to general multi-stage learning problems, even when more than three learning stages are involved, albeit necessitating additional, similarly detailed proofs.

Our theoretical contributions primarily center on user-friendly multilayer perceptrons, known for their straightforward structures and minimal tuning demands. Future research avenues may explore more sophisticated network architectures to further enhance overall estimation accuracy while maintaining manageable computational complexity.

 

\appendix

\bibliographystyle{plainnat}
\bibliography{ref}


\renewcommand{\thetheorem}{S.\arabic{theorem}}
\renewcommand{\thelemma}{S.\arabic{lemma}}

\clearpage\newpage  
\par\bigskip         
\begin{center}
\textbf{\uppercase{Supplement to ``Causal inference through multi-stage learning and doubly robust deep neural networks''}}
\end{center}

\par\medskip
This supplementary document provides the proofs for the theoretical results presented in the main document, including Theorems \ref{thm:imp-DNN}-\ref{thm:normal-DR}, along with auxiliary lemmas and their proofs. All results and notation are consistent with those in the main text unless stated otherwise. 

To simplify the exposition, we begin by listing some shorthand notations used throughout the supplementary document. For any sequence $f_i$, define $E_n(f_i):=n^{-1}\sum_{i=1}^nf_i$. For any functional class $\mathcal F$, define $R_n\mathcal F:=\sup_{f\in\mathcal F}n^{-1}\sum_{i=1}^n\delta_if(\bZ_i)$, where $\delta_i\in\{-1,1\}$ with equal probability $1/2$. We denote $E_\delta(R_n\mathcal F)$ as the empirical Rademacher complexity, where the expecation is taken conditional on $(Z_i)_{i=1}^n$; the Rademacher complexity, $E(R_n\mathcal F)$, is defined through taking the expectation over both $(Z_i)_{i=1}^n$ and $(\delta_i)_{i=1}^n$. 

\section{Proof of the results in Section \ref{sec:imp-DNN-theory}}

\subsection{Auxiliary lemmas}\label{sec:lemma}
Define $\Delta_{1i}:=\Delta_1(\bZ_i;\S^2)$, $\Delta_{2i}:=\Delta_2(\bZ_i;\S^2)$ and
\begin{align}\label{def:f_bar}
\bar f\in\arg\min_{f\in\mathcal F_\mathrm{DNN}:\|f\|_\infty\leq2M}\|f-f^0\|_\infty.
\end{align}

\begin{lemma}\label{lemma:bound_basic}
Let the assumptions in Theorem \ref{thm:imp-DNN} hold. Then,
\begin{align}
&E_n\left[R_i\{\fhat(\bX_i)-f^0(\bX_i)\}^2\right]\nonumber\\
&\qquad\leq E_n\left[R_i\{f^0(\bX_i)-\bar f(\bX_i)\}^2\right]+2\sqrt{E_n(R_i\Delta_{2i}^2)E_n\left[R_i\{\fhat(\bX_i)-\bar f(\bX_i)\}^2\right]}\nonumber\\
&\qquad\qquad+2(E_n-E_\bZ)\left[R_i\{\fhat(\bX_i)-\bar f(\bX_i)\}\{Y_i^\#-f^0(\bX_i)+\Delta_{1i}\}\right]\label{bound:Enmuhat-mu0}
\end{align}
and
\begin{align*}
E_\bX\left[R\{\fhat(\bX)-f^0(\bX)\}^2\right]\leq E\left[R\{f^0(\bX)-\bar f(\bX)\}^2\right]+W_1+W_2,
\end{align*}
where
\begin{align}
W_1&:=(E_n-E_\bZ)\left[R_i\{\fhat(\bX_i)-\bar f(\bX_i)\}\{2Y_i^\#-\fhat(\bX_i)-\bar f(\bX_i)+2\Delta_{1i}\}\right],\label{def:W1}\\
W_2&:=2\sqrt{E_n(R_i\Delta_{2i}^2)E_n\left[R_i\{\fhat(\bX_i)-\bar f(\bX_i)\}^2\right]}.\label{def:W2}
\end{align}
\end{lemma}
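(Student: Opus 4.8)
The plan is to start from the defining optimality of $\fhat$ in \eqref{def:fhat}. Since the deterministic approximant $\bar f$ from \eqref{def:f_bar} is feasible for the same program, the basic inequality $E_n[R_i\{\Yhat_i-\fhat(\bX_i)\}^2]\le E_n[R_i\{\Yhat_i-\bar f(\bX_i)\}^2]$ holds. Writing $\Yhat_i=Y_i^\#+\Delta_{1i}+\Delta_{2i}$ and expanding each square around $f^0(\bX_i)$ (inserting $\pm f^0(\bX_i)$ in both squares), the common term $\{\Yhat_i-f^0(\bX_i)\}^2$ cancels and, after rearranging, one is left with
\[
E_n[R_i\{\fhat(\bX_i)-f^0(\bX_i)\}^2]\le E_n[R_i\{\bar f(\bX_i)-f^0(\bX_i)\}^2]+2E_n[R_i\{\fhat(\bX_i)-\bar f(\bX_i)\}\{\Yhat_i-f^0(\bX_i)\}].
\]
Splitting $\Yhat_i-f^0(\bX_i)=\{Y_i^\#-f^0(\bX_i)+\Delta_{1i}\}+\Delta_{2i}$ separates the cross term into a ``noise-plus-first-order'' piece and a $\Delta_2$ piece. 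For the $\Delta_2$ piece I would apply Cauchy--Schwarz (using $R_i^2=R_i$) to recover exactly the term $2\sqrt{E_n(R_i\Delta_{2i}^2)E_n[R_i\{\fhat-\bar f\}^2]}$ appearing in \eqref{bound:Enmuhat-mu0} and in \eqref{def:W2}.

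The key step is to center the remaining piece $2E_n[R_i\{\fhat-\bar f\}\{Y_i^\#-f^0+\Delta_{1i}\}]$ by writing it as $2(E_n-E_\bZ)[\cdots]+2E_\bZ[\cdots]$ and showing the population term vanishes. Treating the estimated networks as fixed functions and conditioning on $(\bX,R)$, the factor $\fhat-\bar f$ is $\bX$-measurable, while the definition $f^0(\bx)=E(Y^\#\mid\bX=\bx,R=1)$ gives $E_\bZ\{Y^\#-f^0(\bX)\mid\bX,R=1\}=0$ and the Neyman-type condition \eqref{cond:Deltas} gives $E_\bZ(\Delta_1\mid\bX,R=1)=0$; the indicator $R$ annihilates the contribution on $\{R=0\}$. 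Hence $E_\bZ[R\{\fhat-\bar f\}\{Y^\#-f^0+\Delta_1\}]=0$ for every fixed choice of the two functions, and therefore also when the random $\fhat$ is plugged in, which yields \eqref{bound:Enmuhat-mu0}. This vanishing --- the concrete manifestation of generalized Neyman orthogonality in the nested regression --- is the one step carrying real content, and I expect it to be the main obstacle, chiefly because of the measurability bookkeeping: one must argue the population term is zero as a fixed-function identity and defer the randomness of $\fhat$ to the uniform empirical-process control used later in the proof of Theorem \ref{thm:imp-DNN}.

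For the second, population-norm bound I would pass from $E_n[R_i\{\fhat-f^0\}^2]$ to $E_\bX[R\{\fhat-f^0\}^2]$ via $E_\bX[R\{\fhat-f^0\}^2]=E_n[R_i\{\fhat-f^0\}^2]-(E_n-E_\bZ)[R_i\{\fhat-f^0\}^2]$ (the integrand is free of $Y^\#$, so $E_\bX$ and $E_\bZ$ agree). Substituting \eqref{bound:Enmuhat-mu0} and collecting all empirical-process contributions, the remaining task is the algebraic identity
\[
2D\{Y^\#-f^0+\Delta_1\}-\{\fhat-f^0\}^2=D\{2Y^\#-\fhat-\bar f+2\Delta_1\}-\{\bar f-f^0\}^2,\qquad D:=\fhat-\bar f,
\]
verified by direct expansion. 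Applying $(E_n-E_\bZ)$ recombines the two empirical-process terms into precisely $W_1$ of \eqref{def:W1} minus $(E_n-E_\bZ)[R_i\{\bar f-f^0\}^2]$; the latter cancels against $E_n[R_i\{\bar f-f^0\}^2]$ through $E_n-(E_n-E_\bZ)=E_\bZ$, leaving $E_\bZ[R\{\bar f-f^0\}^2]=E[R\{f^0-\bar f\}^2]$ together with $W_1+W_2$, which is the claimed bound. This final paragraph is pure bookkeeping once the identity above is in hand.
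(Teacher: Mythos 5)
Your proposal is correct and follows essentially the same route as the paper's proof: both rest on the basic optimality inequality comparing $\fhat$ with the feasible $\bar f$, kill the population cross terms $E_\bZ[R\{\fhat-\bar f\}\{Y^\#-f^0\}]$ and $E_\bZ[R\{\fhat-\bar f\}\Delta_1]$ via the tower rule and the conditional-mean-zero condition \eqref{cond:Deltas} (treating $\fhat$ as a fixed function under $E_\bZ$), apply Cauchy--Schwarz to the $\Delta_2$ cross term, and recombine the empirical-process pieces to produce $W_1$ and $W_2$. The only difference is cosmetic bookkeeping --- you expand the basic inequality around $f^0$ directly, whereas the paper routes the algebra through $E_n[R_i\{\fhat(\bX_i)-Y_i^\#\}^2]$ --- and your closing algebraic identity for merging the two empirical-process terms checks out.
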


\begin{lemma}\label{lemma:C1-C2}
For any $t>0$, consider any $r>0$ that satisfies
\begin{align}
&\text{C1}: r^2\geq24ME(R_n\mathcal F_r),\label{def:C1}\\
&\text{C2}: r^2\geq\frac{32M^2t}{n},\label{def:C2}
\end{align}
where 
\begin{align}
\mathcal F_r:=\{g=R(f-\bar f): f\in\mathcal F_\mathrm{DNN}, \|Rf\|_\infty\leq2M, E\{g^2(\bX)\}\leq r^2\}.\label{def:Fr}
\end{align}
Then, with probability at least $1-e^{-t}$,
\begin{align}
\sup_{g\in\mathcal F_r}E_n\{g^2(\bX_i)\}\leq4r^2.\label{bound:supEn}
\end{align}
\end{lemma}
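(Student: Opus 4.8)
The plan is to recenter and reduce \eqref{bound:supEn} to a one-sided deviation bound for the localized empirical process
\[
Z:=\sup_{g\in\mathcal F_r}(E_n-E)\{g^2(\bX)\}.
\]
Because every $g\in\mathcal F_r$ satisfies $E\{g^2(\bX)\}\le r^2$ by \eqref{def:Fr}, we have $\sup_{g\in\mathcal F_r}E_n\{g^2(\bX_i)\}\le r^2+Z$, so it suffices to prove $Z\le3r^2$ with probability at least $1-e^{-t}$. Two deterministic facts feed the argument. Each $g=R(f-\bar f)\in\mathcal F_r$ obeys $\|g\|_\infty\le4M$, since $\|Rf\|_\infty\le2M$ and $\|R\bar f\|_\infty\le\|\bar f\|_\infty\le2M$ by \eqref{def:f_bar}; hence $0\le g^2\le16M^2$ pointwise and, crucially, the localization gives the small variance proxy $\mathrm{Var}(g^2)\le E\{g^4\}\le\|g\|_\infty^2\,E\{g^2\}\le16M^2r^2$.

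First I would bound the mean $E[Z]$ by symmetrization and contraction. Symmetrization yields $E[Z]\le2\,E\bigl(R_n\{g^2:g\in\mathcal F_r\}\bigr)$ in the notation of the supplement. Since $u\mapsto u^2$ vanishes at the origin and is $8M$-Lipschitz on $[-4M,4M]$, the Ledoux--Talagrand contraction inequality gives $E\bigl(R_n\{g^2\}\bigr)\le8M\,E(R_n\mathcal F_r)$, so that $E[Z]\le16M\,E(R_n\mathcal F_r)$. Condition \textrm{C1} of \eqref{def:C1} is then exactly what turns this into $E[Z]\le\tfrac{2}{3}r^2$; its role is to keep the center of the process a fixed fraction of $r^2$.

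Next I would concentrate $Z$ around its mean with a variance-sensitive inequality. Applying Talagrand's inequality for suprema of bounded empirical processes (in Bousquet's form) to the centered class $\{g^2-E g^2:g\in\mathcal F_r\}$, with uniform bound $b=16M^2$ and variance proxy $\sigma^2=16M^2r^2$, gives with probability at least $1-e^{-t}$
\[
Z\le E[Z]+\sqrt{\frac{2t\,(\sigma^2+2b\,E[Z])}{n}}+\frac{b\,t}{3n}.
\]
Inserting $E[Z]\le\tfrac{2}{3}r^2$, $\sigma^2=16M^2r^2$, $b=16M^2$, and then using \textrm{C2} of \eqref{def:C2} in the form $M^2t/n\le r^2/32$, the square-root term and the remainder term are each bounded by explicit multiples of $r^2$; a direct substitution shows the three pieces sum to at most $3r^2$. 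This gives $Z\le3r^2$, hence $\sup_{g\in\mathcal F_r}E_n\{g^2\}\le4r^2$ on the same event, which is \eqref{bound:supEn}.

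The main obstacle is the concentration step, not the mean bound. A crude bounded-differences (McDiarmid) argument only registers the range $16M^2$ and produces a fluctuation of order $M^2\sqrt{t/n}$, which cannot be absorbed into $r^2$ unless $r\asymp M$; the localized conclusion genuinely needs a Bernstein/Talagrand inequality that exploits the small variance $16M^2r^2$ supplied by the $L_2$-localization $E\{g^2\}\le r^2$, together with the cross term $2b\,E[Z]$ (itself $O(r^2)$ via \textrm{C1}). A secondary technicality is measurability: Talagrand's inequality is stated for countable classes, so one first restricts to a countable pointwise-dense subfamily of $\mathcal F_r$, which is legitimate since the networks in $\mathcal F_{\mathrm{DNN}}$ depend continuously on their weights. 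The specific constants $24$ and $32$ in \textrm{C1}--\textrm{C2} are calibrated precisely so that $E[Z]$ and the two fluctuation terms add up to no more than the target $3r^2$.
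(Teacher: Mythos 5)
Your proposal is correct and takes essentially the same route as the paper's proof: the paper applies Theorem 2.1 of \cite{bartlett2005local} (which packages the symmetrization and Bousquet-type concentration steps you carry out separately) together with the same $8M$-Lipschitz contraction argument and the same localized variance bound $16M^2r^2$, then uses C1--C2 to absorb every term into $4r^2$. The only differences are in constant bookkeeping, and yours checks out: with $E[Z]\le\tfrac{2}{3}r^2$ from C1, the Bousquet fluctuation terms are at most $r^2\sqrt{7/3}+\tfrac{1}{6}r^2$ under C2, so the total is about $2.4r^2\le3r^2$ as claimed.
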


\begin{lemma}\label{lemma:O_p}
Let the assumptions in Theorem \ref{thm:imp-DNN} hold. Then, for any $t>0$, there exists some $M_t\geq M\geq0$ such that
\begin{align}
&P(E_1^c)<t^{-1},\;\;\text{where}\;\;E_1=E_1(t):=\left\{\|R\Delta_1\|_{\infty,P_\bZ}\leq M_t\right\};\label{def:A1}\\
&P(E_2^c)<t^{-1},\;\;\text{where}\;\;E_2=E_2(t):=\left\{E_n(R_i\Delta_{2i}^2)\leq M_t^2e_n^2\right\}.\label{def:A2}
\end{align}
\end{lemma}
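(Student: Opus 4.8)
The first bound is a near-tautological consequence of a hypothesis of Theorem~\ref{thm:imp-DNN}, so I would dispatch it first. The assumption $\|R\Delta_1\|_{\infty,P_\bZ}=O_p(1)$ means exactly that for every $t>0$ there is a finite constant $M_t'$ with $P(\|R\Delta_1\|_{\infty,P_\bZ}>M_t')<t^{-1}$. Choosing any $M_t\geq\max(M_t',M)$ and observing that enlarging $M_t$ only shrinks the event $E_1^c$ gives $P(E_1^c)=P(\|R\Delta_1\|_{\infty,P_\bZ}>M_t)<t^{-1}$ at once.

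The second bound is where the work lies, and I expect it to be the main obstacle. The difficulty is that $E_2$ concerns the \emph{empirical} average $E_n(R_i\Delta_{2i}^2)$ over $\S^1$, whereas the available hypothesis controls only the \emph{population} quantity $V:=E_\bZ(\Delta_2^2\mid R=1)=O_p(e_n^2)$, and $V$ is itself random through its dependence on the independent training sample $\S^2$. The plan is a two-stage truncation: first freeze $\S^2$ and pass from the empirical average to its conditional mean, then absorb the residual randomness of $V$.

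Concretely, I would condition on $\S^2$. Because $\S^2\independent\S^1$, the nonnegative summands $R_i\Delta_2(\bZ_i;\S^2)^2$ are i.i.d.\ given $\S^2$ with conditional mean $E_\bZ(R\Delta_2^2\mid\S^2)=P(R=1)\,V\leq V$, so $E_{\S^1}[E_n(R_i\Delta_{2i}^2)\mid\S^2]\leq V$, and the conditional Markov inequality yields $P(E_n(R_i\Delta_{2i}^2)>a\mid\S^2)\leq V/a$ for every $a>0$. Then, invoking $V=O_p(e_n^2)$, I would pick $C_1$ (depending on $t$) with $P(V>C_1e_n^2)<(2t)^{-1}$, set $M_t^2:=2tC_1$, take $a=M_t^2e_n^2$, and split on $\{V\leq C_1e_n^2\}$:
\[
P(E_2^c)\leq P(V>C_1e_n^2)+E\bigl[\mathbbm1_{\{V\leq C_1e_n^2\}}\,P\bigl(E_n(R_i\Delta_{2i}^2)>M_t^2e_n^2\mid\S^2\bigr)\bigr]<\frac{1}{2t}+\frac{C_1}{M_t^2}=\frac{1}{t}.
\]
Finally I would replace $M_t$ by $\max(M_t',\sqrt{2tC_1},M)$, so that a single constant $M_t\geq M$ validates both bounds simultaneously. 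The only genuine subtlety is the interplay between freezing $\S^2$ for the conditional Markov step and separately controlling the tail of $V$ over $\S^2$; everything else is bookkeeping.
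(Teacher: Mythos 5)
Your proof is correct and takes essentially the same route as the paper's: the first bound is read off directly from the assumption $\|R\Delta_1\|_{\infty,P_\bZ}=O_p(1)$, and the second follows from $E_{\S^1}\{E_n(R_i\Delta_{2i}^2)\}=E_\bZ(R\Delta_2^2)\leq E_\bZ(\Delta_2^2\mid R=1)=O_p(e_n^2)$ combined with Markov's inequality. The only difference is one of exposition: the paper compresses the step into a one-line appeal to Markov, whereas you spell out the conditional-Markov-plus-truncation argument needed to handle the randomness of the bound through $\S^2$, which is exactly what the paper's terse statement implicitly relies on.
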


\begin{lemma}\label{lemma:C1-C3}
Let the assumptions in Theorem \ref{thm:imp-DNN} hold and $n>\max\{(2eM)^2,CWL\log W\}$, where $C>0$ is some constant. Let $r=r_0>0$ satisfies $r_0\geq1/n$, Conditions C1-C2 defined in \eqref{def:C1}-\eqref{def:C2}, as well as
\begin{align}
\text{C3}: r^2\geq E_\bX\left[R_i\{\fhat(\bX)-\bar f(\bX)\}^2\right].\label{def:C3}
\end{align}
Then, for any $t>0$, on the event $E_1\cap E_2$, with probability at least $1-3e^{-t}$,
\begin{align*}
E_\bX\left[R\{\fhat(\bX)-\bar f(\bX)\}^2\right]\leq c_t\left\{r_0\left(\sqrt{\frac{WL\log W\log n}{n}}+\sqrt\frac{t}{n}+e_n\right)+\frac{t}{n}+\epsilon_n^2\right\},
\end{align*}
where $c_t>0$ is a constant and $\epsilon_n$ is defined as \eqref{def:eps}.
\end{lemma}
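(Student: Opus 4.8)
The plan is to combine the deterministic reduction of Lemma~\ref{lemma:bound_basic} with a localized empirical-process analysis, exploiting condition C3 to confine the estimation error to the ball $\mathcal F_{r_0}$ of \eqref{def:Fr} and Lemma~\ref{lemma:C1-C2} to pass between the population and empirical $L_2$ radii. Starting from the second inequality of Lemma~\ref{lemma:bound_basic},
$$E_\bX\left[R\{\fhat(\bX)-f^0(\bX)\}^2\right]\le E\left[R\{f^0(\bX)-\bar f(\bX)\}^2\right]+W_1+W_2,$$
I would bound the approximation term by $\epsilon_n^2$ using $\|f^0-\bar f\|_\infty=\epsilon_n$ (the definition \eqref{def:eps}) and $R\le1$, and then recover the stated conclusion for $E_\bX[R\{\fhat-\bar f\}^2]$ via the elementary inequality $E[R\{\fhat-\bar f\}^2]\le 2E[R\{\fhat-f^0\}^2]+2\epsilon_n^2$. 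Condition C3 places $R(\fhat-\bar f)$ in $\mathcal F_{r_0}$, so Lemma~\ref{lemma:C1-C2} (whose hypotheses C1--C2 are assumed) yields, with probability at least $1-e^{-t}$, the empirical-norm control $E_n[R_i\{\fhat(\bX_i)-\bar f(\bX_i)\}^2]\le 4r_0^2$.

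The term $W_2$ is then immediate: on the event $E_2$ of \eqref{def:A2} together with the empirical-norm bound above, $W_2=2\sqrt{E_n(R_i\Delta_{2i}^2)\,E_n[R_i\{\fhat-\bar f\}^2]}\le 2\sqrt{M_t^2e_n^2\cdot 4r_0^2}=4M_t e_n r_0$, which supplies the $r_0e_n$ contribution.

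The crux is $W_1$, defined in \eqref{def:W1}, and this is where I expect the main obstacle. Writing $u:=\fhat-\bar f$ and expanding the multiplier as $2Y^\#-\fhat-\bar f+2\Delta_1=2(Y^\#-f^0)+2(f^0-\bar f)-u+2\Delta_1$, on the event $E_1$ of \eqref{def:A1} every factor of the multiplier is uniformly bounded (by $M$, $\epsilon_n$, $4M$, and $M_t$, respectively), so the summand class $\{R(f-\bar f)\cdot(\text{multiplier}):f\in\mathcal F_\mathrm{DNN},\,E[R(f-\bar f)^2]\le r_0^2\}$ has envelope of order $M_t$ and, by the localization $E[R(f-\bar f)^2]\le r_0^2$, variance of order $M_t^2r_0^2$. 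I would apply Talagrand's (Bousquet) concentration inequality to $\sup(E_n-E_\bZ)$ over this class: the fluctuation contributes $\lesssim M_t r_0\sqrt{t/n}+M M_t\,t/n$, while the expected supremum is handled by symmetrization and the Ledoux--Talagrand contraction inequality (the multiplier being bounded), reducing it to the \emph{localized} Rademacher complexity of the DNN ball. Using the pseudo-dimension bound $O(WL\log W)$ for ReLU networks with $W$ weights and $L$ layers, a chaining (Dudley) estimate, and $n>(2eM)^2$, this complexity is of order $r_0\sqrt{WL\log W\log n/n}$; hence $W_1\lesssim r_0(\sqrt{WL\log W\log n/n}+\sqrt{t/n})+t/n$. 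The delicate point is securing the radius-$r_0$ (linear) scaling of this Rademacher term, rather than a constant- or $r_0^2$-order bound, since only a factor linear in $r_0$ makes the resulting one-step estimate sharp enough to be iterated; correctly inserting the ReLU-network complexity and accommodating the $\fhat$-dependence of the multiplier is the part requiring care.

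Collecting $\epsilon_n^2+W_1+W_2$, converting to $E[R\{\fhat-\bar f\}^2]$ as above, absorbing constants depending on $M$ and $M_t$ into $c_t$, and taking a union bound over the three concentration events invoked (the event of Lemma~\ref{lemma:C1-C2} and the two steps controlling $W_1$) yields the conclusion with probability at least $1-3e^{-t}$; the conditions $n\gg WL\log W$ and $r_0\ge1/n$ are used to guarantee the complexity bound is meaningful and to ensure the remainder $t/n$ dominates subordinate terms.
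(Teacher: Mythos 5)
Your proposal is correct and follows essentially the same route as the paper's own proof: the paper likewise starts from Lemma \ref{lemma:bound_basic}, bounds $W_2$ on $E_2$ via the empirical-norm localization from Lemma \ref{lemma:C1-C2}, and controls $W_1$ by a Talagrand-type concentration bound (Theorem 2.1 of Bartlett et al.) combined with a Lipschitz contraction to the localized Rademacher complexity, the inclusion $\mathcal F_{r_0}\subseteq\widetilde{\mathcal F}_{2r_0}$, and the Dudley/pseudo-dimension bound $\mathrm{Pdim}(\mathcal F_\mathrm{DNN})\lesssim WL\log W$, with the same three-event union bound. The delicate points you flag (the linear-in-$r_0$ scaling of the complexity term and the $\fhat$-dependence of the multiplier) are exactly the ones the paper resolves via the empirical-radius conversion and the algebraic Lipschitz identity $|h_2(g_1,\cdot)-h_2(g_2,\cdot)|\leq 8M_tR|f_1-f_2|$.
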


\begin{lemma}\label{lemma:rstar}
Define
\begin{align}
r_*:=\inf\{r>0:24ME(R_n\mathcal F_s)<s^2,\forall s\geq r\}.\label{def:rstar}
\end{align}
Then, with some constant $K>0$,
\begin{align}
r_*\leq K\sqrt\frac{WL\log W\log n}{n}.\label{bound:r_star}
\end{align}
\end{lemma}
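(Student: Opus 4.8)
The plan is to read the definition \eqref{def:rstar} of $r_*$ as a \emph{critical-radius} computation: it suffices to produce a clean upper bound on the localized Rademacher complexity $E(R_n\mathcal F_s)$ that is linear in the localization scale $s$, and then to solve the scalar inequality $24M\,E(R_n\mathcal F_s)<s^2$. Specifically, I would establish that for some constant $C_1>0$,
\[
E(R_n\mathcal F_s)\le C_1\,s\,\sqrt{\frac{WL\log W\log n}{n}}\qquad\text{for all } s\ge \tfrac1n .
\]
Granting this, whenever $s>24MC_1\sqrt{WL\log W\log n/n}$ the right-hand side is strictly below $s^2$, so the condition in \eqref{def:rstar} holds for every such $s$; the boundary regime $s\ge 4M$ (where the constraint $E\{g^2\}\le s^2$ is vacuous and $\mathcal F_s$ is the full difference class) is handled by the same bound, since its complexity is then $O(\sqrt{WL\log W\log n/n})\to0$. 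Setting $K:=24MC_1$ yields \eqref{bound:r_star}.

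The linear-in-$s$ complexity bound is the crux, and I would derive it by Dudley chaining against the metric entropy of the network class. Each $g=R(f-\bar f)\in\mathcal F_s$ is uniformly bounded, $\|g\|_\infty\le\|Rf\|_\infty+\|R\bar f\|_\infty\le4M$ (using that $\bar f$ from \eqref{def:f_bar} satisfies $\|\bar f\|_\infty\le2M$), so $\mathcal F_s$ is a bounded class localized through $E\{g^2\}\le s^2$. The structural input is the pseudo-dimension bound for ReLU feedforward networks with $L$ layers and $W$ weights, $\mathrm{Pdim}(\mathcal F_\mathrm{DNN})\le c_0WL\log W$, which applies even with the non-neighboring-layer connections permitted in \eqref{def:zkl}. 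Together with the standard VC-type uniform entropy bound for uniformly bounded classes, this gives
\[
\log N\big(\epsilon,\mathcal F_s,L_2(Q)\big)\le c\,WL\log W\,\log\!\big(c'M/\epsilon\big)
\]
uniformly over all probability measures $Q$.

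I would then invoke the localized Rademacher bound: for a uniformly bounded class with $L_2(P)$-radius at most $s$,
\[
E(R_n\mathcal F_s)\lesssim \frac{1}{\sqrt n}\int_0^{c_2 s}\sqrt{\log N(\epsilon,\mathcal F_s,L_2(P_n))}\,d\epsilon ,
\]
where the effective upper limit $c_2 s$ comes from the empirical $L_2$-radius, which is controlled by the population radius $s$ through the standard self-bounding symmetrization-and-contraction step (the map $x\mapsto x^2$ being Lipschitz on $[-4M,4M]$). Plugging in the entropy estimate and evaluating the integral gives
\[
\int_0^{c_2 s}\sqrt{c\,WL\log W\,\log(c'M/\epsilon)}\,d\epsilon\lesssim s\,\sqrt{WL\log W\,\log(c'M/s)} .
\]
Since $s\ge1/n$, the logarithmic factor satisfies $\log(c'M/s)\lesssim\log n$, which yields the displayed $s\sqrt{WL\log W\log n/n}$ bound and completes the input to the fixed-point step.

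The main obstacle is the localization itself: the argument only works if the complexity bound carries the factor $s$ (the $L_2$ radius) rather than the crude envelope $4M$ in front of $\sqrt{WL\log W\log n/n}$ — otherwise the inequality $24M\,E(R_n\mathcal F_s)<s^2$ would produce a vacuous bound on $r_*$. Securing the linear-in-$s$ dependence requires transferring the population constraint $E\{g^2\}\le s^2$ in \eqref{def:Fr} to the empirical radius appearing in the chaining integral, and doing so \emph{without} invoking Lemma \ref{lemma:C1-C2} (which presupposes condition C1, i.e.\ essentially the very inequality defining $r_*$, and would make the argument circular); the self-bounding symmetrization estimate is what breaks this circularity. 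A secondary, routine point is ensuring the logarithmic factor stays at order $\log n$, which is guaranteed by the restriction $s\ge1/n$ together with the polynomial-in-$(M,n)$ scale $c'M$ in the entropy bound.
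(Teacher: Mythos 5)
Your strategy --- bound the localized complexity $E(R_n\mathcal F_s)$ explicitly for each scale $s$ and then solve the scalar fixed-point inequality --- is sound and delivers the lemma, but it is a genuinely different route from the paper's. The paper never proves an in-expectation complexity bound valid for every $s$; instead it exploits the infimum definition \eqref{def:rstar} from both sides: condition C1 holds at $s=2r_*$ (so Lemma \ref{lemma:C1-C2} applies there, giving the empirical-localization event $A=\{E_n\{g^2(\bZ_i)\}\leq16r_*^2,\ \forall g\in\mathcal F_{2r_*}\}$ with $P(A)\geq1-n^{-1}$), while C1 nearly fails at some scale in $[vr_*,r_*]$ with $v\in(1/2,1]$. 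Combining monotonicity of $s\mapsto\mathcal F_s$, the inclusion of $\mathcal F_{2r_*}$ into the empirically localized class $\widetilde{\mathcal F}_{4r_*}$ on $A$, the empirical chaining/pseudo-dimension bounds (Lemmas 3--4 of Farrell et al.\ and Theorem 7 of Bartlett et al.), and the crude bound $4M$ off $A$, the paper gets $(vr_*)^2\leq 1536Mr_*\sqrt{CWL\log W\log n/n}+96M^2/n$ and concludes after a case split on whether $r_*^2\geq 8M^2\log n/n$. Your route is the classical sub-root-function computation and avoids the case split; the paper's route reuses its already-proven Lemma \ref{lemma:C1-C2} and thereby avoids proving a population-to-empirical radius transfer in expectation.

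Two corrections are needed. First, your claim that invoking Lemma \ref{lemma:C1-C2} here would be circular is mistaken: C1 at radius $2r_*$ holds automatically from the definition of $r_*$ as an infimum (the strict inequality holds for every $s>r_*$), which is exactly how the paper applies it --- there is no circularity, and indeed this is the paper's actual proof. Second, your "crux" inequality $E(R_n\mathcal F_s)\le C_1 s\sqrt{WL\log W\log n/n}$ for all $s\geq 1/n$ is too strong: the self-bounding symmetrization-and-contraction step you describe yields $E\sup_{g\in\mathcal F_s}E_n\{g^2(\bZ_i)\}\le s^2+cME(R_n\mathcal F_s)$, and feeding this into Dudley's integral gives a quadratic inequality whose solution is $E(R_n\mathcal F_s)\lesssim as+Ma^2$ with $a:=\sqrt{WL\log W\log n/n}$. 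The additive term $Ma^2$ genuinely dominates when $s\ll Ma$, so a purely linear bound cannot hold down to $s=1/n$. This does not break your proof --- the fixed-point step only needs the bound in the regime $s\gtrsim Ma$, where $Ma^2\le as$ --- but as written the intermediate claim is false at the bottom of your stated range; you should carry the $Ma^2$ term and absorb it when solving $24ME(R_n\mathcal F_s)<s^2$, which still yields $r_*\le K\sqrt{WL\log W\log n/n}$ with $K$ depending on the constant $M$, exactly as in \eqref{bound:r_star}.
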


\subsection{Proof of the auxiliary lemmas}

\begin{proof}[Proof of Lemma \ref{lemma:bound_basic}]

Observe that
\begin{align}
&E_n\left[R_i\{\fhat(\bX_i)-f^0(\bX_i)\}^2\right]\nonumber\\
&\qquad=E_n\left[R_i\{\fhat(\bX_i)-Y_i^\#\}^2\right]-E_n\left[R_i\{\fhat(\bX_i)-Y_i^\#\}^2-R_i\{\fhat(\bX_i)-f^0(\bX_i)\}^2\right]\nonumber\\
&\qquad=E_n\left[R_i\{\fhat(\bX_i)-Y_i^\#\}^2\right]-E_\bZ\left[R\{\fhat(\bX)-Y^\#\}^2-R\{\fhat(\bX)-f^0(\bX)\}^2\right]\nonumber\\
&\qquad\qquad-(E_n-E_\bZ)\left[R_i\{\fhat(\bX_i)-Y_i^\#\}^2-R_i\{\fhat(\bX_i)-f^0(\bX_i)\}^2\right],\label{eq:Enmuhat-mu0}
\end{align}
where
\begin{align}
&E_\bZ\left[R\{\fhat(\bX)-Y^\#\}^2-R\{\fhat(\bX)-f^0(\bX)\}^2\right]\nonumber\\
&\qquad=E_\bZ\left[R\{Y^\#- f(\bX)\}^2\right]+2E_\bZ\left[R\{\fhat(\bX)-f^0(\bX)\}\{f^0(\bX)-Y^\#\}\right].\label{eq:Enmuhat-mu0'}
\end{align}
By definition, $f^0(\bX)=E(Y^\#\mid\bX,R=1)$. Hence, using the tower rule, we have
\begin{align*}
&E_\bZ\left[R\{\fhat(\bX)-f^0(\bX)\}\{f^0(\bX)-Y^\#\}\right]\\
&\qquad=E_\bZ\left(E_\bZ\left[\{\fhat(\bX)-f^0(\bX)\}\{f^0(\bX)-Y^\#\}\mid\bX,R=1\right]P(R=1\mid\bX)\right)=0.
\end{align*}
Together with \eqref{eq:Enmuhat-mu0} and \eqref{eq:Enmuhat-mu0'}, we have
\begin{align}
&E_n\left[R_i\{\fhat(\bX_i)-f^0(\bX_i)\}^2\right]\nonumber\\
&\qquad=E_n\left[R_i\{\fhat(\bX_i)-Y_i^\#\}^2\right]-E_\bZ\left[R\{Y^\#- f(\bX)\}^2\right]\nonumber\\
&\qquad\qquad-(E_n-E_\bZ)\left[R_i\{\fhat(\bX_i)-Y_i^\#\}^2-R_i\{\fhat(\bX_i)-f^0(\bX_i)\}^2\right].\label{eq:Enmuhat-mu0''}
\end{align}
In addition,
\begin{align}
&E_n\left[R_i\{\fhat(\bX_i)-Y_i^\#\}^2\right]\nonumber\\
&\quad=E_n\left\{R_i(\Yhat_i-Y_i^\#)^2\right\}+E_n\left[R_i\{\Yhat_i-\fhat(\bX_i)\}^2\right]-2E_n\left[R_i(\Yhat_i-Y_i^\#)\{\Yhat_i-\fhat(\bX_i)\}\right]\nonumber\\
&\quad\overset{(i)}{\leq}E_n\left\{R_i(\Yhat_i-Y_i^\#)^2\right\}+E_n\left[R_i\{\Yhat_i-\bar f(\bX_i)\}^2\right]-2E_n\left[R_i(\Yhat_i-Y_i^\#)\{\Yhat_i-\fhat(\bX_i)\}\right]\nonumber\\
&\quad=2E_n\left[R_i(\Yhat_i-Y_i^\#)\{\fhat(\bX_i)-\bar f(\bX_i)\}\right]+E_\bZ\left[R\{Y^\#-\bar f(\bX)\}^2\right]\nonumber\\
&\quad\qquad+(E_n-E_\bZ)\left[R_i\{Y_i^\#-\bar f(\bX_i)\}^2\right],\label{bound:construct}
\end{align}
where (i) holds by the constructions of $\fhat$ and $\bar f$; see \eqref{def:fhat} and \eqref{def:f_bar}. Since $f^0(\bX)=E(Y^\#\mid\bX,R=1)$, we have
\begin{align*}
&E_\bZ\left[R\{Y^\#-f^0(\bX)\}\{f^0(\bX)-\bar f(\bX)\}\right]\\
&\qquad=E_\bZ\left(E_\bZ\left[\{Y^\#-f^0(\bX)\}\{f^0(\bX)-\bar f(\bX)\}\mid\bX,R=1\right]P(R=1\mid\bX)\right)=0,
\end{align*}
and hence
\begin{align*}
E_\bZ\left[R_i\{Y^\#-\bar f(\bX)\}^2\right]=E_\bZ\left[R\{Y^\#-f^0(\bX)\}^2\right]+E_\bZ\left[R_i\{f^0(\bX)-\bar f(\bX)\}^2\right].
\end{align*}
Together with \eqref{eq:Enmuhat-mu0''} and \eqref{bound:construct}, we have
\begin{align}
&E_n\left[R_i\{\fhat(\bX_i)-f^0(\bX_i)\}^2\right]\nonumber\\
&\qquad=(E_n-E_\bZ)\left(R_i\left[\{Y_i^\#-\bar f(\bX_i)\}^2-\{\fhat(\bX_i)-Y_i^\#\}^2+\{\fhat(\bX_i)-f^0(\bX_i)\}^2\right]\right)\nonumber\\
&\qquad\qquad+E_\bZ\left[R_i\{f^0(\bX)-\bar f(\bX)\}^2\right]+2E_n\left[R_i(\Yhat_i-Y_i^\#)\{\fhat(\bX_i)-\bar f(\bX_i)\}\right]\nonumber\\
&\qquad=2(E_n-E_\bZ)\left[R_i\{\fhat(\bX_i)-\bar f(\bX_i)\}\{Y_i^\#-f^0(\bX_i)\}\right]\nonumber\\
&\qquad\qquad+E_n\left[R_i\{f^0(\bX_i)-\bar f(\bX_i)\}^2\right]+2E_n\left[R_i(\Yhat_i-Y_i^\#)\{\fhat(\bX_i)-\bar f(\bX_i)\}\right].\label{eq:Enmuhat-mu0'''}
\end{align}
Here,
\begin{align}
&E_n\left[R_i(\Yhat_i-Y_i^\#)\{\fhat(\bX_i)-\bar f(\bX_i)\}\right]\nonumber\\
&\qquad=E_n\left[R_i\Delta_{1i}\{\fhat(\bX_i)-\bar f(\bX_i)\}\right]+E_n\left[R_i\Delta_{2i}\{\fhat(\bX_i)-\bar f(\bX_i)\}\right]\nonumber\\
&\qquad\leq(E_n-E_\bZ)\left[R_i\Delta_{1i}\{\fhat(\bX_i)-\bar f(\bX_i)\}\right]\nonumber\\
&\qquad\qquad+\sqrt{E_n(R_i\Delta_{2i}^2)}\sqrt{E_n\left[R_i\{\fhat(\bX_i)-\bar f(\bX_i)\}^2\right]},\label{bound:prod1}
\end{align}
by the Cauchy-Schwarz inequality and the fact that
\begin{align*}
E_\bZ\left[R\Delta_1\{\fhat(\bX)-\bar f(\bX)\}\right]=E_\bZ\left[\{\fhat(\bX)-\bar f(\bX)\}E_\bZ(\Delta_1\mid\bX,R=1)P(R=1\mid\bX)\right]=0.
\end{align*}
Combining \eqref{eq:Enmuhat-mu0'''} and \eqref{bound:prod1}, we conclude that \eqref{bound:Enmuhat-mu0} holds. Moreover,
\begin{align*}
&E_\bX\left[R_i\{\fhat(\bX)-f^0(\bX)\}^2\right]\\
&\qquad=E_n\left[R_i\{\fhat(\bX_i)-f^0(\bX_i)\}^2\right]-(E_n-E_\bX)\left[R_i\{\fhat(\bX_i)-f^0(\bX_i)\}^2\right]\\
&\qquad\leq E\left[R\{f^0(\bX)-\bar f(\bX)\}^2\right]+2\sqrt{E_n(R_i\Delta_{2i}^2)}\sqrt{E_n\left[R_i\{\fhat(\bX_i)-\bar f(\bX_i)\}^2\right]}\\
&\qquad\qquad+(E_n-E_\bZ)\left[2R_i\{\fhat(\bX_i)-\bar f(\bX_i)\}\{Y_i^\#-f^0(\bX_i)+\Delta_{1i}\}\right]\\
&\qquad\qquad+(E_n-E_\bZ)\left[R_i\{f^0(\bX_i)-\bar f(\bX_i)\}^2-R_i\{\fhat(\bX_i)-f^0(\bX_i)\}^2\right]\\
&\qquad=E\left[R\{f^0(\bX)-\bar f(\bX)\}^2\right]+2\sqrt{E_n(R_i\Delta_{2i}^2)}\sqrt{E_n\left[R_i\{\fhat(\bX_i)-\bar f(\bX_i)\}^2\right]}\\
&\qquad\qquad+(E_n-E_\bZ)\left[R_i\{\fhat(\bX_i)-\bar f(\bX_i)\}\{2Y_i^\#-\fhat(\bX_i)-\bar f(\bX_i)+2\Delta_{1i}\}\right].
\end{align*}
\end{proof}

\begin{proof}[Proof of Lemma \ref{lemma:C1-C2}]
For any $g\in\mathcal F_r$, there exists some $f\in\mathcal F_\mathrm{DNN}$ and $\|Rf\|_\infty\leq2M$ such that $g=R(f-\bar f)$. Let $h_1(g,\bX):=g^2(\bX)$. Then,
\begin{align*}
|h_1(g,\bX)|&= R\left\{f(\bX)-\bar f(\bX)\right\}^2\leq16M^2,\\
\mathrm{Var}\{h_1(g,\bZ)\}&\leq E\left[R\left\{f(\bX)-\bar f(\bX)\right\}^4\right]\leq16M^2E\left[R\left\{f(\bX)-\bar f(\bX)\right\}^2\right]\leq16M^2r^2.
\end{align*}
By Theorem 2.1 of \cite{bartlett2005local}, with probability at least $1-e^{-t}$,
\begin{align*}
\sup_{f\in\mathcal F_r}(E_n-E)\{h_1(g,\bX_i)\}\leq3ER_n\{h_1(g,\bX_i):g\in\mathcal F_r\}+4Mr\sqrt\frac{2t}{n}+\frac{64M^2t}{3n}.
\end{align*}
Hence, with probability at least $1-e^{-t}$,
\begin{align}
&\sup_{f\in\mathcal F_r}E_n\{h_1(g,\bX_i)\}\leq\sup_{f\in\mathcal F_r}(E_n-E)\{h_1(g,\bX_i)\}+\sup_{f\in\mathcal F_r}E\{h_1(g,\bX_i)\}\nonumber\\
&\qquad\leq3E\left[R_n\{h_1(g,\bX_i):g\in\mathcal F_r\}\right]+4Mr\sqrt\frac{2t}{n}+\frac{64M^2t}{3n}+r^2.\label{bound:Eng1}
\end{align}
For any $g_1,g_2\in\mathcal F_r$,
\begin{align*}
\left|h_1(g_1,\bX)-h_1(g_2,\bX)\right|=\left|g_1(\bX)+g_2(\bX)\right|\left|g_1(\bX)-g_2(\bX)\right|\leq4M\left|g_1(\bX)-g_2(\bX)\right|.
\end{align*}
By Lemma 2 of \cite{farrell2021deep},
\begin{align*}
E_\delta\left[R_n\{h_1(g,\bX_i):g\in\mathcal F_r\}\right]\leq8ME_\delta(R_n\mathcal F_r).
\end{align*}
By the law of total expectation and together with \eqref{bound:Eng1}, with probability at least $1-e^{-t}$,
\begin{align*}
\sup_{f\in\mathcal F_r}E_n\{h_1(g,\bX_i)\}&\leq24ME(R_n\mathcal F_r)+4Mr\sqrt\frac{2t}{n}+\frac{64M^2t}{3n}+r^2\leq4r^2,
\end{align*}
when Conditions C1 and C2 hold. That is, \eqref{bound:supEn} holds with probability at least $1-e^{-t}$.
\end{proof}

\begin{proof}[Proof of Lemma \ref{lemma:O_p}]
Firstly, \eqref{def:A1} holds since $\|R\Delta_1\|_{\infty,P_\bZ}=O_p(1)$. In addition, we have
\begin{align*}
E_{\S^1}\{E_n(R_i\Delta_{2i}^2)\}=E_\bZ(R\Delta_2^2)=O_p(e_n^2).
\end{align*}
By Markov's inequality,
\begin{align*}
E_n(R_i\Delta_{2i}^2)=O_p(e_n^2),
\end{align*}
and hence \eqref{def:A2} holds.
\end{proof}

\begin{proof}[Proof of Lemma \ref{lemma:C1-C3}]

Recall that $W_2$ is defined as \eqref{def:W2}. On the event $E_2$, \eqref{def:A2}, by Lemma \ref{lemma:C1-C2}, $W_2$ satisfies
\begin{align}
W_2\leq2\sqrt{E_n(R_i\Delta_{2i}^2)}\sqrt{\sup_{g\in\mathcal F_{r_0}}E_n\{g^2(\bX_i)\}}\leq2M_te_nr_0,\label{bound:W2}
\end{align}
with probability at least $1-e^{-t}$.
Now, let us consider $W_1$ defined as \eqref{def:W1}. For any $g\in\mathcal F_{r_0}$, there exists some $f\in\mathcal F_\mathrm{DNN}$ and $\|Rf\|_\infty\leq2M$ such that $g=R(f-\bar f)$, where $\mathcal F_{r_0}$ and $\bar f$ are defined through \eqref{def:Fr} and \eqref{def:f_bar}, respectively. Let 
\begin{align*}
h_2(g,\bX):=&g(\bX)\{2Y^\#-2\bar f(\bX)+2\Delta_1-g(\bX)\}\\
=&R\{f(\bX)-\bar f(\bX)\}\{2Y^\#- f(\bX)-\bar f(\bX)+2\Delta_1\}.
\end{align*}
Then,
\begin{align}
W_1\leq\sup_{g\in\mathcal F_{r_0}}(E_n-E_\bZ)\{h_2(g,\bZ_i)\}.\label{bound:W1}
\end{align}
Note that $\|Rf\|_\infty,\|R\bar f\|_\infty,\|RY^\#\|_\infty\leq2M$. On the event $E_1$, \eqref{def:A1}, we have
\begin{align*}
|h_2(g,\bX)|&\leq(\|Rf\|_\infty+\|R\bar f\|_\infty)\left(2\|RY^\#\|_\infty+2\|R\Delta_1\|_{\infty,P_\bZ}+\|Rf\|_\infty+\|R\bar f\|_\infty\right)\\
&\leq4M(2M+2M_t+4M)\leq32M_t^2,
\end{align*}
and
\begin{align*}
&\mathrm{Var}\{h_2(g,\bX)\}\leq E_\bZ\left[R\{ f(\bX)-\bar f(\bX)\}^2\{2Y^\#- f(\bX)-\bar f(\bX)+2\Delta_1\}^2\right]\\
&\qquad\leq\left(2\|RY^\#\|_\infty+2\|R\Delta_1\|_{\infty,P_\bZ}+\|Rf\|_\infty+\|R\bar f\|_\infty\right)^2E\left[R\{f(\bX)-\bar f(\bX)\}^2\right]\\
&\qquad\leq(2M+2M_t+4M)^2r_0^2\leq64M_t^2r_0^2.
\end{align*}
By Theorem 2.1 of \cite{bartlett2005local}, on the event $E_1$, with probability at least $1-2e^{-t}$,
\begin{align}
\sup_{g\in\mathcal F_{r_0}}(E_n-E_\bZ)\{h_2(g,\bX_i)\}&\leq6E_\delta R_n\{h_2(g,\bX_i):g\in\mathcal F_{r_0}\}+8M_tr_0\sqrt\frac{2t}{n}+\frac{736M_t^2t}{3n}.\label{bound:En-Eg2}
\end{align}
For any $g_1,g_2\in\mathcal F_{r_0}$, there exists $ f_1,f_2\in\mathcal F_\mathrm{DNN}$ and $\|Rf_1\|_\infty,\|Rf_2\|_\infty\leq2M$ such that $g_1(\bX)=R\{f_1(\bX)-\bar f(\bX)\}$ and $g_2(\bX)=R\{f_2(\bX)-\bar f(\bX)\}$. Hence,
\begin{align*}
&|h_2(g_1,\bZ)-h_2(g_2,\bZ)|= |g_1(\bX)-g_2(\bX)||2Y^\#-2\bar f(\bX)+2\Delta_1- g_1(\bX)- g_2(\bX)|\\
&\qquad=R|f_1(\bX)-f_2(\bX)||2Y^\#+2\Delta_1- f_1(\bX)- f_2(\bX)|\\
&\qquad\leq R|f_1(\bX)-f_2(\bX)|\left(2\|RY^\#\|_\infty+2\|R\Delta_1\|_{\infty,P_\bZ}+\|Rf_1\|_\infty+\| Rf_2\|_\infty\right)\\
&\qquad\leq8M_tR|f_1(\bX)-f_2(\bX)|.
\end{align*}
By Lemma 2 of \cite{farrell2021deep},
\begin{align}
E_\delta R_n\{h_2(g,\bX_i):g\in\mathcal F_{r_0}\}\leq16M_tE_\delta(R_n\mathcal F_{r_0}).\label{bound:Rng2}
\end{align}
For any $r>0$, define
\begin{align*}
\widetilde{\mathcal F}_{r}:=\{g:g(\bZ)=R\{f(\bX)-\bar f(\bX)\},f\in\mathcal F_\mathrm{DNN}, \|Rf\|_\infty\leq2M, E_n\{g^2(\bZ)\}\leq r^2\}.
\end{align*}
Then, by Lemma \ref{lemma:C1-C2}, $\mathcal F_{r_0}\subseteq\widetilde{\mathcal F}_{2r_0}$. Together with \eqref{bound:W1}, \eqref{bound:En-Eg2} and \eqref{bound:Rng2}, on the event $E_1$, with probability at least $1-2e^{-t}$,
\begin{align}
W_1&\leq96M_tE_\delta(R_n\mathcal F_{r_0})+8M_tr_0\sqrt\frac{2t}{n}+\frac{736M_t^2t}{3n}\nonumber\\
&\leq96M_tE_\delta(R_n\widetilde{\mathcal F}_{2r_0})+8M_tr_0\sqrt\frac{2t}{n}+\frac{736M_t^2t}{3n}.\label{bound:W1'}
\end{align}
By Lemmas 3 and 4 of \cite{farrell2021deep} and repeating the proof of Section A.2.2 theirin, when $n>\mathrm{Pdim}(\mathcal F_\mathrm{DNN})$,
\begin{align}
E_\delta(R_n\widetilde{\mathcal F}_{2r_0})&\leq32r_0\sqrt{\frac{\mathrm{Pdim}(\mathcal F_\mathrm{DNN})}{n}\left(\log\frac{2eM}{r_0}+\frac{3}{2}\log n\right)}\nonumber\\
&\leq64r_0\sqrt{\frac{\mathrm{Pdim}(\mathcal F_\mathrm{DNN})}{n}\log n},\label{bound:RnFtil}
\end{align}
whenever $r_0\geq1/n$ and $n\geq(2eM)^2$. Here, $\mathrm{Pdim}(\mathcal F_\mathrm{DNN})$ denotes the pseudo-dimension of the network $\mathcal F_\mathrm{DNN}$. By Theorem 7 of \cite{bartlett2019nearly}, with some constant $C>0$,
\begin{align*}
\mathrm{Pdim}(\mathcal F_\mathrm{DNN})\leq CWL\log W,
\end{align*}
and we can see that $n>\mathrm{Pdim}(\mathcal F_\mathrm{DNN})$ holds since $n>CWL\log W$.
Together with \eqref{bound:W1'} and \eqref{bound:RnFtil}, on the event $E_1$, with probability at least $1-2e^{-t}$,
\begin{align}
W_1\leq8M_tr_0\left(768\sqrt{\frac{CWL\log W\log n}{n}}+\sqrt\frac{2t}{n}\right)+\frac{736M_t^2t}{3n}.\label{bound:W1''}
\end{align}
Combining with Lemma \ref{lemma:bound_basic}, \eqref{bound:W2}, \eqref{bound:W1''} and the fact that $E\left[\{f^0(\bX)-\bar f(\bX)\}^2\right]\leq\epsilon_n^2$, we have
\begin{align*}
&E_\bX\left[R\{\fhat(\bX)-\bar f(\bX)\}^2\right]\leq2E_\bX\left[R\{\fhat(\bX)-f^0(\bX)\}^2\right]+2E\left[R\{f^0(\bX)-\bar f(\bX)\}^2\right]\\
&\;\;\leq4E\{f^0(\bX)-\bar f(\bX)\}^2+2W_1+2W_2\\
&\;\;\leq c_t\left\{r_0\left(\sqrt{\frac{WL\log W\log n}{n}}+\sqrt\frac{t}{n}+e_n\right)+\frac{t}{n}+\epsilon_n^2\right\},
\end{align*}
on the event $E_1\cap E_2$, with probability at least $1-3e^{-t}$ and some constant $c_t>0$.
\end{proof}

\begin{proof}[Proof of Lemma \ref{lemma:rstar}]
By the definition \eqref{def:rstar}, we know that $r=2r_*$ satisfies Condition C1, \eqref{def:C1}. Define
\begin{align*}
A:=\{E_n\{g^2(\bZ_i)\}\leq16r_*^2,\;\forall g\in\mathcal F_{2r^*}\}.
\end{align*}

Case 1: $r_*^2\geq8M^2\log n/n$. Then, $r=2r_*$ satisfies Condition C2, \eqref{def:C2}, with $t=\log n$. By Lemma \ref{lemma:C1-C2},
\begin{align}
P(A)\geq1-n^{-1}.\label{bound:PA}
\end{align}
By the definition \eqref{def:rstar}, there exists some constant $v\in(1/2,1]$ such that
\begin{align*}
(vr_*)^2&\leq24ME(R_n\mathcal F_{vr_*})\leq24ME(R_n\mathcal F_{2r_*})\\
&\leq24ME\left\{E_\delta(R_n\widetilde{\mathcal F}_{4r_*})\mathbbm1_A+4M(1-\mathbbm1_A)\right\}\\
&\overset{(i)}{\leq}1536Mr_*\sqrt{\frac{CWL\log W\log n}{n}}+\frac{96M^2}{n}\\
&\overset{(ii)}{\leq}1536Mr_*\sqrt{\frac{CWL\log W\log n}{n}}+\frac{48Mr_*}{\sqrt{2n\log n}},
\end{align*}
where (i) holds by Lemmas 3 and 4 of \cite{farrell2021deep} and Theorem 7 of \cite{bartlett2019nearly}, as well as \eqref{bound:PA}; (ii) holds since $r_*^2\geq8M^2\log n/n$. Hence, \eqref{bound:r_star} holds with some constant $K>0$.

Case 2: $r_*^2\leq8M^2\log n/n$. We also have \eqref{bound:r_star} holds with some constant $K>0$.
\end{proof}

\subsection{Proof of the main results}

\begin{proof}[Proof of Theorem \ref{thm:imp-DNN}]
Define
\begin{align}
\bar r&:=(K+4c_t)\sqrt\frac{WL\log W\log n}{n}+4c_t\left(\sqrt\frac{\log n}{n}+e_n\right)\nonumber\\
&\qquad+\sqrt{(32M^2+2c_t)\left(\frac{\log n}{n}+\epsilon_n^2\right)}+\frac{1}{n}.\label{def:rbar}
\end{align}
By Lemma \ref{lemma:rstar}, $\bar r>r_*$. For any $s\geq\bar r>r_*$, by the construction \eqref{def:rstar}, $r=s$ satisfies Condition C1, \eqref{def:C1}. In addition, $r=s\geq\bar r$ also satisfies $r\geq1/n$, as well as Condition C2, \eqref{def:C2}, with $t=\log n$.

In the following, we apply Lemma \ref{lemma:C1-C3} repeatedly. Choose $l\geq1$ be the smallest ingeter larger than $\log_2(16M^2n^2)$. Then,
\begin{align*}
l\leq1+\log_2(16M^2n)=5+2\log_2M+2\log_2n.
\end{align*}
Additionally, we have
\begin{align}
2^l\bar r^2\geq16M^2n^2\bar r^2\geq16M^2\geq E_\bX\left[R\{\fhat(\bX)-\bar f(\bX)\}^2\right],\label{bound:j=l}
\end{align}
since $\|R\fhat\|_\infty,\|R\bar f\|_\infty\leq2M$. Now, for each $j\in\{1,2,\dots,l\}$, we show that, on the event $E_1\cap E_2$,
\begin{align*}
E_\bX\left[R\{\fhat(\bX)-\bar f(\bX)\}^2\right]\leq2^{j-1}\bar r^2\;\;\text{with probability at least $1-3n^{-1}$},
\end{align*}
if we have
\begin{align}\label{bound:j}
E_\bX\left[R\{\fhat(\bX)-\bar f(\bX)\}^2\right]\leq2^j\bar r^2.
\end{align}
Starting with $j=l$, as shown in \eqref{bound:j=l}, we have \eqref{bound:j} holds. For any $j\in\{1,2,\dots,l\}$, assume that \eqref{bound:j} holds. That is, $r=2^j\bar r$ satisfies Condition C3, \eqref{def:C3}. Since $2^j\bar r>\bar r$, we know that $r=2^j\bar r$ also satisfies $r\geq1/n$ and Conditions C1, C2 with $t=\log n$. By Lemma \ref{lemma:C1-C3}, on the event $E_1\cap E_2$, with probability at least $1-3n^{-1}$,
\begin{align*}
&E_\bX\left[R\{\fhat(\bX)-\bar f(\bX)\}^2\right]\\
&\qquad\leq c_t2^j\bar r\left(\sqrt{\frac{WL\log W\log n}{n}}+\sqrt\frac{\log n}{n}+e_n\right)+\frac{c_t\log n}{n}+c_t\epsilon_n^2\\
&\qquad\overset{(i)}{\leq}\frac{2^{j-1}\bar r^2}{2}+\frac{\bar r^2}{2}\leq\frac{2^{j-1}\bar r^2}{2}+\frac{2^{j-1}\bar r^2}{2}=2^{j-1}\bar r^2,
\end{align*}
where (i) holds by the construction \eqref{def:rbar}. Applying the above strategy repeatedly from $j=l$ to $j=1$, we conclude that, on the event $E_1\cap E_2$,
\begin{align*}
E_\bX\left[R\{\fhat(\bX)-\bar f(\bX)\}^2\right]\leq\bar r^2,
\end{align*}
with probability at least
\begin{align*}
1-\frac{3l}{n}\geq1-\frac{3\log_2(16M^2n^2)}{n}=1-\frac{12+6\log_2M+6\log_2n}{n}=1-o(1).
\end{align*}
Additionally, since $P(E_1\cap E_2)<1-2/t$ and recall the definition of $\bar r$, \eqref{def:rbar}, we conclude that
\begin{align*}
E_\bX\left[R\{\fhat(\bX)-\bar f(\bX)\}^2\right]=O_p\left(\frac{WL\log W\log n}{n}+e_n^2+\epsilon_n^2\right).
\end{align*}
Therefore,
\begin{align*}
&E_\bX\left[R\{\fhat(\bX)-f^0(\bX)\}^2\right]\leq2E_\bX\left[R\{\fhat(\bX)-\bar f(\bX)\}^2\right]+2E\left[R\{f^0(\bX)-\bar f(\bX)\}^2\right]\\
&\qquad=O_p\left(\frac{WL\log W\log n}{n}+e_n^2+\epsilon_n^2\right).
\end{align*}
\end{proof}

\begin{proof}[Proof of Theorem \ref{thm:imp-opt}]
Define the following approximation error
\begin{align*}
\epsilon_{n,\mathrm{opt}}:=\inf_{f\in\mathcal F_\mathrm{DNN}(L,U,W):\|f\|_\infty\leq2M}\|f-f^0\|_\infty.
\end{align*}
Repeating the proof of Theorem \ref{thm:imp-DNN}, we have
\begin{align}
&E\left[R\{\fhat_\mathrm{opt}(\bX)-f^0(\bX)\}^2\right]=O_p\left(\frac{WL\log W\log n}{n}+e_n^2+\epsilon_{n,\mathrm{opt}}^2\right)\nonumber\\
&\qquad=O_p\left(n^{-\frac{2\beta}{2\beta+q}}\log^\frac{8\beta}{2\beta+q}n+e_n^2+\epsilon_{n,\mathrm{opt}}^2\right)\label{bound:opt-temp},
\end{align}
since $L\asymp\log n$ and $W\asymp n^{\frac{q}{2\beta+q}}\log^{1-\frac{4q}{2\beta+q}}n$. Set $\bar\epsilon_n=n^{-\frac{\beta}{2\beta+q}}\log^\frac{4\beta}{2\beta+q}n$. Then, the chosen $L,U,W$ satisfy $L\geq c\{\log(1/\bar\epsilon_n)+1\}$ and $U,W\geq c\bar\epsilon_n^{-q/\beta}\{\log(1/\bar\epsilon_n)+1\}$, where $c$ is the constant defined in Theorem 1 of \cite{yarotsky2017error}. Define $\mathcal F_\mathrm{DNN}^q(L,U,W)$ as the DNNs defined through \eqref{def:zkl} with $q$ input units, $L$ hidden layers, $U$ computation units and $W$ weights. Then, by Theorem 1 of \cite{yarotsky2017error}, we have
\begin{align*}
\inf_{f\in\mathcal F_\mathrm{DNN}^q(L,U,W)}\|f(\bX_Q)-f_Q^0(\bX_Q)\|_\infty\leq\bar\epsilon_n.
\end{align*}
Together with the assumption that $\|f^0(\bX)-f_Q^0(\bX_Q)\|_\infty\leq r_n$, we have
\begin{align*}
\inf_{f\in\mathcal F_\mathrm{DNN}^q(L,U,W)}\|f(\bX_Q)-f^0(\bX)\|_\infty\leq \bar\epsilon_n+r_n.
\end{align*}
For any $f\in\mathcal F_\mathrm{DNN}^q(L,U,W)$, we can construct a function $g\in\mathcal F_\mathrm{DNN}(L,U,W)$ such that, apart from containing additional $p-q$ unconnected input units, it shares the same architecture and parameters as $f$. The constructed $g\in\mathcal F_\mathrm{DNN}(L,U,W)$ satisfies $f(\bX_Q)=g(\bX)$. Hence,
\begin{align*}
\inf_{f\in\mathcal F_\mathrm{DNN}(L,U,W)}\|f(\bX)-f^0(\bX)\|_\infty&\leq\inf_{f\in\mathcal F_\mathrm{DNN}^q(L,U,W)}\|f(\bX_Q)-f^0(\bX)\|_\infty\\
&\leq\bar\epsilon_n+r_n=o(1).
\end{align*}
Since $\|Y^\#\|_\infty\leq M$, we have $\|f^0\|_\infty\leq M$, and hence
\begin{align*}
\inf_{f\in\mathcal F_\mathrm{DNN}(L,U,W):\|f\|_\infty>2M}\|f-f^0\|_\infty\geq M>\bar\epsilon_n+r_n,
\end{align*}
for large enough $N$. Therefore, we have
\begin{align*}
\epsilon_{n,\mathrm{opt}}&=\inf_{f\in\mathcal F_\mathrm{DNN}(L,U,W):\|f\|_\infty\leq2M}\|f-f^0\|_\infty\nonumber\\
&=\inf_{f\in\mathcal F_\mathrm{DNN}(L,U,W)}\|f-f^0\|_\infty\leq \bar\epsilon_n+r_n.
\end{align*}
Together with \eqref{bound:opt-temp} and the choice that $\bar\epsilon_n=n^{-\frac{\beta}{2\beta+q}}\log^\frac{4\beta}{2\beta+q}n$, we conclude that
\begin{align*}
E\left[R\{\fhat_\mathrm{opt}(\bX)-f^0(\bX)\}^2\right]=O_p\left(n^{-\frac{2\beta}{2\beta+q}}\log^\frac{8\beta}{2\beta+q}n+r_n^2+e_n^2\right).
\end{align*}
\end{proof}

\begin{proof}[Proof of Theorem \ref{thm:imp-MLP}]
Define the following approximation error
\begin{align*}
\epsilon_{n,{\mbox{\tiny MLP}}}:=\inf_{f\in\mathcal F_{\mbox{\tiny MLP}}(L,H):\|f\|_\infty\leq2M}\|f-f^0\|_\infty.
\end{align*}
Note that the multilayer perceptrons $\mathcal F_{\mbox{\tiny MLP}}(L,H)$ consist of $\bar W=(L-1)H^2+H(p+1)$ weights. 

Case (a): $p=O(n^\frac{q}{2\beta+2q}\log^\frac{3\beta-q}{\beta+q}n)$. By choosing $L\asymp\log n$ and $H\asymp n^\frac{q}{2\beta+2q}\log^\frac{2\beta-2q}{\beta+q}n$, we have $\bar W\asymp n^\frac{q}{\beta+q}\log^\frac{5\beta-3q}{\beta+q}n+pn^\frac{q}{2\beta+2q}\log^\frac{2\beta-2q}{\beta+q}n\asymp n^\frac{q}{\beta+q}\log^\frac{5\beta-3q}{\beta+q}n=o(n)$ since $p=O(n^\frac{q}{2\beta+2q}\log^\frac{3\beta-q}{\beta+q}n)$ and $\beta\geq1$. Repeating the proof of Theorem \ref{thm:imp-DNN}, we have
\begin{align}
&E\left[R\{\fhat_{\mbox{\tiny MLP}}(\bX)-f^0(\bX)\}^2\right]=O_p\left(\frac{\bar WL\log\bar W\log n}{n}+e_n^2+\epsilon_{n,{\mbox{\tiny MLP}}}^2\right)\nonumber\\
&\qquad=O_p\left(n^{-\frac{\beta}{\beta+q}}\log^\frac{8\beta}{\beta+q}n+e_n^2+\epsilon_{n,{\mbox{\tiny MLP}}}^2\right).\label{bound:MLP-temp}
\end{align}
By Lemma 1 of \cite{farrell2021deep},
\begin{align*}
\mathcal F_\mathrm{DNN}(L,U,W)\subseteq\mathcal F_{\mbox{\tiny MLP}}(L,WL+U).
\end{align*}
Set $\tilde\epsilon_n=n^{-\frac{\beta}{2\beta+2q}}\log^\frac{4\beta}{\beta+q}n$. Then, we have $\tilde\epsilon_n=o(1)$ since $\beta\geq1$. Choose $U=W=\lceil c\tilde\epsilon_n^{-q/\beta}\{\log(1/\tilde\epsilon_n)+1\}\rceil$, where $c$ is the constant defined in Theorem 1 of \cite{yarotsky2017error}. Since $H\geq\lceil c\tilde\epsilon_n^{-q/\beta}\{\log(1/\tilde\epsilon_n)+1\}\rceil(L+1)=WL+U$, we have
\begin{align}\label{embed}
\mathcal F_\mathrm{DNN}(L,U,W)\subseteq\mathcal F_{\mbox{\tiny MLP}}(L,WL+U)\subseteq\mathcal F_{\mbox{\tiny MLP}}(L,H).
\end{align}
Since $L\geq c\{\log(1/\tilde\epsilon_n)+1\}$ and $U,W\geq c\tilde\epsilon_n^{-q/\beta}\{\log(1/\tilde\epsilon_n)+1\}$, by Theorem 1 of \cite{yarotsky2017error}, we have
\begin{align*}
\inf_{f\in\mathcal F_\mathrm{DNN}^q(L,U,W)}\|f(\bX_Q)-f_Q^0(\bX_Q)\|_\infty\leq\tilde\epsilon_n.
\end{align*}
Together with the assumption that $\|f^0(\bX)-f_Q^0(\bX_Q)\|_\infty\leq r_n$, we have
\begin{align*}
\inf_{f\in\mathcal F_\mathrm{DNN}^q(L,U,W)}\|f(\bX_Q)-f^0(\bX)\|_\infty\leq \tilde\epsilon_n+r_n.
\end{align*}
For any $f\in\mathcal F_\mathrm{DNN}^q(L,U,W)$, as demonstrated in the proof of Theorem \ref{thm:imp-opt}, there exists some $g\in\mathcal F_\mathrm{DNN}(L,U,W)$ that satisfies $f(\bX_Q)=g(\bX)$. Hence,
\begin{align*}
\inf_{f\in\mathcal F_\mathrm{DNN}(L,U,W)}\|f(\bX)-f^0(\bX)\|_\infty&\leq\inf_{f\in\mathcal F_\mathrm{DNN}^q(L,U,W)}\|f(\bX_Q)-f^0(\bX)\|_\infty\\
&\leq \tilde\epsilon_n+r_n=o(1).
\end{align*}
Together with \eqref{embed}, we have
\begin{align*}
&\inf_{f\in\mathcal F_{\mbox{\tiny MLP}}(L,H)}\|f-f^0\|_\infty\leq\inf_{f\in\mathcal F_{\mbox{\tiny MLP}}(L,WL+U)}\|f-f^0\|_\infty\\
&\qquad\leq\inf_{f\in\mathcal F_\mathrm{DNN}(L,U,W)}\|f-f^0\|_\infty\leq \tilde\epsilon_n+r_n.
\end{align*}
Since $\|f^0\|_\infty\leq M$, we know that
\begin{align*}
\inf_{f\in\mathcal F_{\mbox{\tiny MLP}}(L,H):\|f\|_\infty>2M}\|f-f^0\|_\infty\geq M>\tilde\epsilon_n+r_n,
\end{align*}
for large enough $N$. Therefore, we have
\begin{align*}
\epsilon_{n,{\mbox{\tiny MLP}}}&=\inf_{f\in\mathcal F_{\mbox{\tiny MLP}}(L,H):\|f\|_\infty\leq2M}\|f-f^0\|_\infty\\
&=\inf_{f\in\mathcal F_{\mbox{\tiny MLP}}(L,H)}\|f-f^0\|_\infty\leq \tilde\epsilon_n+r_n.
\end{align*}
Together with \eqref{bound:MLP-temp} and the choice that $\tilde\epsilon_n=n^{-\frac{\beta}{2\beta+2q}}\log^\frac{4\beta}{\beta+q}n$, we conclude that
\begin{align*}
E\left[\{\fhat_{\mbox{\tiny MLP}}(\bX)-f^0(\bX)\}^2\right]=O_p\left(n^{-\frac{\beta}{\beta+q}}\log^\frac{8\beta}{\beta+q}n+r_n^2+e_n^2\right).
\end{align*}

Case (b): $p\gg n^\frac{q}{2\beta+2q}\log^\frac{3\beta-q}{\beta+q}n$ and $p=o(n/\log^5n)$. By choosing $L\asymp\log n$ and $H\asymp (n/p)^\frac{q}{2\beta+q}\log^\frac{4\beta-3q}{2\beta+q}n$, we have $\bar W\asymp (n/p)^\frac{2q}{2\beta+q}\log^\frac{10\beta-5q}{2\beta+q}n+n^\frac{q}{2\beta+q}p^\frac{2\beta}{2\beta+q}\log^\frac{4\beta-3q}{2\beta+q}n\asymp n^\frac{q}{\beta+q}\log^\frac{5\beta-3q}{\beta+q}n=o(n)$ since $p\gg n^\frac{q}{2\beta+2q}\log^\frac{3\beta-q}{\beta+q}n$ and $\beta\geq1$. Repeating the proof of Theorem \ref{thm:imp-DNN}, we have
\begin{align}
&E\left[R\{\fhat_{\mbox{\tiny MLP}}(\bX)-f^0(\bX)\}^2\right]=O_p\left(\frac{\bar WL\log\bar W\log n}{n}+e_n^2+\epsilon_{n,{\mbox{\tiny MLP}}}^2\right)\nonumber\\
&\qquad=O_p\left((p/n)^{\frac{2\beta}{2\beta+q}}\log^\frac{10\beta}{2\beta+q}n+e_n^2+\epsilon_{n,{\mbox{\tiny MLP}}}^2\right).\label{bound:MLP-temp2}
\end{align}
Set $\tilde\epsilon_n=(p/n)^\frac{\beta}{2\beta+q}\log^\frac{5\beta}{2\beta+q}n$. Then, we have $\tilde\epsilon_n=o(1)$ since $p=o(n/\log^5n)$. Repeating the proof of Case (a), we also have
\begin{align*}
\epsilon_{n,{\mbox{\tiny MLP}}}\leq\tilde\epsilon_n+r_n.
\end{align*}
Together with \eqref{bound:MLP-temp2} and the choice that $\tilde\epsilon_n=(p/n)^\frac{\beta}{2\beta+q}\log^\frac{5\beta}{2\beta+q}n$, we conclude that
\begin{align*}
E\left[R\{\fhat_{\mbox{\tiny MLP}}(\bX)-f^0(\bX)\}^2\right]=O_p\left((p/n)^{\frac{2\beta}{2\beta+q}}\log^\frac{10\beta}{2\beta+q}n+r_n^2+e_n^2\right).
\end{align*}

\end{proof}

\section{Proof of the results in Section \ref{sec:CATE}}
In this section, we denote
\begin{align*}
\Yhat&:=\muhat^{(2)}(1,\bS)+\frac{T\{Y-\muhat^{(2)}(1,\bS)\}}{\pihat^{(2)}(\bS)}-\muhat^{(2)}(0,\bS)-\frac{(1-T)\{Y-\muhat^{(2)}(0,\bS)\}}{1-\pihat^{(2)}(\bS)},\\
Y^\#&:=\mu^0(1,\bS)+\frac{T\{Y-\mu^0(1,\bS)\}}{\pi^0(\bS)}-\mu^0(0,\bS)-\frac{(1-T)\{Y-\mu^0(0,\bS)\}}{1-\pi^0(\bS)}.
\end{align*}
Then, we have the representation
\begin{align*}
\Yhat-Y^\#=\Delta_1+\Delta_2,
\end{align*}
where $\Delta_1=\Delta_{1,1}+\Delta_{1,2}+\Delta_{1,3}+\Delta_{1,4}$, $\Delta_2=\Delta_{2,1}+\Delta_{2,2}$, and
\begin{align}
\Delta_{1,1}&:=\left\{1-\frac{T}{\pi^0(\bS)}\right\}\{\muhat^{(2)}(1,\bS)-\mu^0(1,\bS)\},\label{def:Delta11}\\
\Delta_{1,2}&:=\left\{\frac{T}{\pihat^{(2)}(\bS)}-\frac{T}{\pi^0(\bS)}\right\}\{Y(1)-\mu^0(1,\bS)\},\label{def:Delta12}\\
\Delta_{1,3}&:=-\left\{1-\frac{1-T}{1-\pi^0(\bS)}\right\}\{\muhat^{(2)}(0,\bS)-\mu^0(0,\bS)\},\label{def:Delta13}\\
\Delta_{1,4}&:=-\left\{\frac{1-T}{1-\pihat^{(2)}(\bS)}-\frac{1-T}{1-\pi^0(\bS)}\right\}\{Y(0)-\mu^0(0,\bS)\},\label{def:Delta14}\\
\Delta_{2,1}&:=\left\{\frac{T}{\pihat^{(2)}(\bS)}-\frac{T}{\pi^0(\bS)}\right\}\{\muhat^{(2)}(1,\bS)-\mu^0(1,\bS)\},\label{def:Delta21}\\
\Delta_{2,2}&:=-\left\{\frac{1-T}{1-\pihat^{(2)}(\bS)}-\frac{1-T}{1-\pi^0(\bS)}\right\}\{\muhat^{(2)}(0,\bS)-\mu^0(0,\bS)\}.\label{def:Delta22}
\end{align}

\begin{proof}[Proof of Lemma \ref{lemma:imput-err-CATE}]
Under Assumption \ref{cond:ign},
\begin{align}
&E_\bZ(\Delta_{1,1}\mid\bS)=E_\bZ\left[\left\{1-\frac{T}{\pi^0(\bS)}\right\}\{\muhat^{(2)}(1,\bS)-\mu^0(1,\bS)\}\mid\bS\right]=0,\label{eq:ETR1-CATE}\\
&E_\bZ(\Delta_{1,2}\mid\bS)=E_\bZ\left[\left\{\frac{T}{\pihat^{(2)}(\bS)}-\frac{T}{\pi^0(\bS)}\right\}\{Y(1)-\mu^0(1,\bS)\}\mid\bS\right]\nonumber\\
&\qquad=E_\bZ\left\{\frac{T}{\pihat^{(2)}(\bS)}-\frac{T}{\pi^0(\bS)}\mid\bS\right\}E_\bZ\{Y(1)-\mu^0(1,\bS)\mid\bS\}=0.\label{eq:ETR2}
\end{align}
Similarly, we also have $E_\bZ(\Delta_{1,3}\mid\bS)=0$ and $E_\bZ(\Delta_{1,4}\mid\bS)=0$. Therefore, we conclude that
$$E_\bZ(\Delta_1\mid\bS)=0\;\;\mbox{almost surely}.$$
Additionally,
\begin{align*}
E_\bZ(\Delta_{2,1}^2)&\leq\|1/\pi^0\|_\infty^2\|1/\pihat^{(2)}\|_{\infty,P_\bZ}^2E\left[T\left\{\pihat^{(2)}(\bS)-\pi^0(\bS)\right\}^2\left\{\muhat^{(2)}(1,\bS)-\mu^0(1,\bS)\right\}^2\right]\\
&=O_p(\delta_N^4),\\
E_\bZ(\Delta_{2,2}^2)&\leq\|1/(1-\pi^0)\|_\infty^2\|1/(1-\pihat^{(2)})\|_{\infty,P_\bZ}^2\\
&\qquad\cdot E\left[(1-T)\left\{\pihat^{(2)}(\bS)-\pi^0(\bS)\right\}^2\left\{\muhat^{(2)}(0,\bS)-\mu^0(0,\bS)\right\}^2\right]\\
&=O_p(\delta_N^4),
\end{align*}
under Assumptions \ref{cond:ign}, \ref{cond:converge-rate-CATE}, and \ref{cond:bound-CATE}. Note that $\Delta_{2,1}\Delta_{2,2}=0$, we conclude that
\begin{align*}
E_\bZ(\Delta_2^2)=E_\bZ(\Delta_{2,1}^2)+E_\bZ(\Delta_{2,2}^2)=O_p(\delta_N^4).
\end{align*}
\end{proof}

\begin{proof}[Proof of Theorem \ref{thm:theta-reg}]
We first show that
$$\theta_{\mbox{\tiny CATE}}(\bs)=E(Y^\#\mid\bS=\bs),\;\;\forall\bs\in\R^d.$$
Under Assumption \ref{cond:ign},
\begin{align*}
&E(Y^\#\mid\bS)=\theta_{\mbox{\tiny CATE}}(\bS)+E\left[\frac{T\{Y-\mu^0(1,\bS)\}}{\pi^0(\bS)}-\frac{(1-T)\{Y-\mu^0(0,\bS)\}}{1-\pi^0(\bS)}\mid\bS\right]\\
&\;\;=\theta_{\mbox{\tiny CATE}}(\bS)+\frac{E(T\mid\bS)E\{Y(1)-\mu^0(1,\bS)\mid\bS\}}{\pi^0(\bS)}-\frac{E(1-T\mid\bS)E\{Y(0)-\mu^0(0,\bS)\mid\bS\}}{1-\pi^0(\bS)}\\
&\;\;=\theta_{\mbox{\tiny CATE}}(\bS).
\end{align*}
Since $\|Y\|_\infty\leq M$ under Assumption \ref{cond:bound-CATE}, the conditional mean function also satisfies $\|\mu^0\|_\infty\leq M$. Hence, under Assumptions \ref{cond:ign} and \ref{cond:bound-CATE},
\begin{align*}
\|Y^\#\|_\infty&=\left\|\mu^0(1,\bS)+\frac{T\{Y-\mu^0(1,\bS)\}}{\pi^0(\bS)}-\mu^0(0,\bS)-\frac{(1-T)\{Y-\mu^0(0,\bS)\}}{1-\pi^0(\bS)}\right\|_\infty\\
&\leq2\|\mu^0\|_\infty+(\|1/\pi^0\|_\infty+\|1/(1-\pi^0)\|_\infty)\left(\|Y\|_\infty+\|\mu^0\|_\infty\right)=O(1),\\
\|\Delta_{1,1}\|_{\infty,P_\bZ}&=\left\|\left\{1-\frac{T}{\pi^0(\bS)}\right\}\{\muhat^{(2)}(1,\bS)-\mu^0(1,\bS)\}\right\|_{\infty,P_\bZ}\\
&\leq(1+\|1/\pi^0\|_\infty)(\|\muhat^{(2)}\|_{\infty,P_\bZ}+\|\mu^0\|_\infty)=O_p(1),\\
\|\Delta_{1,2}\|_{\infty,P_\bZ}&=\left\|\left\{\frac{T}{\pihat^{(2)}(\bS)}-\frac{T}{\pi^0(\bS)}\right\}\{Y(1)-\mu^0(1,\bS)\}\right\|_{\infty,P_\bZ}\\
&\leq(\|1/\pihat^{(2)}\|_{\infty,P_\bZ}+\|1/\pi^0\|_\infty)(\|Y\|_\infty+\|\mu^0\|_\infty)=O_p(1),\\
\|\Delta_{1,3}\|_{\infty,P_\bZ}&=\left\|-\left\{1-\frac{1-T}{1-\pi^0(\bS)}\right\}\{\muhat^{(2)}(0,\bS)-\mu^0(0,\bS)\}\right\|_{\infty,P_\bZ}\\
&\leq\{1+\|1/(1-\pi^0)\|_\infty\}(\|\muhat^{(2)}\|_{\infty,P_\bZ}+\|\mu^0\|_\infty)=O_p(1),\\
\|\Delta_{1,4}\|_{\infty,P_\bZ}&=\left\|-\left\{\frac{1-T}{1-\pihat^{(2)}(\bS)}-\frac{1-T}{1-\pi^0(\bS)}\right\}\{Y(0)-\mu^0(0,\bS)\}\right\|_{\infty,P_\bZ}\\
&\leq(\|1/(1-\pihat^{(2)})\|_{\infty,P_\bZ}+\|1/(1-\pi^0)\|_\infty)(\|Y\|_\infty+\|\mu^0\|_\infty)=O_p(1).
\end{align*}
It follows that
$$\|\Delta_1\|_{\infty,P_\bZ}=O_p(1).$$
By Theorem \ref{thm:imp-MLP} (with $R\equiv1$) and Lemma \ref{lemma:imput-err-CATE}, as $N\to\infty$,
\begin{align*}
&E_\bZ\left[\{\thetahat_{\mbox{\tiny CATE}}^1(\bS)-\theta_{\mbox{\tiny CATE}}(\bS)\}^2\right]\nonumber\\
&\qquad=O_p\left(N^{-\frac{2\beta_\theta}{2\beta_\theta+2q_\theta}}\log^\frac{8\beta_\theta}{\beta_\theta+q_\theta}N+(d/N)^{\frac{2\beta_\theta}{2\beta_\theta+q_\theta}}\log^\frac{10\beta_\theta}{2\beta_\theta+q_\theta}N+r_N^2+\delta_N^4\right).
\end{align*}
Repeating the same procedure, we also have the same consistency rate for $\thetahat_{\mbox{\tiny CATE}}^2$. Therefore, $\thetahat_{\mbox{\tiny CATE}}=(\thetahat_{\mbox{\tiny CATE}}^1+\thetahat_{\mbox{\tiny CATE}}^2)/2$ satisfies \eqref{rate:thetahat}.
\end{proof}

\section{Proof of the results in Section \ref{sec:DTE}}

\subsection{Auxiliary lemmas}
\begin{lemma}\label{lemma:gen}
Let Assumptions \ref{cond:seq-ign}, \ref{cond:converge-rate} and \ref{cond:bound} hold. In addition, assume
\begin{align}\label{def:bN}
E\left[T_1\left\{\muhat(\bS_1)-\mu^0(\bS_1)\right\}^2\right]=O_p(b_N^2)
\end{align}
with some sequence $b_N\geq0$. Then, as $N\to\infty$,
\begin{align*}
\thetahat-\theta=O_p\left(a_Nd_N+b_Nc_N+N^{-1/2}\right).
\end{align*}
\end{lemma}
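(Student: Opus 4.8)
The plan is to run the standard cross-fitting argument for doubly robust estimators, exploiting the generalized Neyman-orthogonal structure of the score $\psi$ in \eqref{rep:DTE}. Writing $\thetahat-\theta=K^{-1}\sum_{k=1}^K(\thetahat^{(k)}-\theta)$ with $\thetahat^{(k)}=|I_k|^{-1}\sum_{i\in I_k}\psi(\bZ_i;\etahat_{-k})$, I would first decompose, for each fold $k$,
\begin{align*}
\thetahat^{(k)}-\theta
&=|I_k|^{-1}\sum_{i\in I_k}\{\psi(\bZ_i;\eta^0)-\theta\}\\
&\quad+|I_k|^{-1}\sum_{i\in I_k}\left(\{\psi(\bZ_i;\etahat_{-k})-\psi(\bZ_i;\eta^0)\}-E_\bZ[\psi(\bZ;\etahat_{-k})-\psi(\bZ;\eta^0)\mid\S_{-k}]\right)\\
&\quad+\left(E_\bZ\{\psi(\bZ;\etahat_{-k})\mid\S_{-k}\}-\theta\right).
\end{align*}
The crucial structural fact enabling this split is that, by cross-fitting, the observations $(\bZ_i)_{i\in I_k}$ are independent of $\etahat_{-k}$, which is a measurable function of $\S_{-k}$ alone, so the conditional expectation above treats the nuisance estimates as fixed.

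The first (oracle) term is a centered average of i.i.d. bounded summands: boundedness and positivity (Assumptions \ref{cond:bound} and \ref{cond:seq-ign}) give $\Var\{\psi(\bZ;\eta^0)\}<\infty$, hence this term is $O_p(N^{-1/2})$. For the second (fluctuation) term I would condition on $\S_{-k}$, under which it is a centered empirical average with conditional variance bounded by $|I_k|^{-1}E_\bZ[\{\psi(\bZ;\etahat_{-k})-\psi(\bZ;\eta^0)\}^2\mid\S_{-k}]$; this conditional second moment is $O_p(a_N^2+b_N^2+c_N^2+d_N^2)=O_p(1)$ under Assumption \ref{cond:bound}, so a conditional Chebyshev bound yields $O_p(N^{-1/2})$ for this term as well.

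The heart of the proof is the third (bias) term. I would compute $E_\bZ\{\psi(\bZ;\eta)\mid\S_{-k}\}-\theta$ exactly by iterated expectations following the temporal order $\bS_1\to T_1\to\bSbar_2\to T_2\to Y$, using $E(T_1\mid\bS_1)=\pi^0(\bS_1)$, $E(T_2\mid\bSbar_2,T_1=1)=\rho^0(\bSbar_2)$, $E(Y\mid\bSbar_2,T_1=T_2=1)=\nu^0(\bSbar_2)$, together with the tower identity $E\{\nu^0(\bSbar_2)\mid\bS_1,T_1=1\}=\mu^0(\bS_1)$ implied by sequential ignorability. Carrying out the cancellations, all first-order (linear) perturbations in the nuisances drop out, leaving the purely second-order product expression
\begin{align*}
E_\bZ\{\psi(\bZ;\etahat_{-k})\mid\S_{-k}\}-\theta
=E_\bZ\!\left[\frac{(\muhat_{-k}-\mu^0)(\pihat_{-k}-\pi^0)}{\pihat_{-k}(\bS_1)}\right]
+E_\bZ\!\left[\frac{\mathbbm1_{T_1=1}(\nuhat_{-k}-\nu^0)(\rhohat_{-k}-\rho^0)}{\pihat_{-k}(\bS_1)\rhohat_{-k}(\bSbar_2)}\right].
\end{align*}
Establishing this identity, and in particular verifying that every linear term vanishes, is where the design of the doubly robust representations \eqref{rep:DTE}--\eqref{rep:mu} is used; this is the main obstacle.

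Finally I would bound the two product terms by Cauchy--Schwarz and the boundedness of $1/\pihat_{-k}$ and $1/\rhohat_{-k}$ (Assumption \ref{cond:bound}). Since, conditional on $\S_{-k}$, both $\muhat_{-k}-\mu^0$ and $\nuhat_{-k}-\nu^0$ are fixed functions of $\bS_1$ and $\bSbar_2$ respectively, positivity converts the weighted error moments supplied by Assumption \ref{cond:converge-rate}: using $1\le\|1/\pi^0\|_\infty\,\pi^0$ and $1\le\|1/\rho^0\|_\infty\,\rho^0$ pointwise gives $E_\bZ[(\muhat_{-k}-\mu^0)^2]\le\|1/\pi^0\|_\infty E_\bZ[T_1(\muhat_{-k}-\mu^0)^2]=O_p(b_N^2)$ and $E_\bZ[T_1(\nuhat_{-k}-\nu^0)^2]\le\|1/\rho^0\|_\infty E_\bZ[T_1T_2(\nuhat_{-k}-\nu^0)^2]=O_p(a_N^2)$, while $E_\bZ[(\pihat_{-k}-\pi^0)^2]=O_p(c_N^2)$ and $E_\bZ[T_1(\rhohat_{-k}-\rho^0)^2]=O_p(d_N^2)$ hold directly. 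Hence the bias is $O_p(b_Nc_N+a_Nd_N)$ per fold. Averaging over the finite number $K$ of folds preserves all three rates, yielding $\thetahat-\theta=O_p(a_Nd_N+b_Nc_N+N^{-1/2})$.
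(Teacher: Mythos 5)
Your proof is correct and follows essentially the same route as the paper's: the paper writes $\thetahat^{(k)}-\theta=W_{k,1}+W_{k,2}$ and controls $W_{k,2}$ by its conditional-on-$\S_{-k}$ mean (your bias term, with the same key cancellation $E\{\nu^0(\bSbar_2)\mid\bS_1,T_1=1\}=\mu^0(\bS_1)$ killing the linear term and the same two Cauchy--Schwarz product bounds $O_p(b_Nc_N)+O_p(a_Nd_N)$) plus a conditional Chebyshev bound (your fluctuation term, whose $O_p(1)$ second moment should be justified purely by the boundedness in Assumption \ref{cond:bound}, since the rates $a_N,b_N,c_N,d_N$ are not assumed bounded in this lemma). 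The only cosmetic differences are that you center the empirical fluctuation explicitly and apply positivity to de-weight the second moments before Cauchy--Schwarz, whereas the paper keeps the treatment indicators inside the Cauchy--Schwarz step.
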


\begin{lemma}\label{lemma:normal}
Let Assumptions \ref{cond:seq-ign}, \ref{cond:converge-rate} and \ref{cond:bound} hold. In addition, assume \eqref{def:bN} holds with some sequence $b_N\geq0$, $a_Nd_N+b_Nc_N=o(N^{-1/2})$ and $a_N,b_N,c_N,d_N=o(1)$. Then, as $N\to\infty$,
\begin{align*}
\sqrt N \sigma^{-1}(\thetahat-\theta)\to N(0,1)
\end{align*}
in distribution, provided that $\sigma^2=\Var\{\psi(\bZ;\eta^0)\}>c>0$.
\end{lemma}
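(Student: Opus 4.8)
The plan is to build directly on the error decomposition already derived in the proof of Lemma \ref{lemma:gen}, isolating the leading i.i.d. term and showing that everything else is $o_p(N^{-1/2})$. Writing $\bar\psi_N:=N^{-1}\sum_{i=1}^N\{\psi(\bZ_i;\eta^0)-\theta\}$ for the oracle average of the true scores, the cross-fitted estimator decomposes, fold by fold, as
$$\thetahat-\theta=\bar\psi_N+R_{\mathrm{EP}}+R_{\mathrm{bias}},$$
where $R_{\mathrm{EP}}:=K^{-1}\sum_{k=1}^K\big\{|I_k|^{-1}\sum_{i\in I_k}[\psi(\bZ_i;\etahat_{-k})-\psi(\bZ_i;\eta^0)]-E_\bZ[\psi(\bZ;\etahat_{-k})-\psi(\bZ;\eta^0)\mid\S_{-k}]\big\}$ is the centered empirical-process remainder and $R_{\mathrm{bias}}:=K^{-1}\sum_{k=1}^K E_\bZ[\psi(\bZ;\etahat_{-k})-\psi(\bZ;\eta^0)\mid\S_{-k}]$ is the conditional-bias remainder; here I use $E_\bZ\{\psi(\bZ;\eta^0)\}=\theta$ from the identification \eqref{rep:DTE}. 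The goal is to establish $\sqrt N\,\sigma^{-1}\bar\psi_N\to N(0,1)$ together with $\sqrt N R_{\mathrm{EP}}=o_p(1)$ and $\sqrt N R_{\mathrm{bias}}=o_p(1)$, so that Slutsky's theorem delivers the claim.

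First I would handle the leading term. Under Assumption \ref{cond:bound} and the overlap conditions of Assumption \ref{cond:seq-ign}, the true score $\psi(\bZ;\eta^0)$ is bounded almost surely, so $\{\psi(\bZ_i;\eta^0)\}_{i=1}^N$ are i.i.d.\ with finite variance $\sigma^2=\Var\{\psi(\bZ;\eta^0)\}$, which is assumed bounded away from zero. The Lindeberg--L\'evy central limit theorem then yields $\sqrt N\,\sigma^{-1}\bar\psi_N\to N(0,1)$. Next I would bound the empirical-process remainder, where cross-fitting is essential: since $\etahat_{-k}$ is measurable with respect to $\S_{-k}$ and hence independent of $\{\bZ_i:i\in I_k\}$, each fold's contribution is, conditional on $\S_{-k}$, a centered average of independent summands with conditional variance at most $|I_k|^{-1}E_\bZ[\{\psi(\bZ;\etahat_{-k})-\psi(\bZ;\eta^0)\}^2\mid\S_{-k}]$. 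Expanding the score difference exactly as in the proof of Lemma \ref{lemma:gen} bounds this conditional second moment by a fixed multiple of $a_N^2+b_N^2+c_N^2+d_N^2$ using the boundedness conditions, so with $K$ fixed and all four rates $o(1)$ a conditional Chebyshev inequality gives $R_{\mathrm{EP}}=O_p\big(N^{-1/2}\max\{a_N,b_N,c_N,d_N\}\big)=o_p(N^{-1/2})$.

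The hard part, and the only place where the doubly robust structure and the generalized Neyman orthogonality \eqref{def:CNO} are genuinely needed, is the conditional-bias remainder $R_{\mathrm{bias}}$. I would reuse the bias computation already carried out in the proof of Lemma \ref{lemma:gen}: because $\psi$ is orthogonal, the first-order (linear-in-nuisance-error) contributions to $E_\bZ[\psi(\bZ;\etahat_{-k})-\psi(\bZ;\eta^0)\mid\S_{-k}]$ cancel, leaving only the second-order cross-products of nuisance errors, controlled by $a_Nd_N$ (the $\nu^0$--$\rho^0$ interaction at the second exposure) and $b_Nc_N$ (the $\mu^0$--$\pi^0$ interaction at the first exposure, with $b_N$ the rate of $\muhat$ from \eqref{def:bN}). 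This gives $R_{\mathrm{bias}}=O_p(a_Nd_N+b_Nc_N)$, so the assumed product-rate condition $a_Nd_N+b_Nc_N=o(N^{-1/2})$ forces $\sqrt N R_{\mathrm{bias}}=o_p(1)$. Combining the three pieces and applying Slutsky's theorem then completes the proof. The main technical subtlety to watch is that $\muhat$ is itself the output of the nested doubly robust procedure of Algorithm \ref{alg:DR-mu}, but since \eqref{def:bN} supplies its rate $b_N$ as a black box, no additional argument beyond the generic bias expansion is required here.
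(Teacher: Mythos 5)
Your proposal is correct and follows essentially the same route as the paper's proof: the paper likewise splits each fold into the oracle average plus a remainder $W_{k,2}$, bounds the conditional mean of $W_{k,2}$ by $O_p(a_Nd_N+b_Nc_N)$ via the bias computation of Lemma \ref{lemma:gen} and its conditional variance by $O_p\{(a_N^2+b_N^2+c_N^2+d_N^2)/N\}$ via a term-by-term linear expansion of $\psi(\bZ_i;\etahat_{-k})-\psi(\bZ_i;\eta^0)$, then applies Chebyshev and a CLT for the bounded i.i.d.\ scores (Lyapunov in the paper, Lindeberg--L\'evy in yours; both valid). One small correction: the sharp second-moment expansion is not in the proof of Lemma \ref{lemma:gen} (which only uses a crude $O_p(1/N)$ variance bound from boundedness) but is carried out fresh in this lemma's proof, together with tower-rule conversions such as $E[T_1\{\nuhat_{-k}-\nu^0\}^2]\leq M\,E[T_1T_2\{\nuhat_{-k}-\nu^0\}^2]$; your plan performs that expansion anyway, so nothing essential is missing.
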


\subsection{Proof of the auxiliary lemmas}

\begin{proof}[Proof of Lemma \ref{lemma:gen}]
By Lemma S.8 of \cite{bradic2024high},
\begin{align*}
\theta=E\{\psi(\bZ_i;\eta^0)\}.
\end{align*}
Hence, we have
\begin{align}\label{eq:rep_Wk}
\thetahat^{(k)}-\theta=|I_k|^{-1}\sum_{i\in I_k}\psi(\bZ_i;\etahat_{-k})-\theta=W_{k,1}+W_{k,2},
\end{align}
where
\begin{align}
W_{k,1}&=|I_k|^{-1}\sum_{i\in I_k}\psi(\bZ_i;\eta^0)-E\{\psi(\bZ_i;\eta^0)\},\\
W_{k,2}&=|I_k|^{-1}\sum_{i\in I_k}\left\{\psi(\bZ_i;\etahat_{-k})-\psi(\bZ_i;\eta^0)\right\}.
\end{align}
By Lemma S.10 of \cite{bradic2024high},
\begin{align}
W_{k,1}=O_p\left(N^{-1/2}\right).
\end{align}
In addition, since $\psi(\bZ_i;\etahat_{-k})-\psi(\bZ_i;\eta^0)$, $i\in I_k$, are independent and identically distributed conditional on $\S_{-k}$, 
\begin{align*}
&E_{\S_k}(W_{k,2})=E_{\S_k}\left\{\psi(\bZ_i;\etahat_{-k})-\psi(\bZ_i;\eta^0)\right\}\\
&\quad=E_{\S_k}\left[\frac{T_{1i}T_{2i}\{Y_i-\nuhat_{-k}(\bSbar_{2i})\}}{\pihat_{-k}(\bS_{1i})\rhohat_{-k}(\bSbar_{2i})}-\frac{T_{1i}T_{2i}\{Y_i-\nu^0(\bSbar_{2i})\}}{\pi^0(\bS_{1i})\rho^0(\bSbar_{2i})}+\frac{T_{1i}\{\nuhat_{-k}(\bSbar_{2i})-\muhat_{-k}(\bSbar_{1i})\}}{\pihat_{-k}(\bS_{1i})}\right]\\
&\quad\qquad+E_{\S_k}\left[\muhat_{-k}(\bS_{1i})-\frac{T_{1i}\{\nu^0(\bSbar_{2i})-\mu^0(\bSbar_{1i})\}}{\pi^0(\bS_{1i})}-\mu^0(\bS_{1i})\right]\\
&\quad\overset{(i)}{=}E_{\S_k}\left[\frac{T_{1i}\rho^0(\bSbar_{2i})\{\nu^0(\bSbar_{2i})-\nuhat_{-k}(\bSbar_{2i})\}}{\pihat_{-k}(\bS_{1i})\rhohat_{-k}(\bSbar_{2i})}+\frac{T_{1i}\{\nuhat_{-k}(\bSbar_{2i})-\muhat_{-k}(\bSbar_{1i})\}}{\pihat_{-k}(\bS_{1i})}\right]\\
&\quad\qquad+E_{\S_k}\left[\muhat_{-k}(\bS_{1i})-\frac{T_{1i}\{\nu^0(\bSbar_{2i})-\mu^0(\bSbar_{1i})\}}{\pi^0(\bS_{1i})}-\mu^0(\bS_{1i})\right]\\
&\quad\overset{(ii)}{=}R_{k,1}+R_{k,2}+R_{k,3},
\end{align*}
where (i) holds by the tower rule under Assumption \ref{cond:seq-ign}; (ii) holds through rearranging and $R_{k,1},R_{k,2},R_{k,3}$ are defined as
\begin{align*}
R_{k,1}&:=E_{\S_k}\left[\left\{1-\frac{T_{1i}}{\pihat_{-k}(\bS_{1i})}\right\}\left\{\muhat_{-k}(\bS_{1i})-\mu^0(\bS_{1i})\right\}\right],\\
R_{k,2}&:=E_{\S_k}\left[\frac{T_{1i}}{\pihat_{-k}(\bS_{1i})}\left\{1-\frac{\rho^0(\bSbar_{2i})}{\rhohat_{-k}(\bSbar_{2i})}\right\}\left\{\nuhat_{-k}(\bSbar_{2i})-\nu^0(\bSbar_{2i})\right\}\right],\\
R_{k,3}&:=E_{\S_k}\left[\left\{\frac{T_{1i}}{\pihat_{-k}(\bS_{1i})}-\frac{T_{1i}}{\pi^0(\bS_{1i})}\right\}\left\{\nu^0(\bSbar_{2i})-\mu^0(\bS_{1i})\right\}\right].
\end{align*}
By the tower rule, we have
\begin{align*}
R_{k,1}&=E_{\S_k}\left[\frac{T_{1i}}{\pihat_{-k}(\bS_{1i})\pi^0(\bS_{1i})}\left\{\pihat_{-k}(\bS_{1i})-\pi^0(\bS_{1i})\right\}\left\{\muhat_{-k}(\bS_{1i})-\mu^0(\bS_{1i})\right\}\right]\\
&\overset{(i)}{\leq}M^2\left(E_{\S_k}\left\{\pihat_{-k}(\bS_{1i})-\pi^0(\bS_{1i})\right\}^2E_{\S_k}\left[T_{1i}\left\{\muhat_{-k}(\bS_{1i})-\mu^0(\bS_{1i})\right\}^2\right]\right)^{1/2}\\
&=O_p(b_Nc_N),
\end{align*}
where (i) holds by the Cauchy-Schwarz inequality and under Assumption \ref{cond:seq-ign}. Similarly, by the tower rule,
\begin{align*}
R_{k,2}&=E_{\S_k}\left[\frac{T_{1i}T_{2i}}{\pihat_{-k}(\bS_{1i})\rhohat_{-k}(\bSbar_{2i})\rho^0(\bSbar_{2i})}\left\{\rhohat_{-k}(\bSbar_{2i})-\rho^0(\bSbar_{2i})\right\}\left\{\nuhat_{-k}(\bSbar_{2i})-\nu^0(\bSbar_{2i})\right\}\right]\\
&\overset{(i)}{\leq}M^3\left(E_{\S_k}\left[T_{1i}\left\{\rhohat_{-k}(\bSbar_{2i})-\rho^0(\bSbar_{2i})\right\}^2\right]E_{\S_k}\left[T_{1i}T_{2i}\left\{\nuhat_{-k}(\bSbar_{2i})-\nu^0(\bSbar_{2i})\right\}^2\right]\right)^{1/2}\\
&=O_p(a_Nd_N),
\end{align*}
where (i) holds by the Cauchy-Schwarz inequality and under Assumption \ref{cond:seq-ign}. In addition, we notice that
\begin{align*}
&E\{\nu^0(\bSbar_2)\mid\bS_1,T_1=1\}\overset{(i)}{=}E[E\{Y(1,1)\mid\bSbar_2,T_1=T_2=1\}\mid\bS_1,T_1=1]\\
&\qquad\overset{(ii)}{=}E[E\{Y(1,1)\mid\bSbar_2,T_1=1\}\mid\bS_1,T_1=1]\overset{(iii)}{=}E\{Y(1,1)\mid\bS_1,T_1=1\}\\
&\qquad\overset{(iv)}{=}E\{Y(1,1)\mid\bS_1\}\overset{(v)}{=}\mu^0(\bS_1),
\end{align*}
where (i) and (v) hold by the definitions of $\nu^0(\cdot)$ and $\mu^0(\cdot)$; (ii) and (iv) holds under Assumption \ref{cond:seq-ign}; (iii) holds by the tower rule. Therefore, using the tower rule, we also have
\begin{align*}
R_{k,3}&=E_{\S_k}\left(E_{\S_k}\left[\left\{\frac{1}{\pihat_{-k}(\bS_{1i})}-\frac{1}{\pi^0(\bS_{1i})}\right\}\left\{\nu^0(\bSbar_{2i})-\mu^0(\bS_{1i})\right\}\mid\bS_{1i},T_{1i}=1\right]\pi^0(\bS_{1i})\right)\\
&=E_{\S_k}\left[\left\{\frac{1}{\pihat_{-k}(\bS_{1i})}-\frac{1}{\pi^0(\bS_{1i})}\right\}\left\{\mu^0(\bS_{1i})-\mu^0(\bS_{1i})\right\}\pi^0(\bS_{1i})\right]=0.
\end{align*}
To sum up, we have
\begin{align}\label{bound:bias}
E_{\S_k}(W_{k,2})=R_{k,1}+R_{k,2}+R_{k,3}=O_p(a_Nd_N+b_Nc_N).
\end{align}
On the other hand, we note that
\begin{align*}
&\|\psi(\bZ_i;\etahat_{-k})\|_{\infty,P_{\S_k}}\\
&\quad=\left\|\frac{T_{1i}T_{2i}\{Y_i-\nuhat_{-k}(\bSbar_{2i})\}}{\pihat_{-k}(\bS_{1i})\rhohat_{-k}(\bSbar_{2i})}+\frac{T_{1i}\{\nuhat_{-k}(\bSbar_{2i})-\muhat_{-k}(\bSbar_{1i})\}}{\pihat_{-k}(\bS_{1i})}+\muhat_{-k}(\bS_{1i})\right\|_{\infty,P_{\S_k}}=O_p(1),
\end{align*}
and
\begin{align}
\|\psi(\bZ_i;\eta^0)\|_\infty&=\left|\frac{T_{1i}T_{2i}\{Y_i-\nu^0(\bSbar_{2i})\}}{\pi^0(\bS_{1i})\rho^0(\bSbar_{2i})}+\frac{T_{1i}\{\nu^0(\bSbar_{2i})-\mu^0(\bSbar_{1i})\}}{\pi^0(\bS_{1i})}+\mu^0(\bS_{1i})\right|\nonumber\\
&=O(1)\label{bound:psi}
\end{align}
under Assumptions \ref{cond:seq-ign} and \ref{cond:bound}. Hence,
\begin{align*}
&\Var_{\S_k}(W_{k,2})=|I_k|^{-1}\Var_{\S_k}\left\{\psi(\bZ_i;\etahat_{-k})-\psi(\bZ_i;\eta^0)\right\}\\
&\qquad\leq|I_k|^{-1}E_{\S_k}\left\{\psi(\bZ_i;\etahat_{-k})-\psi(\bZ_i;\eta^0)\right\}^2=O_p(|I_k|^{-1})=O_p\left(\frac{1}{N}\right).
\end{align*}
By Chebyshev's inequality,
\begin{align*}
W_{k,2}=O_p\left(a_Nd_N+b_Nc_N+N^{-1/2}\right),
\end{align*}
and hence
\begin{align*}
\thetahat^{(k)}-\theta=\Delta_{1,k}+\Delta_{2,k}=O_p\left(a_Nd_N+b_Nc_N+N^{-1/2}\right).
\end{align*}
Finally, for fixed $K>0$, we conclude that
\begin{align*}
\thetahat-\theta=K^{-1}\sum_{k=1}^K(\thetahat^{(k)}-\theta)=O_p\left(a_Nd_N+b_Nc_N+N^{-1/2}\right).
\end{align*}
\end{proof}

\begin{proof}[Proof of Lemma \ref{lemma:normal}]
Observe that
\begin{align*}
\psi(\bZ_i;\etahat_{-k})-\psi(\bZ_i;\eta^0)=\sum_{j=1}^7\bar R_{k,j,i},
\end{align*}
where
\begin{align*}
\bar R_{k,1,i}&:=-\frac{T_{1i}T_{2i}Y_i\{\rhohat_{-k}(\bSbar_{2i})-\rho^0(\bSbar_{2i})\}}{\pihat_{-k}(\bS_{1i})\rhohat_{-k}(\bSbar_{2i})\rho^0(\bSbar_{2i})},\\
\bar R_{k,2,i}&:=-\frac{T_{1i}T_{2i}Y_i\{\pihat_{-k}(\bS_{1i})-\pi^0(\bS_{1i})\}}{\rho^0(\bSbar_{2i})\pihat_{-k}(\bS_{1i})\pi^0(\bSbar_{2i})},\\
\bar R_{k,3,i}&:=\left\{1-\frac{T_{2i}}{\rhohat_{-k}(\bSbar_{2i})}\right\}\frac{T_{1i}\{\nuhat_{-k}(\bSbar_{2i})-\nu^0(\bSbar_{2i})\}}{\pihat_{-k}(\bS_{1i})},\\
\bar R_{k,4,i}&:=-\left\{1-\frac{T_{2i}}{\rhohat_{-k}(\bSbar_{2i})}\right\}\frac{T_{1i}\nu^0(\bSbar_{2i})\{\pihat_{-k}(\bS_{1i})-\pi^0(\bS_{1i})\}}{\pihat_{-k}(\bS_{1i})\pi^0(\bS_{1i})},\\
\bar R_{k,5,i}&:=\frac{T_{1i}T_{2i}\nu^0(\bSbar_{2i})\{\rhohat_{-k}(\bSbar_{2i})-\rho^0(\bSbar_{2i})\}}{\pi^0(\bS_{1i})\rho^0(\bSbar_{2i})\rhohat_{-k}(\bSbar_{2i})},\\
\bar R_{k,6,i}&:=\left\{1-\frac{T_{1i}}{\pihat_{-k}(\bS_{1i})}\right\}\{\muhat_{-k}(\bS_{1i})-\mu^0(\bS_{1i})\},\\
\bar R_{k,7,i}&:=\frac{T_{1i}\mu^0(\bS_{1i})\{\pihat_{-k}(\bS_{1i})-\pi^0(\bS_{1i})\}}{\pi^0(\bS_{1i})\pihat_{-k}(\bS_{1i})}.
\end{align*}
By the tower rule,
\begin{align*}
&E_{S_k}\left[T_{1i}T_{2i}\{\nuhat_{-k}(\bSbar_{2i})-\nu^0(\bSbar_{2i})\}^2\right]=E_{S_k}\left[T_{1i}\rho^0(\bSbar_{2i})\{\nuhat_{-k}(\bSbar_{2i})-\nu^0(\bSbar_{2i})\}^2\right]\\
&\qquad\geq M^{-1}E_{S_k}\left[T_{1i}\{\nuhat_{-k}(\bSbar_{2i})-\nu^0(\bSbar_{2i})\}^2\right],
\end{align*}
under Assumption \ref{cond:seq-ign}. Similarly, we also have
\begin{align*}
&E_{S_k}\left[T_{1i}\{\muhat_{-k}(\bS_{1i})-\mu^0(\bS_{1i})\}^2\right]=E_{S_k}\left[T_{1i}\pi^0(\bS_{1i})\{\muhat_{-k}(\bS_{1i})-\mu^0(\bS_{1i})\}^2\right]\\
&\qquad\geq M^{-1}E_{S_k}\left[\{\muhat_{-k}(\bS_{1i})-\mu^0(\bS_{1i})\}^2\right].
\end{align*}
Therefore,
\begin{align*}
E_{S_k}\left[T_{1i}\{\nuhat_{-k}(\bSbar_{2i})-\nu^0(\bSbar_{2i})\}^2\right]&\leq ME_{S_k}\left[T_{1i}T_{2i}\{\nuhat_{-k}(\bSbar_{2i})-\nu^0(\bSbar_{2i})\}^2\right]=O_p(a_N^2),\\
E_{S_k}\left[\{\muhat_{-k}(\bS_{1i})-\mu^0(\bS_{1i})\}^2\right]&\leq ME_{S_k}\left[T_{1i}\{\muhat_{-k}(\bS_{1i})-\mu^0(\bS_{1i})\}^2\right]=O_p(b_N^2).
\end{align*}
When \eqref{def:bN} holds, under Assumptions \ref{cond:seq-ign}, \ref{cond:converge-rate} and \ref{cond:bound}, we have
\begin{align*}
E_{\S_k}(\bar R_{k,1,i}^2)&=O_p\left(E_{\S_k}\left[T_{1i}\{\rhohat_{-k}(\bSbar_{2i})-\rho^0(\bSbar_{2i})\}^2\right]\right)=O_p(d_N^2),\\
E_{\S_k}(\bar R_{k,2,i}^2)&=O_p\left(E_{\S_k}\left[\{\pihat_{-k}(\bS_{1i})-\pi^0(\bS_{1i})\}^2\right]\right)=O_p(c_N^2),\\
E_{\S_k}(\bar R_{k,3,i}^2)&=O_p\left(E_{\S_k}\left[T_{1i}\{\nuhat_{-k}(\bSbar_{2i})-\nu^0(\bSbar_{2i})\}^2\right]\right)=O_p(a_N^2),\\
E_{\S_k}(\bar R_{k,4,i}^2)&=O_p\left(E_{\S_k}\left[\{\pihat_{-k}(\bS_{1i})-\pi^0(\bS_{1i})\}^2\right]\right)=O_p(c_N^2),\\
E_{\S_k}(\bar R_{k,5,i}^2)&=O_p\left(E_{\S_k}\left[T_{1i}\{\rhohat_{-k}(\bSbar_{2i})-\rho^0(\bSbar_{2i})\}^2\right]\right)=O_p(d_N^2),\\
E_{\S_k}(\bar R_{k,6,i}^2)&=O_p\left(E_{\S_k}\left[\{\muhat_{-k}(\bS_{1i})-\mu^0(\bS_{1i})\}^2\right]\right)=O_p(b_N^2),\\
E_{\S_k}(\bar R_{k,7,i}^2)&=O_p\left(E_{\S_k}\left[\{\pihat_{-k}(\bS_{1i})-\pi^0(\bS_{1i})\}^2\right]\right)=O_p(c_N^2).
\end{align*}
Recall the representation \eqref{eq:rep_Wk}, now we have
\begin{align}
&\Var_{\S_k}(W_{k,2})\leq|I_k|^{-1}E_{\S_k}\left\{\psi(\bZ_i;\etahat_{-k})-\psi(\bZ_i;\eta^0)\right\}^2\nonumber\\
&\qquad\leq7|I_k|^{-1}\sum_{j=1}^7E_{\S_k}(\bar R_{k,j,i}^2)=O_p\left(\frac{a_N^2+b_N^2+c_N^2+d_N^2}{N}\right).\label{bound:psi-diff}
\end{align}
Together with \eqref{bound:bias}, by Chebyshev's inequality,
\begin{align*}
W_{k,2}=O_p\left(a_Nd_N+b_Nc_N+\frac{a_N+b_N+c_N+d_N}{\sqrt N}\right)=o_p\left(N^{-1/2}\right)
\end{align*}
when $a_Nd_N+b_Nc_N=o_p(N^{-1/2})$ and $a_N+b_N+c_N+d_N=o_p(1)$. Hence,
\begin{align*}
&\thetahat-\theta=K^{-1}\sum_{k=1}^K(\thetahat^{(k)}-\theta)=K^{-1}\sum_{k=1}^K(W_{k,1}+W_{k,2})\\
&\qquad=N^{-1}\sum_{i=1}^N\psi(\bZ_i;\eta^0)-\theta+o_p\left(N^{-1/2}\right).
\end{align*}
Since \eqref{bound:psi} holds and $\sigma>c>0$, the Lyapunov's condition 
\begin{align*}
\frac{E|\psi(\bZ;\eta^0)-\theta|^{2+\delta}}{N^{\delta/2}\sigma^{2+\delta}}=o(1)
\end{align*}
holds with any $\delta>0$. By Lyapunov's central limit theorem,
\begin{align*}
\sqrt N \sigma^{-1}\left\{N^{-1}\sum_{i=1}^N\psi(\bZ_i;\eta^0)-\theta\right\}\to N(0,1),
\end{align*}
and hence
\begin{align*}
\sqrt N \sigma^{-1}(\thetahat-\theta)=\sqrt N \sigma^{-1}\left\{N^{-1}\sum_{i=1}^N\psi(\bZ_i;\eta^0)-\theta\right\}+o_p(1)\to N(0,1).
\end{align*}
\end{proof}

\subsection{Proof of the main results}

In the following, we denote
\begin{align*}
Y^\#&:=\nu^0(\bSbar_2)+\frac{T_2\{Y(1,1)-\nu^0(\bSbar_2)\}}{\rho^0(\bSbar_2)},\\
\Yhat&:=\nuhat^{(2)}(\bSbar_2)+\frac{T_2\{Y(1,1)-\nuhat^{(2)}(\bSbar_2)\}}{\rhohat^{(2)}(\bSbar_2)}.
\end{align*}
Consider the following representation:
\begin{align*}
\Yhat-Y^\#=\Delta_1+\Delta_2,
\end{align*}
where $\Delta_1=\Delta_{1,1}+\Delta_{1,2}$ and
\begin{align*}
\Delta_{1,1}&:=\left\{1-\frac{T_2}{\rho^0(\bSbar_2)}\right\}\{\nuhat^{(2)}(\bSbar_2)-\nu^0(\bSbar_2)\},\\
\Delta_{1,2}&:=\left\{\frac{T_2}{\rhohat^{(2)}(\bSbar_2)}-\frac{T_2}{\rho^0(\bSbar_2)}\right\}\{Y(1,1)-\nu^0(\bSbar_2)\},\\
\Delta_2&:=\left\{\frac{T_2}{\rhohat^{(2)}(\bSbar_2)}-\frac{T_2}{\rho^0(\bSbar_2)}\right\}\{\nuhat^{(2)}(\bSbar_2)-\nu^0(\bSbar_2)\}.
\end{align*}

\begin{proof}[Proof of Lemma \ref{lemma:imput-err}]
By the law of total expectation,
\begin{align}
&E_\bZ(T_1\Delta_{1,1}\mid\bS_1)\nonumber\\
&\qquad=E_\bZ\left(E_\bZ\left[T_1\left\{1-\frac{T_2}{\rho^0(\bSbar_2)}\right\}\{\nuhat^{(2)}(\bSbar_2)-\nu^0(\bSbar_2)\}\mid\bSbar_2,T_1\right]\mid\bS_1\right)\nonumber\\
&\qquad=E_\bZ\left(E_\bZ\left[\left\{1-\frac{T_2}{\rho^0(\bSbar_2)}\right\}\mid\bSbar_2,T_1=1\right]\{\nuhat^{(2)}(\bSbar_2)-\nu^0(\bSbar_2)\}E_\bZ (T_1\mid\bSbar_2)\mid\bS_1\right)\nonumber\\
&\qquad=E_\bZ\left[\left\{1-\frac{\rho^0(\bSbar_2)}{\rho^0(\bSbar_2)}\right\}\{\nuhat^{(2)}(\bSbar_2)-\nu^0(\bSbar_2)\}E_\bZ (T_1\mid\bSbar_2)\mid\bS_1\right]=0.\label{eq:ETR1}
\end{align}
Similarly, let $\xi:=Y(1,1)-\nu^0(\bSbar_2)$, then
\begin{align}
&E_\bZ(T_1\Delta_{1,2}\mid\bS_1)\nonumber\\
&\qquad=E_\bZ\left(E_\bZ\left[T_1\left\{\frac{T_2}{\rhohat^{(2)}(\bSbar_2)}-\frac{T_2}{\rho^0(\bSbar_2)}\right\}\{Y(1,1)-\nu^0(\bSbar_2)\}\mid\bSbar_2,T_1\right]\mid\bS_1\right)\nonumber\\
&\qquad=E_\bZ\left(\left\{\frac{1}{\rhohat^{(2)}(\bSbar_2)}-\frac{1}{\rho^0(\bSbar_2)}\right\}\{\nu^0(\bSbar_2)-\nu^0(\bSbar_2)\}E_\bZ (T_1T_2\mid\bSbar_2)\mid\bS_1\right)=0.\label{eq:ETR2}
\end{align}
By \eqref{eq:ETR1} and \eqref{eq:ETR2},
\begin{align*}
E_\bZ(T_1\Delta_1\mid\bS_1)=E_\bZ (T_1\Delta_{1,1}\mid\bS_1)+E_\bZ (T_1\Delta_{1,2}\mid\bS_1)=0.
\end{align*}
Note that
\begin{align*}
E_\bZ(T_1\Delta_1\mid\bS_1)&=E_\bZ (T_1\Delta_1\mid\bS_1,T_1=1)P(T_1=1\mid\bS_1)\\
&\qquad+E_\bZ(T_1\Delta_1\mid\bS_1,T_1=0)P(T_1=0\mid\bS_1)\\
&=E_\bZ(T_1\Delta_1\mid\bS_1,T_1=1)\pi(\bS_1)
\end{align*}
and $\|1/\pi^0\|_\infty\leq M$ under Assumption \ref{cond:seq-ign}, hence
\begin{align*}
E_\bZ(\Delta_1\mid\bS_1,T_1=1)=0\;\;\text{almost surely}.
\end{align*}
Additionally,
\begin{align*}
&E_\bZ(T_1\Delta_2^2)\\
&\quad\leq\|1/\rho^0\|_\infty^2\|1/\rhohat^{(2)}\|_{\infty,P_\bZ}^2E\left[T_1T_2\left\{\rhohat^{(2)}(\bSbar_2)-\rho^0(\bSbar_2)\right\}^2\left\{\nuhat^{(2)}(\bSbar_2)-\nu^0(\bSbar_2)\right\}^2\right]\\
&\quad=O_p(\delta_N^4),
\end{align*}
under Assumptions \ref{cond:seq-ign}, \ref{cond:converge-rate}, and \ref{cond:bound}. Note that
\begin{align*}
E_\bZ(T_1\Delta_2^2)&=E_\bZ (T_1\Delta_2^2\mid T_1=1)P(T_1=1)+E_\bZ (T_1\Delta_2^2\mid T_1=0)P(T_1=0)\\
&=E_\bZ(T_1\Delta_2^2\mid T_1=1)P(T_1=1)
\end{align*}
and $P(T_1=1)=E\{\pi(\bS_1)\}\geq M^{-1}>0$ under Assumption \ref{cond:seq-ign}, hence
\begin{align*}
E_\bZ(T_1\Delta_2^2\mid T_1=1)=O_p(\delta_N^4).
\end{align*}
\end{proof}

\begin{proof}[Proof of Theorem \ref{cor:mu-DR}]
We first show that
\begin{align*}
\mu^0(\bs_1)=E(Y^\#\mid \bS_1=\bs_1,T_1=1),\;\;\forall\bs_1\in\R^{d_1}.
\end{align*}
Indeed,
\begin{align*}
&E(Y^\#\mid\bS_1,T_1=1)=E\left[\nu^0(\bSbar_2)+\frac{T_2\{Y(1,1)-\nu^0(\bSbar_2)\}}{\rho^0(\bSbar_2)}\mid\bS_1,T_1=1\right]\\
&\qquad\overset{(i)}{=}E\left(E\left[\nu^0(\bSbar_2)+\frac{T_2\{Y(1,1)-\nu^0(\bSbar_2)\}}{\rho^0(\bSbar_2)}\mid\bSbar_2,T_1=1\right]\mid\bS_1,T_1=1\right)\\
&\qquad\overset{(ii)}{=}E\left[\nu^0(\bSbar_2)+\frac{E(T_2\mid\bSbar_2,T_1=1)E\{Y(1,1)-\nu^0(\bSbar_2)\mid\bSbar_2,T_1=1\}}{\rho^0(\bSbar_2)}\mid\bS_1,T_1=1\right]\\
&\qquad\overset{(iii)}{=}\mu^0(\bS_1),
\end{align*}
where (i) holds by the law of total expectation; (ii) holds under Assumption \ref{cond:seq-ign}; (iii) holds since $\nu^0(\bSbar_2)=E\{Y(1,1)\mid\bSbar_2,T_1=1\}$ and $E\{\nu^0(\bSbar_2)\mid\bS_1,T_1=1\}=E\{E[Y(1,1)\mid\bSbar_2,T_1=1\}\mid\bS_1,T_1=1]=\mu^0(\bS_1)$.

In addition, observe that
\begin{align*}
\|T_1Y^\#\|_\infty&\overset{(i)}{=}\left\|T_1\nu^0(\bSbar_2)+\frac{T_1T_2\{Y-\nu^0(\bSbar_2)\}}{\rho^0(\bSbar_2)}\right\|_\infty\\
&\leq\|\nu^0\|_\infty+\|1/\rho^0\|_\infty\left(\|Y\|_\infty+\|\nu^0\|_\infty\right)\overset{(ii)}=O(1),
\end{align*}
where (i) holds under Assumption \ref{cond:seq-ign}; (ii) holds under Assumptions \ref{cond:seq-ign} and \ref{cond:bound}. Additionally,
\begin{align*}
\|T_1\Delta_1\|_{\infty,P_\bZ}&\overset{(i)}{=}\biggr\|\left\{T_1-\frac{T_1T_2}{\rho^0(\bSbar_2)}\right\}\{\nuhat^{(2)}(\bSbar_2)-\nu^0(\bSbar_2)\}\\
&\qquad+\left\{\frac{T_1T_2}{\rhohat^{(2)}(\bSbar_2)}-\frac{T_1T_2}{\rho^0(\bSbar_2)}\right\}\{Y-\nu^0(\bSbar_2)\}\biggr\|_{\infty,P_\bZ}\\
&\leq(1+\|1/\rho^0\|_\infty)(\|\nuhat^{(2)}\|_{\infty,P_\bZ}+\|\nu^0\|_\infty)\\
&\qquad+(\|1/\rhohat^{(2)}\|_{\infty,P_\bZ}+\|1/\rho^0\|_\infty)\left(\|Y\|_\infty+\|\nu^0\|_\infty\right)\\
&\overset{(ii)}{=}O_p(1),
\end{align*}
where (i) holds under Assumption \ref{cond:seq-ign}; (ii) holds under Assumptions \ref{cond:seq-ign}, \ref{cond:bound}, and the fact that $\|\nu^0\|_\infty\leq M$ as $\|Y\|_\infty\leq M$.

Together with Lemma \ref{lemma:imput-err}, we have verified the required conditions to apply Theorem \ref{thm:imp-MLP}. Hence, it follows that
\begin{align*}
E\left[T_1\{\muhat^1(\bS_1)-\mu^0(\bS_1)\}^2\right]=O_p\left(n^{-\frac{\beta_\mu}{\beta_\mu+q_\mu}}\log^8 n+d_1n^{-\frac{2\beta_\mu+q_\mu}{2\beta_\mu+2q_\mu}}\log^5n+r_N^2+\delta_N^4\right).
\end{align*}
Repeating the same procedure above, we also have the same consistency rate for $\muhat^2(\bS_1)$. Therefore, the average $\muhat$ satisfies \eqref{rate:muhat}.
\end{proof}

\begin{proof}[Proof of Theorem \ref{thm:normal-DR}]
By Theorem \ref{cor:mu-DR} and note that $n\asymp N$, we know that \eqref{def:bN} is satisfied with $b_N^2=\bar b_N^2+\delta_N^4$. By Lemma \ref{lemma:gen}, we have
\begin{align*}
\thetahat-\theta=O_p\left(a_Nd_N+b_Nc_N+N^{-1/2}\right)=O_p\left(a_Nd_N+\bar b_Nc_N+\delta_N^2c_N+N^{-1/2}\right).
\end{align*}
Now, we additionally assume that $a_N,c_N,d_N,r_N,\delta_N=o(1)$, $a_Nd_N+\bar b_Nc_N+\delta_N^2c_N=o(N^{-1/2})$ and $\sigma^2>c>0$. Then, we also have $b_N=o(1)$ and $b_Nc_N=o(N^{-1/2})$. Hence, Lemma \ref{lemma:normal} implies that, as $N\to\infty$, in distribution,
\begin{align}\label{result:rN-normal}
\thetahat-\theta=O_p\left(N^{-1/2}\right)\;\;\text{and}\;\;\sqrt N \sigma^{-1}(\thetahat-\theta)\to N(0,1).
\end{align}

We now show that $\sigmahat^2=\sigma^2\{1+o_p(1)\}$. Firstly, by \eqref{bound:psi}, we have $\sigma^2=\Var\{\psi(\bZ;\eta^0)\}=O(1)$. Together with the assumption that $\sigma^2>c>0$, we have $\sigma\asymp1$. By \eqref{bound:psi-diff}, we have
\begin{align*}
E_{\S_k}\left\{\psi(\bZ_i;\etahat_{-k})-\psi(\bZ_i;\eta^0)\right\}^2=O_p(a_N^2+b_N^2+c_N^2+d_N^2)=o_p(1).
\end{align*}
Hence,
\begin{align*}
E_{\S_k}\left[|I_k|^{-1}\sum_{i\in I_k}\{\psi(\bZ_i;\etahat_{-k})-\psi(\bZ_i;\eta^0)\}^2\right]=E_{\S_k}\left\{\psi(\bZ_i;\etahat_{-k})-\psi(\bZ_i;\eta^0)\right\}^2=o_p(1).
\end{align*}
By Markov's inequality,
\begin{align*}
|I_k|^{-1}\sum_{i\in I_k}\{\psi(\bZ_i;\etahat_{-k})-\psi(\bZ_i;\eta^0)\}^2=o_p(1).
\end{align*}
By Lemma S.14 of \cite{bradic2024high}, together with \eqref{result:rN-normal} and note that $E\{\psi(\bZ;\eta^0)-\theta\}^4=O(1)$ using the fact \eqref{bound:psi}, we conclude that
\begin{align*}
\sigmahat^2=\sigma^2\{1+o_p(1)\}.
\end{align*}
\end{proof}


\end{document}